\newcommand{\ie}{\emph{i.e.,}\xspace}
\newcommand{\eg}{\emph{e.g.,}\xspace}
\newcommand{\etal}{\emph{et~al.}\xspace} 
\newcommand{\aka}{\emph{a.k.a.,}\xspace}
\newcommand{\de}[1]{\textcolor{black}{ #1}} 
\newcommand{\bl}[1]{\textcolor{black}{ #1}} 
\newcommand{\cyan}[1]{\textcolor{black}{ #1}} 
\theoremstyle{plain}
\newtheorem{theorem}{Theorem}
\newtheorem{corollary}{Corollary}
\theoremstyle{definition}
\newtheorem{definition}{Definition}
\theoremstyle{remark}
\def\BibTeX{{\rm B\kern-.05em{\sc i\kern-.025em b}\kern-.08em
    T\kern-.1667em\lower.7ex\hbox{E}\kern-.125emX}}
\begin{document}

\title{\bl{Generalized Regularized Evidential Deep Learning Models: Theory and Comprehensive Evaluation}}

\author{%
    Deep Shankar Pandey~\orcidlink{0009-0006-1404-3716
}$^{1}$, Hyomin Choi~\orcidlink{0000-0003-2458-893X}$^{2}$, and Qi Yu~\orcidlink{0000-0002-0426-5407}$^{1}$%
    \thanks{%
    $^{1}$ Deep Shankar Pandey and Qi Yu are with Rochester Institute of Technology, Rochester, NY, USA.\protect\\
    Email: \{dp7972, qyuvks\}@rit.edu%
    }%
    \thanks{%
    $^{2}$ Hyomin Choi is with AI Lab, InterDigital, CA, USA.\protect\\
    Email: hyomin.choi@interdigital.com%
    }%
    \thanks{%
    $^{2}$ \textbf{This work has been submitted to the IEEE for possible publication. Copyright may be transferred without notice, after which this version may no longer be accessible.}
    }%
}

\markboth{Journal of \LaTeX\ Class Files,~Vol.~18, No.~9, September~2020}%
{Generalized Regularized Evidential Deep Learning Models for Classification}

\maketitle

\begin{abstract}
Evidential deep learning (EDL) models, based on Subjective Logic, introduce a principled and computationally efficient way to make deterministic neural networks uncertainty-aware. The resulting evidential models can quantify fine-grained uncertainty using learned evidence. \bl{However, the Subjective-Logic framework constrains evidence to be non-negative, requiring specific activation functions whose geometric properties can induce activation-dependent learning-freeze behavior—a regime where gradients become extremely small for samples mapped into low-evidence regions. We theoretically characterize this behavior and analyze how different evidential activations influence learning dynamics. Building on this analysis, we design a general family of activation functions and corresponding evidential regularizers that provide an alternative pathway for consistent evidence updates across activation regimes. Extensive experiments on four benchmark classification problems (MNIST, CIFAR-10, CIFAR-100, and Tiny-ImageNet), two few-shot classification problems, and blind face restoration problem empirically validate the developed theory and demonstrate the effectiveness of the proposed generalized regularized evidential models.}
\end{abstract}

\begin{IEEEkeywords}
Evidential Deep Learning, Fine-Grained Uncertainty Quantification, Subjective Logic, Zero Evidence Region
\end{IEEEkeywords}

\section{Introduction}

With recent growth in computational capabilities, availability of large-scale data, and algorithmic improvements, Deep Learning (DL) models have found great success in many real-world applications such as speech recognition \cite{kamath2019deep}, machine translation \cite{singh2017machine}, and computer vision \cite{voulodimos2018deep}. 
However, these highly expressive models can easily fit the noise in the training data, leading to overconfident predictions \cite{nguyen2015deep}. This challenge is compounded in specialized domains  (\eg medicine, public safety, and military operations) where labeled data is limited and costly to obtain. Accurate uncertainty quantification is essential for the successful application of DL models in these domains. To this end, DL models have been augmented to become uncertainty-aware~\cite{gal2016dropout,blundell2015weight,pearce2020uncertainty}. However, commonly used extensions require expensive sampling operations \cite{gal2016dropout,blundell2015weight}, which significantly increase the computational costs \cite{lakshminarayanan2017simple}. 

The recently developed evidential deep learning (EDL) models bring together evidential theory~\cite{shafer1976mathematical,josang2016subjective} and deep neural architectures 
that turn a deterministic neural network uncertainty-aware. By leveraging the learned evidence, evidential models are capable of quantifying fine-grained uncertainty that helps to identify the sources of `unknowns'. Furthermore, since only lightweight modifications are introduced to existing DL architectures, additional computational costs remain minimal. Such evidential models have been successfully extended to classification \cite{sensoy2018evidential}, regression \cite{amini2020deep}, meta-learning \cite{Pandey_2022_CVPR}, and open-set recognition \cite{bao2021evidential} settings.

\begin{wrapfigure}{r}{0.26\textwidth}
\vspace{-4mm}
  \begin{center}
    \hspace{-0.7cm}\includegraphics[width=\linewidth]{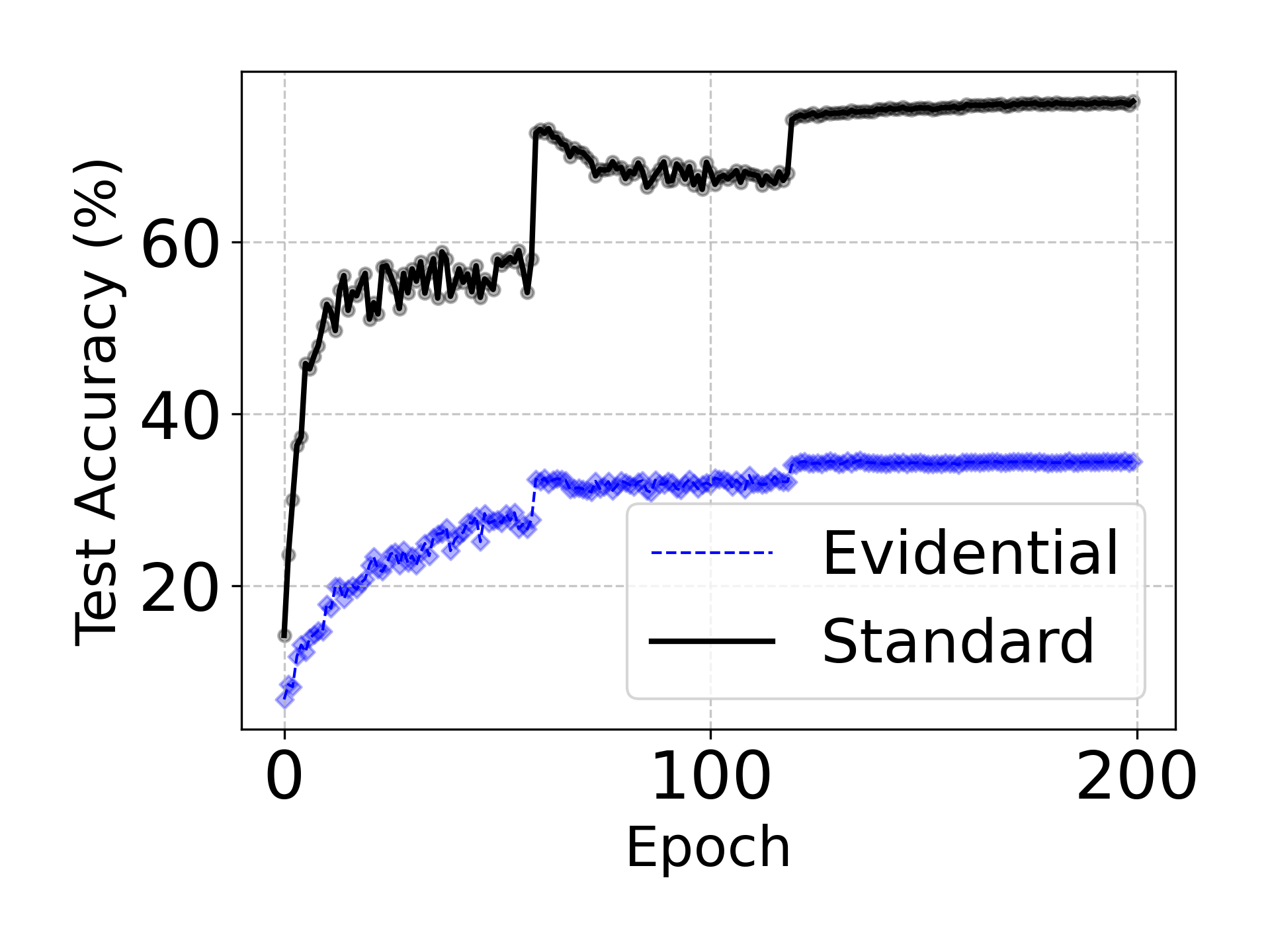}
  \end{center}
  \vspace{-4mm}
  \caption{Cifar-100 Result}
  \label{fig:cifar100MSEComp}
  \vspace{-2mm}
\end{wrapfigure}

Despite the attractive uncertainty quantification capacity, evidential models often achieve competitive predictive performance only on relatively simple learning problems. Their performance can degrade on more complex, large-scale datasets even in standard classification settings. As shown in Figure \ref{fig:cifar100MSEComp}, an evidential model using ReLU activation and an evidential MSE loss \cite{sensoy2018evidential} achieves around $36$\% test accuracy on CIFAR-100, nearly $40$\% lower than a standard softmax model. In addition, many evidential variants are sensitive to architecture or hyperparameter changes, requiring careful tuning for stable performance. The experiment section provides more details on these cases.

\bl{To better understand this phenomenon, we perform a theoretical analysis of evidential learning in the standard classification setting. Our results identify an \textit{activation-induced learning-freeze behavior}, where the interaction between non-negative evidence parameterization and specific activation functions can map samples into ``zero-evidence regions'' (regions of vanishing evidence gradients).}

\bl{Importantly, this behavior arises within the design choices of the EDL framework itself. EDL couples non-negative evidence parameterization with a KL-based prior that \textit{intentionally} promotes high epistemic uncertainty at class boundaries and in regions far from the training distribution—an effect that helps prevent overconfident errors. Activation functions and regularizers determine how evidence accumulates under this framework. Our analysis shows that commonly used non-negative activations can inadvertently create ``zero-evidence'' regions where gradients become extremely small, making evidence updates for nearby samples ineffective.}

\bl{More specifically, EDL models acquire limited new evidence from samples mapped into these low-evidence regions because the corresponding evidence gradients approach zero. Moreover, the learning signal decreases proportionally as samples are mapped closer to the zero-evidence region, irrespective of supervised information.}

\bl{This activation-induced stagnation is illustrated in Figure \ref{fig:intuitiveFailureOfRelu} (with detailed discussion in Section \ref{sec:evModelTrainingLoss}). We analyze several existing evidential variants and observe this behavior consistently across models and settings. Motivated by these insights, we introduce a novel \textbf{G}eneralized \textbf{R}egularized \textbf{E}vidential mo\textbf{d}el (\textbf{GRED}) that employs positive evidence regularization to encourage evidence accumulation even in low-evidence regimes.}

\bl{A preliminary version of this work has been published as a conference paper \cite{pandey2023learn}. Improving on RED, we propose generalized regularized evidential models that mitigate learning stagnation across a family of evidential activations (Section \ref{sec:newAvoidZeroEvReg}). We theoretically show the effectiveness of the correct-evidence regularization (Theorem \ref{th:sovlingzeroevidenceIssue}) and provide expanded analysis of evidential losses (Section \ref{ap:evLossAnalysis}). We further extend GRED to challenging few-shot classification and blind face restoration tasks, and carry out detailed uncertainty analysis, and demonstrate the broader utility of evidential uncertainty.}

\begin{figure}[t] 
\centering
  \includegraphics[width=0.95\linewidth]{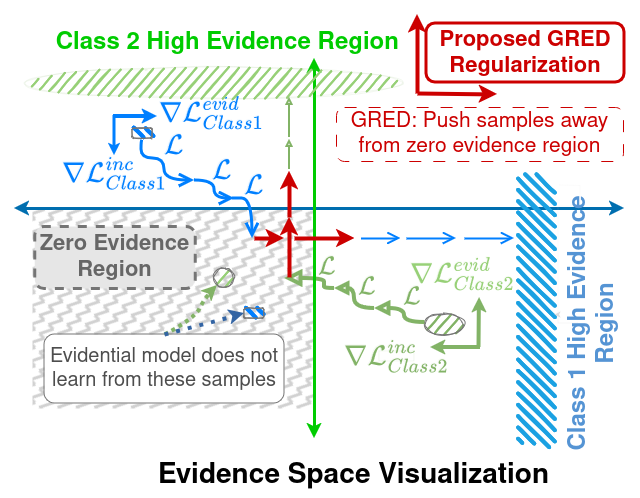}
\caption{\bl{Intuitive visualization of a zero-evidence region for evidential models in the evidence space for binary classification. Samples mapped into such regions have extremely small gradients, leading to limited model update during training. \bf{GRED encourages larger gradients for `zero-evidence' samples, enabling consistent learning across samples.}}}
\label{fig:intuitiveFailureOfRelu}
\end{figure} 
\bl{
Our major contributions can be summarized as follows:
\begin{itemize}[noitemsep,topsep=0pt,leftmargin=*]
    \item We identify an activation-induced learning-freeze behavior in evidential models, wherein data samples mapped to ``zero-evidence'' regions receive vanishing evidence gradients. For these samples, the learning signal decreases proportionally as they are mapped closer to the zero-evidence region in the evidence space.
    \item We theoretically show that evidential models with $\exp$ activation produce stronger gradients near low-evidence regions compared to other activations.
    \item We introduce a generalized evidence regularization strategy that encourages evidence updates across activation regimes, enabling more consistent learning from all samples.
    \item We conduct extensive experiments across multiple settings including 4 benchmark classification tasks, few-shot classification, and blind face restoration, validating the developed theory and demonstrating the effectiveness of our approach.
\end{itemize}
}

\section{Related Works}

\paragraph{Uncertainty Quantification in Deep Learning}
Accurate quantification of predictive uncertainty is essential for development of trustworthy Deep Learning (DL) models. 
To this end, DL models have been augmented to become uncertainty-aware \cite{hullermeier2021aleatoric} using a variety of approaches such as deep ensemble–based approaches \cite{lakshminarayanan2017simple, pearce2020uncertainty}, Bayesian neural networks \cite{mobiny2021dropconnect,gal2016dropout,blundell2015weight}, \cyan{second-order distribution–based approaches \cite{sale2024second, ren2021kronecker,sale2023second}, and credal-set–based approaches \cite{caprio2023credal, caprio2024credal, cozman2000credal, zaffalon2002naive, caprio2023novel, sale2023volume}}.
Deep ensemble techniques \cite{lakshminarayanan2017simple, pearce2020uncertainty} construct an ensemble of neural networks and the agreement/disagreement across the ensemble components is used to quantify different uncertainties. Ensemble-based methods significantly increase the number of model parameters, and are computationally expensive at both training and test times.

Bayesian neural networks  \cite{gal2016dropout}\cite{blundell2015weight}\cite{mobiny2021dropconnect} have been developed that consider a Bayesian formalism to quantify different uncertainties. For instance, Blundell \etal ~\cite{blundell2015weight} use Bayes-by-backdrop to learn a distribution over neural network parameters, whereas Gal \etal~\cite{gal2016dropout} enable dropout during inference phase to obtain predictive uncertainty. Bayesian methods resort to some form of approximation to address the intractability issue in marginalization of latent variables. Moreover, these methods are also computationally expensive as they require sampling for uncertainty quantification.

Towards accurate UQ, Credal Bayesian deep learning (CBDL) models \cite{caprio2023credal, caprio2024credal, ristic2020tutorial} have also been developed that use concepts from the imprecise probability theory \cite{augustin2014introduction} for comprehensive uncertainty quantification. These models aim to approximate the credal set of posterior distributions during training \cite{caprio2023credal}, based on which, the models infer the credal set of predictive distributions during inference. CBDL models are more robust to prior/likelihood distribution misspecification compared to BNN, have been effectively applied to continual learning settings \cite{lu2023ibcl}, and provide more robust epistemic uncertainty quantification capabilities \cite{caprio2024credal} especially in situations of prior/likelihood misspecification. However, these models, due to the use of credal sets that require reasoning over multiple distributions, are computationally expensive compared to BNN, limiting their usage. In contrast, Evidential DL approaches only require a single forward pass through the deep learning models to quantify uncertainty, making them computationally lighter compared to BNN and CBDL models. 

\paragraph{Evidential Deep Learning}
\bl{
Uncertainty in belief/evidence theory \cite{denoeux2020representations, denoeux2001handling} and its neural extensions \cite{denoeux2000neural, denoeux2022evidential} have been studied under Dempster–Shafer Theory \cite{shafer1992dempster}, fuzzy logic \cite{denoeux2023quantifying, de2018intelligent}, and Subjective Logic \cite{josang2016subjective}. Evidential deep learning is closely related to second-order distribution–based uncertainty quantification \cite{sale2024second, ren2021kronecker,sale2023second} and frequently employs Subjective Logic \cite{josang2016subjective} for uncertainty reasoning.
Evidential models introduce a conjugate higher-order evidential prior for the likelihood distribution that enables the model to capture the fine-grained uncertainties. For instance, Dirichlet prior is introduced over the multinomial likelihood for evidential classification \cite{bao2021evidential,zhao2020uncertainty,charpentier2020posterior}, and the normal-inverse-gamma prior is introduced over the Gaussian likelihood \cite{amini2020deep, pandey2022evidential} for the evidential regression models. }

The robustness \cite{kopetzki2021evaluating} and calibration \cite{tomani2021towards} of evidential models have been extensively studied. Usually, these models are trained with evidential losses and heuristically designed regularizers to guide uncertainty behavior \cite{Pandey_2022_CVPR, shi2020multifaceted}. Some variants incorporate out-of-distribution (OOD) data into training \cite{malinin2019reverse,malinin2018predictive}, but this assumption may not hold in real-world settings. A recent survey \cite{ulmer2021survey} provides a comprehensive overview.

\bl{Recent works have examined the reliability of uncertainty measures in EDL. Wimmer \etal~\cite{wimmer2023quantifying} and Bengs \etal~\cite{bengs2022pitfalls} highlight cases where the decomposition of aleatoric and epistemic uncertainty may be inconsistent. Jurgens \etal~\cite{jurgens2024epistemic} study epistemic uncertainty behavior and report situations where evidential uncertainty can be misleading. Shen \etal~\cite{shen2024uncertainty} extend EDL by explicitly modeling additional forms of model uncertainty, while second-order approaches \cite{sale2024second, sale2023second} propose theoretically grounded uncertainty measures addressing limitations in earlier formulations.}

\bl{Orthogonal to these existing evidential works—which primarily investigate the \emph{interpretation} or \emph{reliability} of evidential uncertainty—we study the \emph{training dynamics} of evidential models. Specifically, we characterize the \textit{activation-induced learning-freeze behavior}: certain non-negative evidence activations can map samples into low-evidence regions where gradients become extremely small, limiting effective evidence updates. We introduce a theoretically justified regularization strategy that mitigates this stagnation and enables more consistent evidence accumulation across activation regimes.}

In this work, we focus on evidential classification models and consider settings without access to the out-of-distribution data during training, improving applicability in real-world scenarios.

\section{Learning in Evidential Models}
\label{sec:EvDlModel}
\de{We first describe learning in standard classification models. We then describe evidential deep learning model basics for classification. Afterward, we analyze the gradient dynamics of evidential training to characterize the \textit{activation-induced learning-freeze} behavior that arises when certain non-negative evidence activations map samples into low-evidence regions.}
\subsection{Standard Classification Models} \label{sec:appAnalysisStandardClassificationModels}
Standard classification models use a softmax transformation on the output from the neural network $\mathcal{F}_{\Theta}$ for input $\mathbf{x}$ to obtain the class probabilities in a $K$-class classification problem. 
Such models are trained with cross-entropy-based losses. 
These models have achieved state-of-the-art performance on many benchmark problems. 
\paragraph{Gradient Analysis} Consider a standard cross-entropy trained model for $K-$class classification. Let the overall network be represented by $f_{\Theta}(\cdot)$, and let $\mathbf{o} = f_{\Theta}(\mathbf{x})$ be the output from this network before the softmax layer for input $\mathbf{x}$ and one-hot ground truth label of $\mathbf{y}$. The output at node $i$ after the softmax layer is given by 
\begin{align}
\texttt{sm}_i = \frac{\exp({o}_i)}{\sum_{k=1}^{K} \exp({o}_k)} = \frac{\exp({o}_i)}{S^{\text{ce}}}    
\end{align}
\noindent where $S^{\text{ce}} =  \sum_{k=1}^K \exp(o_k)$. For a given sample $(\mathbf{x}, \mathbf{y})$, the cross-entropy loss ($\mathcal{L}_{\texttt{ce}}$), and the gradient of this loss with respect to the pre-softmax values $\mathbf{o}$ are given by
\begin{align}
    \mathcal{L}_{\texttt{ce}} &= -\sum_{k = 1}^K y_k \log (\texttt{sm}_k) \quad
    = \log S^{\text{ce}} - \sum_{k=1}^K y_k o_k
\end{align}
\begin{align}
    \text{grad}_k &= \frac{\partial \mathcal{L}_{\texttt{ce}}}{\partial o_k} = \Big( \frac{1}{S^{\text{ce}}}\frac{\partial S^{\text{ce}}}{\partial o_k} - y_k \Big) 
    = \frac{\exp (o_k)}{S^{\text{ce}}} - y_k  \\
    &= \texttt{sm}_k - y_k
\end{align}
The gradient measures the error signal, and for standard classification models, it is bounded in the range [-1, 1] as $0 \leq \texttt{sm}_k \leq 1$ and $y_k \in \{0, 1\}$. The model is updated using gradient descent-based optimization objectives. For input $\mathbf{x}$, the neural network outputs K values $o_1$ to  $o_K$, and the corresponding ground truth is $\mathbf{y}$, $y_{\text{gt}} = 1, y_{\neq gt} = 0$. When $y_i$ = 0, the gradient signal is $\text{grad}_i = \texttt{sm}_i$ and the model optimizes the parameters to minimize this value. Only when $\texttt{sm}_i = 0$, the gradient is zero, and the model is not updated. In all other cases when $\texttt{sm}_i \neq 0$, there is a non-zero gradient dependent on $\texttt{sm}_i$, and the model is updated to minimize the $\texttt{sm}_i$ as expected. When $y_i$ = 1, the gradient signal is $\text{grad}_i = \texttt{sm}_i - 1$ and the model optimizes the parameters to minimize this value. As $ \texttt{sm}_i \in [0, 1]$, only when the model outputs a large logit on $i$ (corresponding to the ground-truth class) and small logit for all other nodes, $\texttt{sm}_i = 1$, the gradient is zero, and the model is not updated. For the cases when $\texttt{sm}_i < 1$, there is a non-zero gradient dependent on $\texttt{sm}_i$ and the model is updated to maximize the $\texttt{sm}_{i=gt}$ and minimize all other $\texttt{sm}_{i\neq gt}$ as expected. The gradient signal in standard classification models trained with standard cross-entropy loss is reasonable and enables learning from all the training samples. 

The above gradient analysis shows that standard classification models trained with cross-entropy-based loss can effectively learn from all the training samples. Nevertheless, these models lack a systematic mechanism to quantify different sources of uncertainty, a highly desired property in many real-world problems.

\subsection{Evidential Deep Learning Models for Classification}
\begin{figure}[htpb]
\centering
  \includegraphics[width=0.9\linewidth]{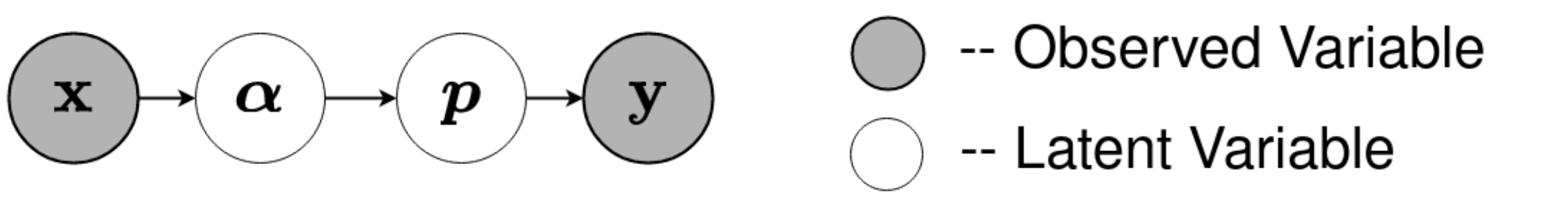}
  \caption{Graphical model for Evidential Models } 
\label{fig:evidGraphicalRep}
\centering
\end{figure}
Evidential deep learning models for classification formulate model training as an evidence acquisition process and consider a higher-order Dirichlet prior $\texttt{Dir}(\mathbf{p}|\boldsymbol{\alpha})$ over the predictive Multinomial distribution $\texttt{Mult}(\mathbf{y}|\mathbf{p})$. Different from a standard Bayesian formulation which optimizes {\em Type-II Maximum Likelihood} to learn the Dirichlet hyperparameter~\cite{bishop2006pattern}, evidential models directly predict $\boldsymbol{\alpha}$ using data features ${\bf x}$ and then generate the prediction ${\bf y}$ by marginalizing the Multinomial parameter ${\bf p}$. Figure \ref{fig:evidGraphicalRep} describes this generative process. Such higher-order prior enables the model to systematically quantify different sources of uncertainty. It is worth noting that the uncertainty behavior of vanilla EDL \cite{sensoy2018evidential} in low-evidence or boundary regions is an intentional design choice arising from its Dirichlet prior and KL-based regularization, which promote conservative (high epistemic) uncertainty away from training data. In evidential models, the \texttt{Softmax} in the standard neural networks for classification is replaced by a non-negative activation function $\mathcal{A}$, where $\mathcal{A}({\bf x}) \geq 0 \quad \forall x \in [-\infty, \infty]$, such that for input $\mathbf{x}$, the neural network model $\mathcal{F}_{\Theta}$ with parameters $\Theta$ can output evidence $\mathbf{e}$ for different classes. Dirichlet parameter $\boldsymbol{\alpha}$ is evaluated as $\boldsymbol{\alpha} = \mathbf{e} + \boldsymbol{1}$ to ensure $\boldsymbol{\alpha} \geq 1$, where $\mathbf{e} = \mathcal{A}(\mathcal{F}_\Theta (\mathbf{x})) = \mathcal{A}(\mathbf{o})$, which quantify fine-grained uncertainties in addition to the prediction  $\mathbf{y}$ for the input $\mathbf{x}$. Then, Dirichlet strength, $S$, for $K-$class classification problem is computed by
\begin{align}
    &S =  \sum_{k=1}^K (\mathbf{e}_k +1)
\end{align}
The activation function $\mathcal{A}(\cdot)$ assumes three common forms to transform the output into evidence: (1) $\texttt{ReLU}(\cdot) = \max( 0, \cdot )$, (2) $\texttt{SoftPlus}(\cdot) = \log ( 1 + \exp(\cdot) )$, and (3) $\exp(\cdot)$. 

Evidential models assign input sample to that class for which the output evidence is greatest. Moreover, they quantify the confidence in the prediction for $K-$class classification problem through vacuity $\nu = \frac{K}{S}$ (\ie measure of lack of confidence in the prediction.)
For any training sample $(\mathbf{x}, \mathbf{y})$, the evidential models aim to maximize the evidence for the correct class, minimize the evidence for the incorrect classes, and output accurate confidence. To this end, three variants of evidential loss functions have been proposed~\cite{sensoy2018evidential}: 1) Bayes risk with the sum of squares loss, 2) Bayes risk with cross-entropy loss, and 3) Type-II Maximum Likelihood loss. Please refer to Eq.~\ref{eqn:evMSEloss}, Eq.~\ref{eqn:evDigammaloss}, and Eq.~\ref{eqn:evLogloss}  in the Appendix for the specific forms of these losses. Additionally, incorrect evidence regularization terms are introduced to guide the model to output low evidence for classes other than the ground truth class (See Appendix \ref{app:evIncReg} for discussion on the regularization). With evidential training, evidential deep learning models are expected to output high evidence for the correct class, low evidence for all other classes, and output high vacuity for unseen/OOD samples. 

\subsection{Theoretical Analysis of Evidential Classification Models}

To identify the underlying reason that causes the performance gap of evidential models as described earlier, we consider a $K-$class classification problem and a representative evidential model trained using Bayes risk with the sum of squares loss given in Eq.~\ref{eqn:evMSEloss}. We first provide important definitions that are critical for our theoretical analysis.  
\begin{definition}[{\bf Zero Evidence Sample and Zero Evidence Region}] 
    A \textit{zero evidence sample} is a data sample for which the model outputs zero evidence for all classes. A \textit{zero evidence region} is the area in the evidence space that contains all the \textit{zero evidence samples}.
\end{definition}
For a reasonable evidential model, novel data samples not yet seen during training, difficult data samples, and Out-Of-Distribution (OOD) samples should become \textit{zero evidence samples} and should be mapped in the \textit{zero evidence region}. 

\begin{theorem}
\label{supotimalityTheorem}
{Given a training sample $(\mathbf{x}, \mathbf{y})$, if an evidential neural network outputs zero evidence $\mathbf{e}$, then the gradients of the evidential loss evaluated on this training sample over the network parameters reduce to zero. }
\end{theorem}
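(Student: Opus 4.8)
The plan is to work directly with the chain rule, factoring the gradient of the evidential loss through the evidence vector $\mathbf{e}$ and showing that the relevant factor vanishes when $\mathbf{e} = \mathbf{0}$. First I would write the gradient of the loss $\mathcal{L}$ (taken to be the Bayes-risk sum-of-squares loss of Eq.~\ref{eqn:evMSEloss}) with respect to a network parameter $\theta$ as
\begin{align}
\frac{\partial \mathcal{L}}{\partial \theta} = \sum_{k=1}^{K} \frac{\partial \mathcal{L}}{\partial e_k}\,\frac{\partial e_k}{\partial o_k}\,\frac{\partial o_k}{\partial \theta},
\end{align}
so it suffices to show that each term $\dfrac{\partial \mathcal{L}}{\partial e_k}$ is zero (equivalently, that $\dfrac{\partial e_k}{\partial o_k}$ is zero) whenever the model is at a zero-evidence sample, i.e. $e_1 = \cdots = e_K = 0$ and hence $\alpha_k = 1$, $S = K$ for all $k$.

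Next I would evaluate $\partial \mathcal{L}/\partial e_k$ explicitly. Writing the sum-of-squares Bayes risk in the standard form $\sum_k \big[(y_k - \alpha_k/S)^2 + \alpha_k(S-\alpha_k)/(S^2(S+1))\big]$ with $\alpha_k = e_k + 1$, I would differentiate with respect to $e_k$, using $\partial S/\partial e_k = 1$. At the point $e_j = 0$ for all $j$ the terms do not vanish by themselves, so the key observation must come from the second factor: for the two activations $\texttt{SoftPlus}$ and $\exp$ the derivative $\partial e_k/\partial o_k = \mathcal{A}'(o_k)$ is strictly positive everywhere, so the statement as phrased can only be literally true for $\texttt{ReLU}$, where a zero-evidence sample forces every pre-activation $o_k \le 0$ and thus $\mathcal{A}'(o_k) = 0$ on the interior, killing the whole chain. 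I would therefore split into the $\texttt{ReLU}$ case (gradient exactly zero because $\partial e_k/\partial o_k = 0$) and note — consistent with the paper's stated contributions — that for $\exp$ and $\texttt{SoftPlus}$ the gradient is not exactly zero but becomes \emph{small}, motivating the "zero-evidence region" terminology and the later results (Theorem~\ref{th:sovlingzeroevidenceIssue}) on regularization; I expect the authors' statement to be understood in the $\texttt{ReLU}$ regime or in a limiting sense, and I would make that scope explicit.

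Finally I would collect the pieces: under $\texttt{ReLU}$, every sample in the zero-evidence region has all pre-activations in $(-\infty, 0]$, the local derivative of the activation is zero, and therefore by the chain-rule factorization above every parameter gradient $\partial \mathcal{L}/\partial \theta = 0$, regardless of the supervision $\mathbf{y}$; the same argument applies verbatim to the cross-entropy (digamma) and Type-II maximum-likelihood evidential losses since they too depend on $\theta$ only through $\mathbf{e}$. The main obstacle is the one just flagged: for smooth activations the gradient is genuinely nonzero, so the honest version of the claim is either (i) restricted to $\texttt{ReLU}$, or (ii) a statement that $\|\nabla_\Theta \mathcal{L}\| \to 0$ as $\mathbf{e} \to \mathbf{0}$ for $\exp$/$\texttt{SoftPlus}$ because $\mathcal{A}'(o_k) \to 0$ there; pinning down which reading is intended, and carrying the corresponding bound, is the part that needs care rather than the routine differentiation.
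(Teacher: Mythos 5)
Your proposal takes essentially the same route as the paper: the same chain-rule factorization of the loss gradient through $\frac{\partial e_k}{\partial o_k}$, the same three-case analysis over \texttt{ReLU}, \texttt{SoftPlus}, and $\exp$, and the same observation that the loss-derivative factor does not itself vanish, so the activation derivative must carry the vanishing. The scope issue you flag is resolved in the paper exactly as in your reading (ii): for \texttt{SoftPlus} and $\exp$ the zero-evidence condition is treated as the limit $o_k \to -\infty$, where $\frac{\partial e_k}{\partial o_k} \to 0$ and hence the parameter gradients vanish in the limit, while only \texttt{ReLU} gives exactly zero gradients at finite logits.
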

\begin{proof}
Consider an input $\mathbf{x}$ with one-hot ground truth label $\mathbf{y}$. Let the ground truth class index be $gt$, \ie $y_{\text{gt}} =1, $ with corresponding Dirichlet parameter $\alpha_{\text{gt}}$, and $y_{\neq gt} = 0$. Moreover, let $\mathbf{o}, \mathbf{e}, \text{and } \boldsymbol{\alpha}$ represent the neural network output vector before applying the activation $\mathcal{A}$, the evidence vector, and the Dirichlet parameters respectively. 

In this evidential model, the loss is given by
\begin{align}
    \mathcal{L}^{\texttt{MSE}}(\mathbf{x}, \mathbf{y}) &= \sum_{j=1}^K (y_j - \frac{\alpha_j}{S})^2 + \frac{\alpha_j (S - \alpha_j)}{S^2(S+1)} 
\end{align}

Now, the gradient of the loss with respect to the  neural network output can be computed using the chain rule:
\begin{align}\label{eqn:gradMseInt}
\begin{split}
    &\frac{\partial \mathcal{L}^{\texttt{MSE}}(\mathbf{x}, \mathbf{y})}{\partial o_k}  = \frac{\partial \mathcal{L}^{\texttt{MSE}}(\mathbf{x}, \mathbf{y})}{\partial \alpha_k}\frac{\partial e_k}{\partial o_k} \\
    & = \bigg[ \frac{2\alpha_{\text{gt}}}{S^2} - 2\frac{y_k}{S} - \frac{2( S - \alpha_k)}{S(S+1)} +\\
    &\quad\quad\quad+\frac{2(2S + 1)\sum_{i} \sum_{j}\alpha_i \alpha_j}{(S^2 + S)^2}
    \bigg] \times \frac{\partial e_k}{\partial o_k}
    \end{split}
\end{align}
Based on the actual form of $\mathcal{A}$, we have three cases:

\begin{itemize}
    \item \textbf{Case I:} $\texttt{ReLU}(\cdot)$ to transform logits to evidence
    \begin{align}\label{eqn:reluGradoE}
    e_k &= \texttt{ReLU}(o_k) 
    \implies \frac{\partial e_k}{\partial o_k} = \begin{cases}
    1 \quad  $\text{if }$ o_k > 0 \\
    0 \quad  \text{otherwise}
    \end{cases}    
    \end{align}
    For a zero evidence sample, the logits $o_k$ satisfy the relationship $o_k \leq 0 \; \forall \; k
    \implies \frac{\partial e_k}{\partial o_k} = 0
    \implies\frac{\partial \mathcal{L}^{\texttt{MSE}}(\mathbf{x}, \mathbf{y})}{\partial o_k}  = 0
    $
    \item \textbf{Case II:} $\texttt{SoftPlus}(\cdot)$ to transform logits to evidence
    \begin{align}\label{eqn:SoftPlusGradoE}
    e_k = \texttt{Softplus}(o_k) \implies \frac{\partial e_k}{\partial o_k} = \texttt{Sigmoid}(o_k) 
    \end{align}
    For a zero evidence sample, the logits $o_k \rightarrow -\infty \implies \texttt{Sigmoid}(o_k) \rightarrow 0 \; \& \; \frac{\partial e_k}{\partial o_k} \rightarrow 0$.
    \item \textbf{Case III:} $\exp(\cdot)$ to transform logits to evidence
    \begin{align}\label{eqn:expGradoE}
        e_k &= \exp(o_k)
        \implies &\frac{\partial e_k}{\partial o_k} = \exp(o_k) = \alpha_k - 1
    \end{align}
     For a zero evidence sample, $\alpha_k \rightarrow 1 \implies \frac{\partial e_k}{\partial o_k} \rightarrow 0$. 
\end{itemize}
So, for all three instances of the evidential activation, $\frac{\partial e_k}{\partial o_k} \rightarrow 0 \text{ as } e_k \rightarrow 0 \quad \& \quad e_k = 0 \implies \frac{\partial e_k}{\partial o_k} = 0$. Moreover, there is no term in the first part of the loss gradient in Eq.~\ref{eqn:gradMseInt} to counterbalance these zero-approaching gradients. \de{As the training sample is mapped to the region near the zero evidence region (\ie $e_k \rightarrow 0$), the evidence gradients ($\frac{\partial e_k}{\partial o_k}$) approach to zero (\ie $\frac{\partial e_k}{\partial o_k} \rightarrow 0 $), and the loss gradient (\aka the learning signal) also approaches zero (\ie $\frac{\partial \mathcal{L}^{\texttt{MSE}}(\mathbf{x}, \mathbf{y})}{\partial o_k}  \rightarrow 0$) irrespective of the supervised learning signal.} For \textit{zero evidence training samples}, for any node $k$,
\begin{align}
\frac{\partial \mathcal{L}^{\texttt{MSE}}(\mathbf{x}, \mathbf{y})}{\partial o_k}  = 0
\end{align}
For \textit{zero evidence training samples}, since the gradient of the loss with respect to all the nodes is zero, there is no update to the model from such samples. {This \de{shows} that the evidential models fail to learn from a zero evidence data sample.} 
\end{proof}
For completeness, we present the detailed proof of the evidential models trained using Bayes risk with the sum of squares error along with other evidential losses in Appendix \ref{apSec:evidentialProof}, and impact of incorrect evidence regularization in Appendix \ref{app:evIncReg}. 

\paragraph{Remark}
\bl{When an evidential model outputs zero evidence for all classes (\ie a data sample that the model has never seen and for which the model accurately outputs ``I don't know'', \ie  $e_k=0$ $\forall k \in [1, K]$), the gradients of standard evidential losses vanish, and the supervised information in such samples cannot contribute to parameter updates.
Such samples may naturally appear during training (for example, novel, ambiguous, or OOD-like inputs), but the model receives no learning signal from them because they lie in the \textit{zero evidence region}. 
Similarly, samples mapped near the \textit{zero evidence region} receive significantly diminished gradients: their learning signal becomes much weaker than that of samples with higher evidence, regardless of the strength of the supervised label.}

\begin{corollary}
    \bl{Incorrect evidence regularization does not provide a learning signal for zero evidence samples and therefore cannot induce parameter updates for such samples.}  
\end{corollary}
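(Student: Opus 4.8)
The plan is to reuse the chain-rule factorization from the proof of Theorem~\ref{supotimalityTheorem}, now applied to the incorrect evidence regularizer rather than to an evidential loss. Write the regularizer (Appendix~\ref{app:evIncReg}, Eq.~\ref{appeq:klsensoy}) as $\mathcal{R}(\mathbf{x},\mathbf{y}) = \lambda_t\,\mathrm{KL}\!\left[\texttt{Dir}(\mathbf{p}\mid\tilde{\boldsymbol{\alpha}})\,\big\|\,\texttt{Dir}(\mathbf{p}\mid\mathbf{1})\right]$, where $\tilde{\boldsymbol{\alpha}} = \mathbf{y} + (\mathbf{1}-\mathbf{y})\odot\boldsymbol{\alpha}$ masks out the ground-truth evidence. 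Since $\alpha_k = e_k + 1$ and $\mathbf{e} = \mathcal{A}(\mathbf{o})$, the gradient with respect to any pre-activation logit factorizes as
\begin{align}
\frac{\partial \mathcal{R}}{\partial o_k}
 &= \frac{\partial \mathcal{R}}{\partial \tilde{\alpha}_k}\;\frac{\partial \tilde{\alpha}_k}{\partial \alpha_k}\;\frac{\partial \alpha_k}{\partial e_k}\;\frac{\partial e_k}{\partial o_k}
 = \frac{\partial \mathcal{R}}{\partial \tilde{\alpha}_k}\;\frac{\partial \tilde{\alpha}_k}{\partial \alpha_k}\;\frac{\partial e_k}{\partial o_k}.
\end{align}
The factor $\partial e_k/\partial o_k$ is exactly the activation gradient studied in Theorem~\ref{supotimalityTheorem}: for $\texttt{ReLU}$, $\texttt{SoftPlus}$, and $\exp$ it satisfies $\partial e_k/\partial o_k \to 0$ as $e_k \to 0$, and it is identically $0$ at a zero evidence sample (where $e_k = 0$, attainable exactly for $\texttt{ReLU}$ and in the limit for the other two). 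So it suffices to show that the remaining factors do not blow up.

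For the ground-truth coordinate, $\tilde{\alpha}_{\text{gt}} \equiv 1$ is independent of the network output, so $\partial \tilde{\alpha}_{\text{gt}}/\partial \alpha_{\text{gt}} = 0$ and that coordinate contributes nothing to $\partial\mathcal{R}/\partial\Theta$. For an incorrect coordinate $k \neq \text{gt}$, $\tilde{\alpha}_k = \alpha_k$, and from the closed form of the Dirichlet KL one finds $\partial \mathcal{R}/\partial \tilde{\alpha}_k = \lambda_t\big[(\tilde{\alpha}_k - 1)\,\psi'(\tilde{\alpha}_k) - (\,\textstyle\sum_j \tilde{\alpha}_j - K\,)\,\psi'(\textstyle\sum_j \tilde{\alpha}_j)\big]$, which is $C^\infty$ in the $\tilde{\alpha}_j$ on the positive orthant, is bounded near $\tilde{\boldsymbol{\alpha}} = \mathbf{1}$, and in fact vanishes there. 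Hence each factor of the product above is either zero or bounded, so $\partial \mathcal{R}/\partial o_k \to 0$ as a sample is mapped toward the zero evidence region, with exact equality $\partial \mathcal{R}/\partial o_k = 0$ at a zero evidence sample. Propagating through the backbone, $\partial \mathcal{R}/\partial \Theta = \sum_k (\partial \mathcal{R}/\partial o_k)(\partial o_k/\partial \Theta) = 0$ at such a sample, and adding $\mathcal{R}$ to any of the three evidential losses (whose gradients already vanish by Theorem~\ref{supotimalityTheorem} and its appendix extensions) leaves the total gradient zero.

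As a complementary sanity check I would note the variational picture: at a zero evidence sample $\boldsymbol{\alpha} = \mathbf{1}$, hence $\tilde{\boldsymbol{\alpha}} = \mathbf{1}$, so $\mathcal{R} = \lambda_t\,\mathrm{KL}[\texttt{Dir}(\mathbf{p}\mid\mathbf{1})\,\|\,\texttt{Dir}(\mathbf{p}\mid\mathbf{1})] = 0$, the global minimum of a nonnegative penalty—the sample already sits at the regularizer's optimum, so there is nothing left for it to push on. The only mildly technical step is the boundedness/vanishing of $\partial \mathcal{R}/\partial \tilde{\alpha}_k$ near $\tilde{\boldsymbol{\alpha}} = \mathbf{1}$, which is immediate from the regularity of $\log\Gamma$, $\psi$, and $\psi'$ away from the non-positive integers together with $\tilde{\alpha}_k = 1$ lying safely in the interior; everything else is the same ``zero-or-bounded product'' argument already used in the proof of Theorem~\ref{supotimalityTheorem}.
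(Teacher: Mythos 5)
Your proposal is correct and follows essentially the same route as the paper: the paper's own justification (the intuitive argument in the main text together with the gradient analysis of the KL regularizer of Eq.~\ref{appeq:klsensoy} in Appendix~\ref{app:evIncReg}) likewise rests on the two facts you use, namely that the masked parameter $\tilde{\alpha}_{\text{gt}} = 1$ removes any ground-truth contribution and that every remaining gradient component carries the activation factor $\partial e_k/\partial o_k$, which vanishes for zero-evidence samples exactly as established in Theorem~\ref{supotimalityTheorem}. Your explicit closed form for $\partial \mathcal{R}/\partial \tilde{\alpha}_k$ and the observation that the KL term sits at its global minimum when $\tilde{\boldsymbol{\alpha}} = \mathbf{1}$ are sound refinements but do not change the underlying argument.
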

\bl{Intuitively, the incorrect evidence regularization encourages the model to reduce evidence for non–ground-truth classes, but it does not increase the evidence of the ground-truth class. As a result, its gradients do not affect the zero-evidence condition. Consequently, the regularization can move samples closer to the ``zero evidence region'' in the evidence space, but it cannot create a non-zero gradient for samples already mapped to this region. Thus, incorrect evidence regularization does not supply the missing gradient needed for zero-evidence samples to contribute to learning.}

\begin{theorem}
\label{th:superirorityofExp}
{ For a data sample $\mathbf{x}$, if an evidential model outputs logits $\mathbf{o}_k \leq 0 $ $\forall k \in [0, K]$, the exponential activation function leads to a larger gradient update on the model parameters than \texttt{SoftPlus} and \texttt{ReLu}.}
\begin{proof}
{
Consider an evidential loss $\mathcal{L}$ (formally defined in Eq.~\ref{eqn:evMSEloss}, Eq.~\ref{eqn:evDigammaloss}, and Eq.~\ref{eqn:evLogloss}) is used to train the evidential model. Let ${\bf o, e} \in \mathbb{R}^K$ denote the neural network output vector before applying the activation $\mathcal{A}$, and the evidence vector, respectively, for a network with weight $w$. For a data sample {$\bf x$}, if the network outputs $o_k<0, \forall k \in [K]$, we have: \\}\\
\noindent 1. \texttt{ReLU}: 
\begin{align}
    \Big(\frac{\partial \mathcal{L}}{\partial w}\Big)_{\texttt{ReLU}} =\sum_k\frac{\partial \mathcal{L}}{\partial e_k}\frac{\partial e_k}{\partial o_k}\frac{\partial o_k}{\partial w} = 0  \quad \quad \text{(see Eq.~\ref{eqn:reluGradoE})}
\end{align}
2. \texttt{SoftPlus}: 
\begin{align}
    \Big(\frac{\partial \mathcal{L}}{\partial w}\Big)_{\texttt{SoftPlus}}  &=\sum_k\frac{\partial \mathcal{L}}{\partial e_k}\frac{\partial e_k}{\partial o_k}\frac{\partial o_k}{\partial w} \quad \quad \text{(see Eq.~\ref{eqn:SoftPlusGradoE})}\\
    &=\sum_k\frac{\partial \mathcal{L}}{\partial e_k} \frac{\partial o_k}{\partial w} \texttt{Sigmoid}(o_k)
\end{align}
3. Exponential: 
\begin{align}
    \Big(\frac{\partial \mathcal{L}}{\partial w} & \Big)_{\text{Exp}}=\sum_k\frac{\partial \mathcal{L}}{\partial e_k}\frac{\partial e_k}{\partial o_k}\frac{\partial o_k}{\partial w} \quad \quad \text{(see Eq.~\ref{eqn:expGradoE})}\\
    &= \sum_k\frac{\partial \mathcal{L}}{\partial e_k} \frac{\partial o_k}{\partial w} \exp(o_k) \\
    &=\sum_k\frac{\partial \mathcal{L}}{\partial e_k} \frac{\partial o_k}{\partial w} \{[1 + \exp(o_k)]\texttt{Sigmoid}(o_k)\} 
\end{align}
$$
$$
{Thus, we have $\Big(\frac{\partial \mathcal{L}}{\partial w} \Big)_{\text{Exp}}\geq\Big(\frac{\partial \mathcal{L}}{\partial w}\Big)_{\texttt{SoftPlus}}\geq\Big(\frac{\partial \mathcal{L}}{\partial w}\Big)_{\texttt{ReLU}}$, which implies that $\mathcal{A}=\exp$ leads to a larger update to the network than both \texttt{SoftPlus} and \texttt{ReLU}. This completes the proof. }
\end{proof}
\end{theorem}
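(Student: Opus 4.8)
The plan is to compare all three updates through a single chain-rule expansion and to localize the entire activation dependence in one factor. Writing $\frac{\partial \mathcal{L}}{\partial w} = \sum_{k=1}^{K} \frac{\partial \mathcal{L}}{\partial e_k}\,\frac{\partial e_k}{\partial o_k}\,\frac{\partial o_k}{\partial w}$, the network body below the evidence layer is unchanged across the choice of $\mathcal{A}$, so $\frac{\partial o_k}{\partial w}$ (and, at a common operating point in logit space, $\frac{\partial \mathcal{L}}{\partial e_k}$) is held fixed and only $\frac{\partial e_k}{\partial o_k}$ varies. First I would dispose of \texttt{ReLU}: the hypothesis $o_k \le 0$ for every $k$ places the sample on the flat branch, so $\frac{\partial e_k}{\partial o_k} = 0$ by Eq.~\ref{eqn:reluGradoE}, every summand vanishes, and $\Big(\frac{\partial \mathcal{L}}{\partial w}\Big)_{\texttt{ReLU}} = 0$. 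This pins the bottom of the claimed chain.

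Next I would put the \texttt{SoftPlus} and $\exp$ evidence derivatives on the same footing. By Eq.~\ref{eqn:SoftPlusGradoE}, $\frac{\partial e_k}{\partial o_k}\Big|_{\texttt{SoftPlus}} = \texttt{Sigmoid}(o_k)$, and by Eq.~\ref{eqn:expGradoE}, $\frac{\partial e_k}{\partial o_k}\Big|_{\exp} = \exp(o_k)$. The crux is the elementary identity $\exp(o_k) = \Big[\,1 + \exp(o_k)\,\Big]\,\texttt{Sigmoid}(o_k)$, which exhibits the exponential evidence gradient as the \texttt{SoftPlus} one amplified by the factor $1 + \exp(o_k) \ge 1$ (strictly greater than $1$ for finite $o_k$, tending to $1$ only as $o_k \to -\infty$). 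Hence $\exp(o_k) \ge \texttt{Sigmoid}(o_k) \ge 0$ for every $k$, so node by node the exponential activation inflates the evidence gradient relative to \texttt{SoftPlus}, which in turn dominates the vanishing \texttt{ReLU} gradient on the regime $o_k \le 0$.

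To lift this pointwise domination to a statement about the full parameter update, I would work with the per-node learning signal $\frac{\partial \mathcal{L}}{\partial o_k} = \frac{\partial \mathcal{L}}{\partial e_k}\,\frac{\partial e_k}{\partial o_k}$: in the near-zero-evidence regime the sign of $\frac{\partial \mathcal{L}}{\partial e_k}$ is fixed by the label (the evidential loss of Eq.~\ref{eqn:evMSEloss} drives the ground-truth evidence up and the rival evidences down, and these directions do not flip nearby), so replacing $\texttt{Sigmoid}(o_k)$ by the larger nonnegative $\exp(o_k)$ can only enlarge $\Big|\frac{\partial \mathcal{L}}{\partial o_k}\Big|$; propagating these through the shared Jacobian $\frac{\partial o_k}{\partial w}$ then yields $\Big\|\Big(\frac{\partial \mathcal{L}}{\partial w}\Big)_{\exp}\Big\| \ge \Big\|\Big(\frac{\partial \mathcal{L}}{\partial w}\Big)_{\texttt{SoftPlus}}\Big\| \ge \Big\|\Big(\frac{\partial \mathcal{L}}{\partial w}\Big)_{\texttt{ReLU}}\Big\| = 0$, which is the assertion.

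The step I expect to be the real obstacle is precisely this last lifting: comparing whole gradient vectors, rather than the scalar activation factor, is only clean if the remaining factors $\frac{\partial \mathcal{L}}{\partial e_k}$ and $\frac{\partial o_k}{\partial w}$ can legitimately be treated as common across activations and if the sum over $k$ does not cancel in a way that reverses the inequality. I would handle this either by fixing the comparison at a common logit vector $\mathbf{o}$ (so that $\frac{\partial o_k}{\partial w}$ is genuinely activation-free) together with per-node sign-definiteness of $\frac{\partial \mathcal{L}}{\partial e_k}$ near the zero-evidence region, or — more modestly — by phrasing the conclusion as a coordinate-wise inequality on the gradient contributions, which is exactly what the three displayed chain-rule expansions already deliver and which already suffices for the intended message that $\mathcal{A}=\exp$ supplies the strongest evidence-accumulation signal in low-evidence regions.
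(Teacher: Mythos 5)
Your proposal follows essentially the same route as the paper's proof: the same chain-rule factorization $\sum_k \frac{\partial \mathcal{L}}{\partial e_k}\frac{\partial e_k}{\partial o_k}\frac{\partial o_k}{\partial w}$, the same vanishing of the \texttt{ReLU} branch when $o_k \le 0$ for all $k$, and the same key identity $\exp(o_k) = [1+\exp(o_k)]\,\texttt{Sigmoid}(o_k)$ to show the exponential factor dominates the \texttt{SoftPlus} one. Your closing discussion of how to lift the per-node comparison to the full parameter update (fixing a common logit vector, sign-definiteness of $\frac{\partial \mathcal{L}}{\partial e_k}$, or a coordinate-wise reading) is in fact more careful than the paper, which asserts the inequality between gradient vectors directly without that qualification.
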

\paragraph{Remark} The above proof implies that the training of evidential models is most effective with the exponential activation function \de{as it has larger gradient update (and effectively stronger learning signal) for points near the \textit{zero evidence region} \ie for points with $\mathbf{o}_k \leq 0 $ $\forall k \in [0, K]$}. 

We now carry out additional analysis with a representative evidential model in $K-$class classification problem. We consider an input $\mathbf{x}$ with one-hot label of $\mathbf{y}$, $\sum_{k=1}^K y_k = 1$. For this evidential framework, the Type-II Maximum Likelihood loss ($\mathcal{L}^{\texttt{Log}}(\mathbf{x}, \mathbf{y})$) and its gradient with the logits $\mathbf{o}$ (Eq.~\ref{eq:gradtype2}) are given by
\begin{align}
     \mathcal{L}^{\texttt{Log}}(\mathbf{x}, \mathbf{y}) 
     &=  \log S - \sum_{k=1}^K y_k \log \alpha_k \\
    \text{grad}_k &= \frac{\partial \mathcal{L}^{\texttt{Log}}(\mathbf{x}, \mathbf{y}) }{\partial o_k}
    = \Big(\frac{1}{S} - \frac{y_k}{\alpha_k} \Big)\frac{\partial e_k}{\partial o_k}
\end{align}
\textbf{Case I and II:} $\texttt{ReLU}$$(\cdot)$ and $\texttt{SoftPlus}$$(\cdot)$ to transform logits to evidence.
\begin{itemize}
    \item \textbf{Zero evidence region:} For $\texttt{ReLU}$$(\cdot)$ based evidential models, if the logits value for class $k$ \ie $o_k$ is negative, then the corresponding evidence for class $k$ \ie $e_k = 0$, $\frac{\partial e_k}{\partial o_k} = 0 
    \; \& \; \text{grad}_k = \frac{\partial \mathcal{L}^{\texttt{Log}}(\mathbf{x}, \mathbf{y})}{\partial o_k}  = 0$. So, there is no update to the model through the nodes that output negative logits value.  
    In the case of $\texttt{SoftPlus}$$(\cdot)$ based evidential models, there is no update to the model when training samples lie in zero evidence regions. This is possible in the condition of $o_k \rightarrow - \infty$. In other cases, there will be some small finite small update in the accurate direction from the gradient. 
    \item \textbf{Range of gradients:} The range of gradients for both $\texttt{ReLU}$$(\cdot)$ and $\texttt{SoftPlus}$$(\cdot)$ based evidential models are identical. Considering the gradient for the ground truth node \ie $ y_k = 1$, the range of gradients is $[\frac{1}{K}-1, 0]$. For all other nodes other than the ground truth node \ie $y_k = 0$, the range of gradients is $[0, \frac{1}{K}]$. So, for classification problems with a large number of classes, the gradient updates to the nodes that do not correspond to the ground truth class will be bounded in a small range and is likely to be very small.
    \item \textbf{High incorrect evidence region:} If the evidence for class $k$ is very large \ie $e_k \rightarrow \infty$, then for $\texttt{ReLU}$$(\cdot)$, $\frac{\partial e_k}{o_k} = 1$,  and for $\texttt{SoftPlus}$$(\cdot)$, $\frac{\partial e_k}{o_k} = \texttt{Sigmoid}(o_k) \rightarrow 1, \frac{1}{\alpha_k} = \frac{1}{e_k + 1} \rightarrow 0, \frac{1}{S} \rightarrow 0, \; \& \; \text{grad}_k = \frac{\partial \mathcal{L}^{\texttt{Log}}(\mathbf{x}, \mathbf{y})}{\partial o_k}  \rightarrow 0$. For large positive model evidence, there is no update to the corresponding node of the neural network. The evidence can be further broken down into correct evidence (corresponding to the evidence for the ground truth class), and incorrect evidence (corresponding to the evidence for any other class other than the ground truth class). When the correct class evidence is large, the corresponding gradient is close to zero and there is no update to the model parameters which is desired. When the incorrect evidence is large, the model should be updated to minimize such incorrect evidence. However, the evidential models with $\texttt{ReLU}$ and $\texttt{SoftPlus}$ fail to minimize incorrect evidence when the incorrect evidence value is large. These necessities the need for incorrect evidence regularization terms. 
\end{itemize}

\textbf{Case III:} Exponential, $\exp$$(\cdot)$, to transform logits to evidence. Considering Eq.~\ref{eq:gradtype2} and Eq.~\ref{eqn:gradExpActivation}, the gradient of the loss with respect to the logits becomes 
\begin{align} 
    \text{grad}_k = 
    \frac{\partial \mathcal{L}^{\texttt{Log}}(\mathbf{x}, \mathbf{y})}{\partial o_k} 
    = \Big(\frac{1}{S} - \frac{y_k}{\alpha_k}\Big) (\alpha_k - 1)
\end{align}
 
\begin{itemize}
    \item \textbf{Zero evidence region:} In case of $\exp$$(\cdot)$ based evidential models, except in the extreme cases of $\alpha_k \rightarrow \infty$, there will be some signal to guide the model.
    In cases outside the zero evidence region (\ie outside $\alpha_k \rightarrow \infty$), there will be some finite small update in the accurate direction from the gradient. Moreover, for same evidence values, the gradient of $\exp$ based model is larger than the $\texttt{SoftPlus}$ based evidential model by a factor of $1 + \exp(o_k)$. Compared to $\texttt{SoftPlus}$ models, the larger gradient is expected to help the model learn faster in low-evidence regions. 
    \item \textbf{Range of gradients:} For the ground truth node, \ie $, y_k = 1$, the range of gradients is $[-1, 0]$. For all nodes other than the ground truth node \ie, $y_k = 0$, the range of gradients is $[0, 1]$. Thus, the gradients are expected to be more expressive and accurate in guiding the evidential model compared to $\texttt{ReLU}$ and $\texttt{SoftPlus}$ based evidential models.
    \item \textbf{High evidence region:} If the evidence for class $k$ is very high \ie $e_k \rightarrow \infty$, then $\alpha_k - 1 \thickapprox \alpha_k $ and $\text{grad}_k = \texttt{sm}_k - y_k$. In other words, the model's gradient updates become identical to the standard classification model (see Section \ref{sec:appAnalysisStandardClassificationModels}) without any learning issues.
\end{itemize}
Due to smaller zero evidence region, more expressive gradients, and no issue of learning in high incorrect evidence region, the exponential-based evidential models { are expected to be more effective }compared to $\texttt{ReLU}$ and $\texttt{SoftPlus}$ based evidential models.  
As can be seen, the $\texttt{ReLU}$ based activation completely destroys all the information in the negative logits and has the largest region in evidence space in which training data have zero evidence. $\texttt{SoftPlus}$ activation improves over the $\texttt{ReLU}$, and compared to $\texttt{ReLU}$, has a smaller region in evidence space where training data have zero evidence. However, \texttt{SoftPlus} based evidential models fail to correct the acquired knowledge when the model has strong wrong evidence. Moreover, these models are likely to suffer from the vanishing gradients problem when the number of classes increases (\ie classification problem becomes more challenging). Finally, exponential activation has the smallest zero evidence region in the evidence space without suffering from the issues of $\texttt{SoftPlus}$ based evidential models. \de{Still, the learning signal for all evidential models reduces proportionally as the training data points become closer to \textit{zero evidence region}, and the learning signal becomes zero for samples in \textit{zero evidence region} of the evidence space irrespective of the supervised signal in the training data point. This problem exists for all the activation functions.}

\section{Avoiding Zero Evidence Regions Through Correct Evidence Regularization}
\label{sec:newAvoidZeroEvReg}
\bl{We introduce a generalized \emph{correct evidence regularization} for evidential classification models that provides a meaningful gradient for samples in low- or zero-evidence regions, while leaving standard evidential losses unchanged for high-evidence samples.}

\subsection{Correct Evidence Regularization}
\bl{As shown in Section~\ref{sec:appAnalysisStandardClassificationModels}, cross-entropy–trained softmax models naturally provide a strong gradient signal for the ground-truth class when its logit is highly negative. In evidential models, however, the gradients produced by standard evidential activations vanish as the evidence approaches zero. To encourage a learning behavior closer to that of cross-entropy models in these regions, we propose introducing a regularization term $\mathcal{L}^{\texttt{cor}}(\mathbf{x}, \mathbf{y})$ that satisfies
\begin{align*}
&\frac{\partial \mathcal{L}^{\texttt{cor}}(\mathbf{x}, \mathbf{y})}{\partial o_{\text{gt}}} = -1 \quad \text{when} \quad \sum_{k=1}^{K} e_{k} = 0, \\
&\frac{\partial \mathcal{L}^{\texttt{cor}}(\mathbf{x}, \mathbf{y})}{\partial o_{\text{gt}}} \rightarrow -1 \quad \text{as} \quad \sum_{k=1}^{K} e_{k}  \rightarrow 0.
\end{align*}}

\bl{Motivated by this analysis, we propose the following vacuity-guided regularization:}

\bl{\begin{align}
\label{eq:correct_regularization}
    \mathcal{L}^{\texttt{cor}}(\mathbf{x}, \mathbf{y}) = - \lambda_{\text{cor}} o_{\text{gt}},
\end{align}
where $\lambda_{\texttt{cor}} = \nu = \frac{K}{S}$ denotes the vacuity produced by the evidential model. The vacuity behavior is characterized as $ \lambda_{\texttt{cor}} = 1 \texttt{ as } \sum_{k=1}^{K} e_{k} = 0 \quad \& \quad \lambda_{\texttt{cor}} \rightarrow 1 \texttt{ as } \sum_{k=1}^{K} e_{k} \rightarrow 0$. This choice ensures that the regularization magnitude approaches $1$ as the total evidence $\sum_{k=1}^{K} e_k$ approaches zero. To allow the evidential losses to dominate learning for high-evidence samples, we introduce an evidence-dependent indicator function in the loss:
\begin{align}\label{eqn:propCorEvReg}
    \mathcal{L}^{\texttt{cor}}(\mathbf{x}, \mathbf{y}) &= 
    \begin{cases}
        -\lambda_{\text{cor}}o_{\text{gt}} \quad &\text{if } o_{\text{gt}} < 0, \\
        0 \quad &\text{otherwise}
    \end{cases} 
    \;=\; 
    - \mathcal{I}\, \lambda_{\text{cor}} o_{\text{gt}},
\end{align}
where $\mathcal{I} = \mathbbm{1}(o_{\text{gt}} < 0)$ disables the regularization once the model assigns sufficiently positive evidence to the ground-truth class. The term is active primarily in low-evidence regions and diminishes as the sample moves away from the zero-evidence region ,with the greatest magnitude achieved in the \textit{zero evidence region}. Thus, it has the effect of pushing the samples away from the zero evidence region. This key property is summarized in the following theorem.}

\begin{theorem}
\label{th:sovlingzeroevidenceIssue}
\bl{Correct evidence regularization provides a non-vanishing gradient signal for training samples mapped to zero-evidence regions. }
\end{theorem}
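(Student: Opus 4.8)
The plan is to directly compute $\partial \mathcal{L}^{\texttt{cor}}/\partial o_{\text{gt}}$ from the definition in Eq.~\eqref{eqn:propCorEvReg} and show that it stays bounded away from zero (in fact equals $-1$, or tends to $-1$) precisely in the zero-evidence regime, in contrast to the vanishing gradients of the standard evidential losses established in Theorem \ref{supotimalityTheorem}. First I would restrict attention to a training sample $(\mathbf{x},\mathbf{y})$ that is mapped into (or near) the zero-evidence region, so that $\sum_{k=1}^{K} e_k \to 0$; by the three activation cases analyzed in Theorem \ref{supotimalityTheorem}, $e_k\to 0$ forces $o_k\le 0$ (or $o_k\to-\infty$) for every $k$, so in particular $o_{\text{gt}}<0$ and the indicator $\mathcal{I}=\mathbbm{1}(o_{\text{gt}}<0)=1$ is active. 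Hence on this regime $\mathcal{L}^{\texttt{cor}}(\mathbf{x},\mathbf{y}) = -\lambda_{\text{cor}}\,o_{\text{gt}}$ with $\lambda_{\text{cor}} = \nu = K/S$.

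Next I would differentiate, treating the vacuity coefficient carefully. Writing $S = \sum_{k=1}^K e_k + K$ and $e_k = \mathcal{A}(o_k)$, we have
\begin{align}
\frac{\partial \mathcal{L}^{\texttt{cor}}(\mathbf{x},\mathbf{y})}{\partial o_{\text{gt}}}
= -\lambda_{\text{cor}} - o_{\text{gt}}\,\frac{\partial \lambda_{\text{cor}}}{\partial o_{\text{gt}}}
= -\frac{K}{S} + o_{\text{gt}}\,\frac{K}{S^2}\,\frac{\partial e_{\text{gt}}}{\partial o_{\text{gt}}}.
\end{align}
As $\sum_k e_k \to 0$ we have $S\to K$, so the first term tends to $-K/K=-1$. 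For the second term, Theorem \ref{supotimalityTheorem} gives $\partial e_{\text{gt}}/\partial o_{\text{gt}}\to 0$ as $e_{\text{gt}}\to 0$ for all three activations, and $o_{\text{gt}}\,\partial e_{\text{gt}}/\partial o_{\text{gt}}$ stays bounded (for $\exp$ it is $o_{\text{gt}}e_{\text{gt}}\to 0$; for SoftPlus it is $o_{\text{gt}}\,\texttt{Sigmoid}(o_{\text{gt}})\to 0$ as $o_{\text{gt}}\to-\infty$; for ReLU it is identically $0$ since $o_{\text{gt}}\le 0$). Therefore the correction term vanishes and $\partial \mathcal{L}^{\texttt{cor}}/\partial o_{\text{gt}}\to -1$; in the exact zero-evidence case $e_k=0$ for all $k$, one gets $S=K$ and $\partial e_{\text{gt}}/\partial o_{\text{gt}}=0$, so the gradient equals exactly $-1$. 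I would then note that, by the chain rule $(\partial \mathcal{L}^{\texttt{cor}}/\partial w) = (\partial \mathcal{L}^{\texttt{cor}}/\partial o_{\text{gt}})(\partial o_{\text{gt}}/\partial w)$, this nonzero logit-gradient propagates to a nonzero parameter gradient (assuming, as in the standard-classification analysis of Section \ref{sec:appAnalysisStandardClassificationModels}, that $\partial o_{\text{gt}}/\partial w \neq 0$), so the sample does contribute to the update — unlike under the pure evidential loss, whose gradient was shown to be exactly zero in Theorem \ref{supotimalityTheorem}.

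The main obstacle is handling the product-rule term $o_{\text{gt}}\,\partial\lambda_{\text{cor}}/\partial o_{\text{gt}}$ rigorously rather than hand-waving it away: $\lambda_{\text{cor}}=K/S$ genuinely depends on $o_{\text{gt}}$ through $e_{\text{gt}}$, and in the SoftPlus/ReLU cases the zero-evidence limit $o_{\text{gt}}\to-\infty$ makes $o_{\text{gt}}$ unbounded, so one must check that $o_{\text{gt}}\,\texttt{Sigmoid}(o_{\text{gt}})\to 0$ (true, since sigmoid decays exponentially) and is small throughout a neighborhood of the region, not merely in the limit. A secondary subtlety is the non-differentiability of $\mathcal{I}$ at $o_{\text{gt}}=0$ and of ReLU at the origin, which I would dispatch by arguing that the boundary is a measure-zero set and the result concerns the interior of the zero-evidence region where $\mathcal{I}=1$; alternatively one can simply treat $\mathcal{I}$ as a (piecewise-constant) stop-gradient multiplier consistent with how the term is used in practice. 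Assembling these observations yields the claim that $\mathcal{L}^{\texttt{cor}}$ supplies a non-vanishing gradient exactly where the standard evidential losses fail.
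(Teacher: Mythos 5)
Your proposal is correct and follows essentially the same route as the paper: differentiate $\mathcal{L}^{\texttt{cor}}$ with respect to $o_{\text{gt}}$, observe that the indicator is active in the zero-evidence regime, and conclude that the gradient equals (or tends to) $-\lambda_{\text{cor}} = -K/S \to -1$, hence non-vanishing, in contrast to the vanishing evidential-loss gradients of Theorem~\ref{supotimalityTheorem}. The only difference is that you explicitly track the product-rule term $o_{\text{gt}}\,\partial\lambda_{\text{cor}}/\partial o_{\text{gt}}$ and show it vanishes for all three activations, whereas the paper implicitly treats the vacuity weight $\lambda_{\text{cor}}$ as a constant (stop-gradient) coefficient when differentiating; your version is the more rigorous reading of the same argument.
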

\begin{proof}
\bl{The regularization depends only on the logit $o_{\text{gt}}$ of the ground-truth class. Hence,
\[
\frac{\partial \mathcal{L}^{\texttt{cor}}(\mathbf{x},\mathbf{y})}{\partial o_k}\Big|_{k \neq \text{gt}} = 0,
\]
and non–ground-truth nodes receive no update.
Because the indicator $\mathcal{I} = \mathbbm{1}(o_{\text{gt}} < 0)$ activates the term in low-evidence regions, we focus on this case. For $y_{\text{gt}} = 1$, the regularization and its gradient become}
\bl{\begin{align}
    \mathcal{L}^{\texttt{cor}}(\mathbf{x}, \mathbf{y}) &= - \lambda_{\text{cor}} o_{\text{gt}}, \\
    \frac{\partial \mathcal{L}^{\texttt{cor}}(\mathbf{x},\mathbf{y}) }{\partial o_{{gt}}} &= - \lambda_{\text{cor}}.
\end{align}
The vacuity $\lambda_{\text{cor}} = \frac{K}{S}$ lies in $[0,1]$ and achieves its maximum of $1$ when the evidence is zero. Thus, samples mapped to zero-evidence regions receive a gradient of $-1$, provide a meaningful update signal that promotes increased evidence for the ground-truth class. As evidence grows, vacuity decreases, and the influence of the regularization diminishes, allowing the standard evidential losses to guide learning for high-evidence samples.
Hence, the proposed regularization ensures non-zero gradients for zero-evidence samples and restores gradient flow in regions where standard evidential losses alone produce vanishing updates.}
\end{proof}

\subsection{Generalized Regularized Evidential Models}
The correct evidence regularization term in Eq.~\ref{eq:correct_regularization} is expressed in terms of the logit. \bl{When \texttt{SoftPlus} or Exponential activations are used, the regularization can also be written directly in terms of the output evidence because these activations are invertible. This is not the case for \texttt{ReLU}, whose non-invertibility prevents an evidence-based formulation. Theorem~\ref{th:superirorityofExp} indicates that the Exponential activation provides strong gradients for samples near the zero-evidence region, but its output may grow rapidly for large positive logits, making optimization more difficult. To balance these behaviors
 we introduce a novel evidential activation function, referred to as  \textbf{S}hifted \textbf{E}xponential \textbf{L}inear \textbf{Un}it (\texttt{SELU}) that generalizes existing activation functions with some appealing properties:
\begin{align}\label{eqn:selumain}
    e_i = \texttt{SELU}(o_i) = 
    \begin{cases}
        o_i + 1 \quad \text{if} \quad o_i > 0 \\
         \exp(o_i) \quad \text{otherwise} 
    \end{cases}
\end{align}
The activation behaves similarly to the Exponential activation function for negative logits and has the largest gradient (compared to \texttt{SoftPlus} and \texttt{ReLU}) for samples close to the \textit{zero evidence region}. For positive logits, the activation behaves linearly, and the evidence value does not explode as the logit value increases. The output evidence and the corresponding gradient plots from different activation functions are visualized in Figure \ref{fig:activationEvLogandGradPlot}. 
We present evidence-based formulation of the correct evidence regularization for different activation functions in Table \ref{tab:corEvRegForm}.}

\begin{figure}[t!]
    \centering
    \begin{subfigure}[b]{0.23\textwidth}
      \centering
      \includegraphics[width=\linewidth]{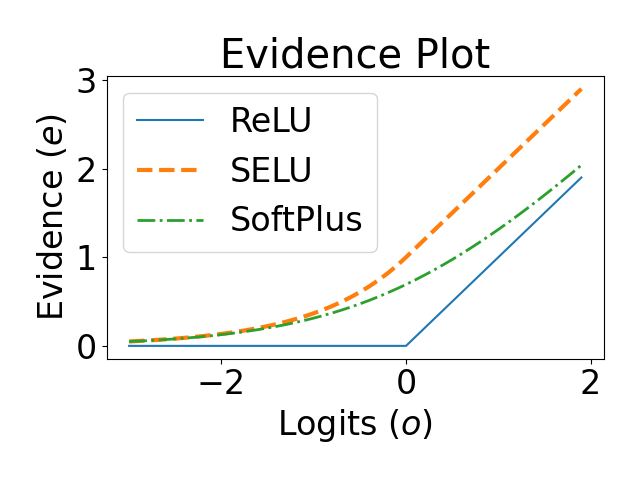} 
      \caption{Evidence-Logit Trend}
    \end{subfigure}
    \begin{subfigure}[b]{0.23\textwidth}
      \centering
      \includegraphics[width=\linewidth]{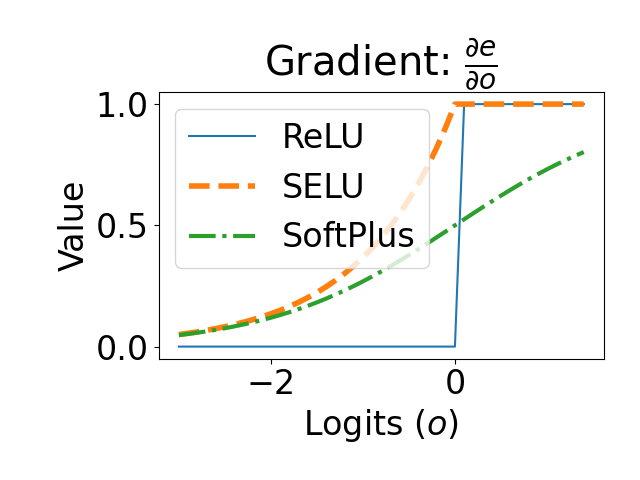} 
      \caption{Gradient Plot} 
    \end{subfigure}
\caption{Output Evidence and Gradient plot of different evidential activations for different Logit values} 
\label{fig:activationEvLogandGradPlot}
\end{figure}

\begin{table}[ht]
\centering
\small
\renewcommand{\arraystretch}{1.3} 
\setlength{\tabcolsep}{1pt} 
\caption{Evidence-Based Regularization}
\label{tab:corEvRegForm}
\begin{tabular}{cc>{\columncolor{black!10}}c} 
\hline 
\bf Activation & \bf Evidence & \bf Regularization ($\mathcal{L}^{\text{cor}}$) \\ 
\hline 
\texttt{ReLU}        & $e_i = \max(0, o_i)$                     & N/A \\ 
\hline 
\texttt{SoftPlus}    & $e_i = \log(\exp(o_i) + 1)$             & $-\mathcal{I}\lambda_{\text{cor}} \log(\exp(e_{\text{gt}}) - 1)$ \\ 
\hline 
\texttt{SELU}        & see Eq.~\ref{eqn:selumain}              & $-\mathcal{I}\lambda_{\text{cor}} \log(e_{\text{gt}})$ \\ 
\hline 
Exponential          & $e_i = \exp(o_i)$                       & $-\mathcal{I}\lambda_{\text{cor}} \log(e_{\text{gt}})$ \\ 
\hline
\end{tabular}
\end{table}

\subsection{Evidential Model Training}\label{sec:evModelTrainingLoss}

We formulate the overall objective used to train the proposed \textbf{G}eneralized \textbf{R}egularized \textbf{e}vidential mo\textbf{d}el (\textbf{RED}). The model is trained to increase evidence for the ground-truth class, reduce evidence for incorrect classes, and ensure that samples in low- or zero-evidence regions receive a meaningful learning signal. The \bl{combined} loss is
\begin{align}\label{eqn:proposedEvidentialModelOverallLoss}
    \mathcal{L}(\mathbf{x},\mathbf{y}) = \mathcal{L}^{\texttt{evid}}(\mathbf{x},\mathbf{y}) + \eta_1 \mathcal{L}^{\texttt{inc}}(\mathbf{x},\mathbf{y}) + \mathcal{L}^{\texttt{cor}}(\mathbf{x},\mathbf{y})
\end{align}
where $\mathcal{L}^{\texttt{evid}}(\mathbf{x},\mathbf{y})$ is the loss based on the evidential framework given by  Eq.~\ref{eqn:evMSEloss}, Eq.~\ref{eqn:evLogloss}, or Eq.~\ref{eqn:evDigammaloss} (See Appendix~\ref{apSec:evidentialProof}), $\mathcal{L}^{\texttt{inc}}(\mathbf{x},\mathbf{y})$ represents the incorrect evidence regularization (See Appendix Section \ref{app:evIncReg}), $ \mathcal{L}^{\texttt{cor}}(\mathbf{x},\mathbf{y})$ represents the proposed novel correct evidence regularization term in Eq.~\ref{eqn:propCorEvReg}, and $\eta_1 = \lambda_1 \times \min(1.0, \text{epoch index}/10) $ controls the impact of incorrect evidence regularization to the overall model training. In this work, we consider the forward-KL-based incorrect evidence regularization given in Eq.~\ref{appeq:klsensoy} based on \cite{sensoy2018evidential}.

\bl{\cyan{Figure \ref{fig:intuitiveFailureOfRelu} provides an intuitive view of learning in the evidence space. Ideally, samples from Class 1 should lie in the blue region (high evidence for Class 1), samples from Class 2 in the green region, and unseen or OOD samples in the zero-evidence region.} Training with $\mathcal{L}^{\texttt{evid}}$ and $\mathcal{L}^{\texttt{inc}}$ encourages these behaviors; however, when a sample is mapped to the zero-evidence region, the gradients of standard evidential losses vanish. Thus, although the true label is available, model does not update its knowledge when training such samples. Samples with low correct evidence and high incorrect evidence may also be driven toward this region (blue and green arrows in Figure~\ref{fig:intuitiveFailureOfRelu}), after which their update becomes inactive under standard evidential losses. This activation-dependent behavior occurs across evidential models.}

\bl{The GRED behavior is illustrated by the red arrows in Figure \ref{fig:intuitiveFailureOfRelu}. The correct evidence regularization is weighted by vacuity and therefore contributes most strongly in the zero-evidence region, where $\mathcal{L}^{\texttt{evid}}$ and $\mathcal{L}^{\texttt{inc}}$ provide no gradients. As evidence increases and vacuity decreases, the influence of $\mathcal{L}^{\texttt{cor}}$ fades, and the standard evidential losses dominate the learning signal. In this way, GRED ensures that samples across all evidence levels contribute to parameter updates while preserving the intended behavior of evidential training for high-evidence regions.}

\section{Experiments}

\bl{We evaluate our method across a broad range of benchmarks and architectures to validate the theoretical analysis, demonstrate the effectiveness of correct evidence regularization, and assess generalization and uncertainty quantification. Our experiments span standard supervised classification, few-shot learning, and a real-world image restoration task.}

\bl{We consider MNIST \cite{lecun1998mnist}, CIFAR-10 and CIFAR-100 \cite{krizhevsky2009learning}, and Tiny-ImageNet \cite{le2015tiny} for classification; $100$-way $1$-shot and $100$-way $5$-shot CIFAR-100 for few-shot learning; and \de{blind face restoration \cite{zhou2022towards} using FFHQ \cite{karras2019style} and CelebA \cite{karras2017progressive}}.}

\bl{To evaluate robustness across architectures, we employ LeNet\cite{lecun1999object} for MNIST, ResNet18 \cite{he2016deep} for CIFAR experiments, and Swin-Transformer \cite{liu2021swin, huynh2022vision} for Tiny-ImageNet. For few-shot learning, we use the transformer-based Visual Prompt Tuning (VPT) framework \cite{jia2022visual}, which adapts large pretrained vision transformers using lightweight prompts—\textit{a setting that stresses uncertainty estimation due to extremely limited supervision}. For blind face restoration, we use the VQGAN/Transformer-based CodeFormer model \cite{zhou2022towards}.}

\bl{We first present experiments that empirically verify the gradient behavior characterized in Section~III. We then evaluate the proposed correct evidence regularization on all datasets and architectures, followed by ablation studies analyzing the contribution of each evidential loss component and the impact on calibration and uncertainty metrics. We then extend the proposed evidential model to few-shot classification and blind face restoration, demonstrating consistent improvements across all settings. We then carry out out-of-distribution analysis of the proposed GRED model with challenging few-shot classification setting. Unless noted otherwise, table mean/std results are averaged over three seeds; training curves show one representative run.}
 Additional ablations, hyperparameter details, and clarifications are provided in the Appendix.

\bl{\subsection{Learning Dynamics and Failures in Evidential Models}}
\paragraph{Sensitivity to the change of the architecture.} 
We first consider a toy illustrative experiment with two frameworks: (1) a standard \texttt{Softmax} model and (2) an evidential model. \bl{Both use a LeNet \cite{lecun1999object} architecture similar to that considered in EDL~\cite{sensoy2018evidential}, with a minor modification to the architecture: dropout is removed. To construct the toy dataset, we randomly select 4 labeled data points from the MNIST training dataset, as shown in Figure \ref{fig:toyDatasetMnist}. For the evidential model, we use \texttt{ReLU} to transform the network outputs to evidence and train the model with the MSE-based evidential loss~\cite{sensoy2018evidential} given in Eq.~\ref{eqn:evMSEloss}, without incorrect evidence regularization. We train both models using only these 4 training data points. }

Figure \ref{fig:compToyMnistEVAccLossTrend} compares the training accuracy and loss trends of the evidential model with the standard softmax model (trained with cross-entropy\bl{). Before training, both models have $0\%$ accuracy and high loss, as expected. For the evidential model, in the first few iterations the accuracy increases to $50\%$, indicating that some samples are being fitted. Afterward, the accuracy plateaus: the evidential model maps two of the training samples to the \textit{zero evidence region}, where the gradients of standard evidential losses vanish. In this toy setting, the model therefore does not fully fit all four training points, empirically reflecting the behavior characterized in Theorem \ref{supotimalityTheorem}. It is also worth noting that the range of the evidential model's loss is significantly smaller than that of the standard model, mainly due to the bounded nature of the evidential MSE loss (\ie it lies in $[0,2]$; see the Appendix for a detailed theoretical analysis). By contrast, the standard model trained with cross-entropy easily fits the trivial dataset, reaching near-zero loss and $100\%$ accuracy after a few iterations. }
\begin{figure}[!t]
    \centering
    \includegraphics[width=.96\linewidth]{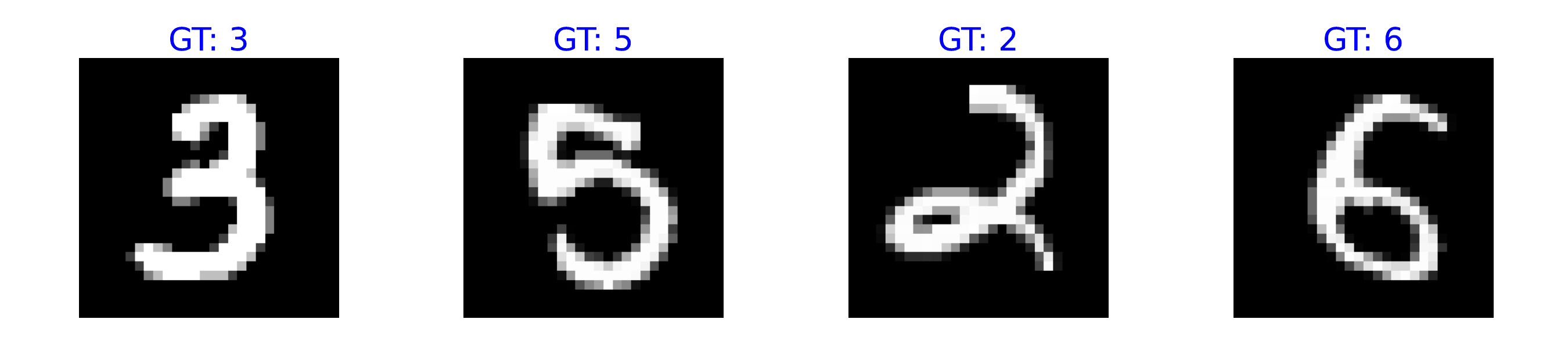}  
    \caption{Toy dataset with 4 data points } 
    \label{fig:toyDatasetMnist}
\end{figure}

\begin{figure}[t!]
    \centering
    \begin{subfigure}[b]{0.23\textwidth}
      \centering
      \includegraphics[width=\linewidth]{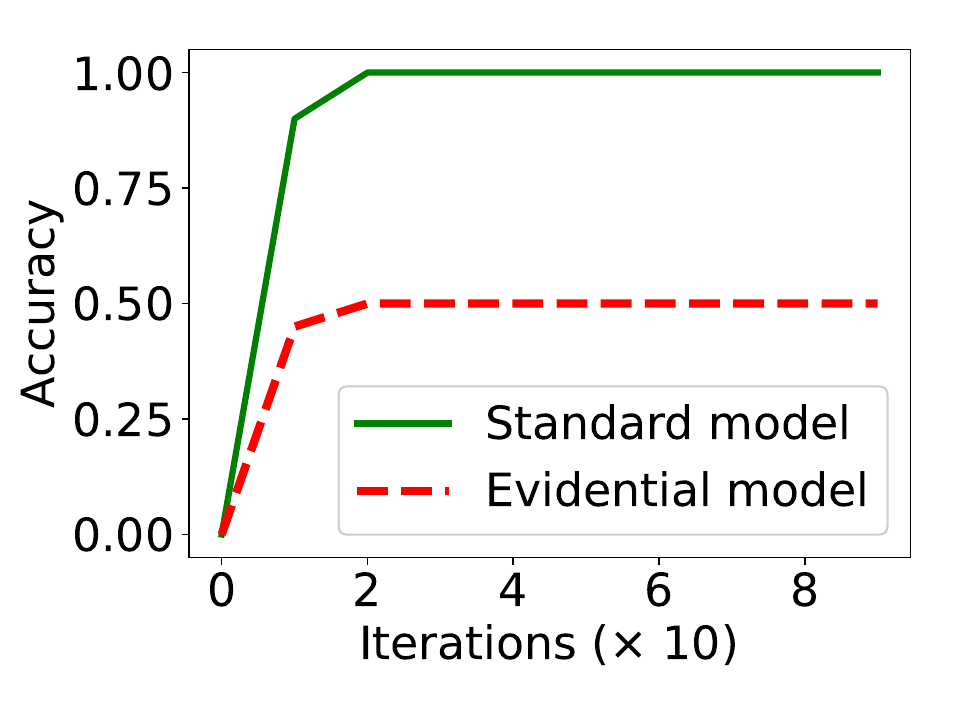} 
      \caption{Training Accuracy Trend}
    \end{subfigure}
    \begin{subfigure}[b]{0.23\textwidth}
      \centering
      \includegraphics[width=\linewidth]{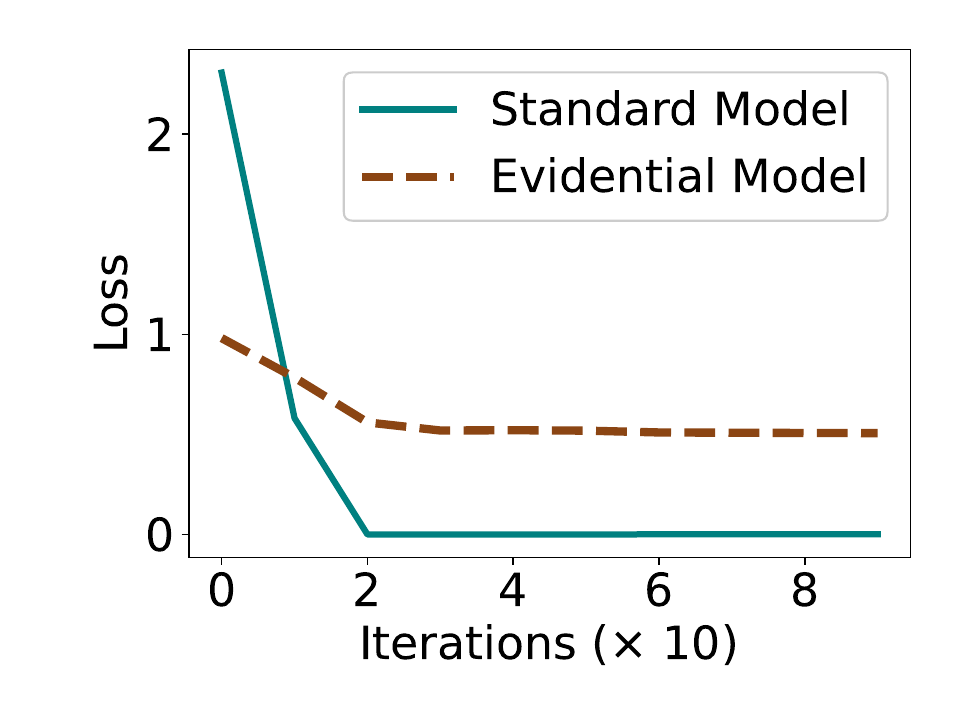} 
      \caption{Training Loss Trend} 
    \end{subfigure}
\caption{Training of standard and evidential models } 
\label{fig:compToyMnistEVAccLossTrend}
\end{figure} 

\begin{figure}[htpb]
    \centering
    \includegraphics[width=.7\linewidth]{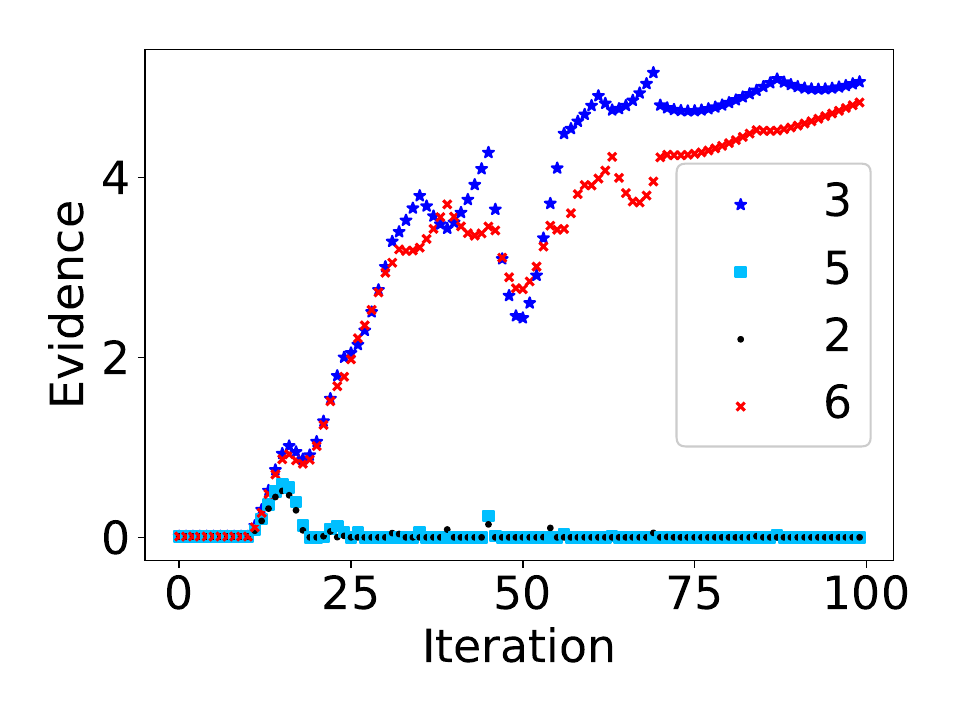}
    \caption{Zero evidence trend during model training} 
    \label{fig:zeroEvidenceTrendVisualization}
\end{figure}

\bl{Additionally, we visualize the total evidence for each training sample in this toy experiment. We plot the total evidence across the first 100 iterations in Figure \ref{fig:zeroEvidenceTrendVisualization}. The evidential model’s predictions are correct for the samples with ground-truth labels 3 and 6, and incorrect for the remaining two. After a few iterations, the latter two samples are mapped to the zero-evidence region, and their total evidence remains near zero. In this regime, the model receives no gradient from these samples, and the overall training accuracy stabilizes at $50\%$ even after 100 iterations. In contrast, the standard model continues to update on all four samples and achieves $100\%$ accuracy. This toy setting highlights how vanishing gradients in the zero-evidence region can affect evidential learning dynamics, even in simple cases.}

\paragraph{Sensitivity to hyperparameter tuning}
Evidential models are trained using evidential losses given in Eq.~\ref{eqn:evMSEloss}, Eq.~\ref{eqn:evDigammaloss}, or Eq.~\ref{eqn:evLogloss}  with incorrect evidence regularization to guide the model for accurate uncertainty quantification. We study the impact of the incorrect evidence regularization strength (\ie hyperparameter $\lambda_1$) on the evidential model's performance using CIFAR-100 experiments. We consider the Type-II Maximum Likelihood loss in Eq.~\ref{eqn:evLogloss} with different $\lambda_1$ to control KL regularization.  As shown in Figure \ref{fig:IncorrectEvRegImpact}, when some regularization is introduced, the evidential model's test performance improves slightly. However, when large regularization is used, the model focuses strongly on minimizing the incorrect evidence, \de{pushing a large number of training data samples to region near the \textit{zero evidence region}}. As can be seen, the generalization performance of evidential models is highly sensitive to $\lambda_1$ values. A similar trend is seen across all the losses and settings (results on other loss functions and settings are presented in the Appendix). \bl{Such incorrect evidence regularization can cause the model to push many training samples into or close to the zero-evidence regions, thereby reducing the effective learning signal from those samples. At the same time, incorrect evidence regularization is essential to correct incorrect acquired evidence and improve uncertainty estimates. Therefore, choosing a reasonable regularization strength is important for achieving accurate uncertainty quantification, especially on challenging datasets and settings, which we present next.}

\begin{figure}[t!] 
\centering
  \includegraphics[width=0.8\linewidth]{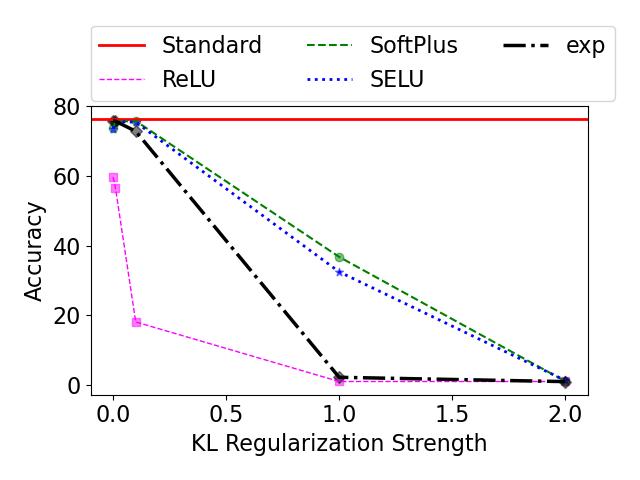}
\caption{Impact of different incorrect evidence regularization strengths to the test set accuracy on CIFAR-100}  
\label{fig:IncorrectEvRegImpact}
\end{figure} 

\paragraph{Challenging datasets and settings.}
We next consider a standard cross-entropy-trained classification model for the CIFAR-100 dataset and construct evidential extensions using the Type-II Maximum Likelihood loss in Eq.~\ref{eqn:evLogloss} and \de{the Bayes risk with cross-entropy loss in Eq.~\ref{eqn:evDigammaloss}} without incorrect evidence regularization, using \texttt{ReLU} to transform logits to evidence. \bl{As shown in Figure \ref{fig:ChallengingFailureEvidential}, compared to the standard classification model, the evidential models exhibit lower predictive performance (around $10\%$–$20\%$ lower for Eq.~\ref{eqn:evLogloss} and Eq.~\ref{eqn:evDigammaloss}, and more than $30\%$ lower for the MSE-based loss in Figure \ref{fig:cifar100MSEComp}). This behavior coincides with many training samples being mapped into or near the \textit{zero evidence region}, where the model expresses high vacuity and the gradients from standard evidential losses vanish. When incorrect evidence regularization is added, more samples can be driven toward the zero-evidence region, which may further reduce predictive accuracy if the regularization is too strong. In such cases, even though correct labels are available, the contribution of those samples to parameter updates becomes negligible once they reach near the zero-evidence region.}

\begin{figure}[t!] 
    \centering
    \begin{subfigure}[b]{0.46\linewidth}
      \centering
      \includegraphics[width=\linewidth]{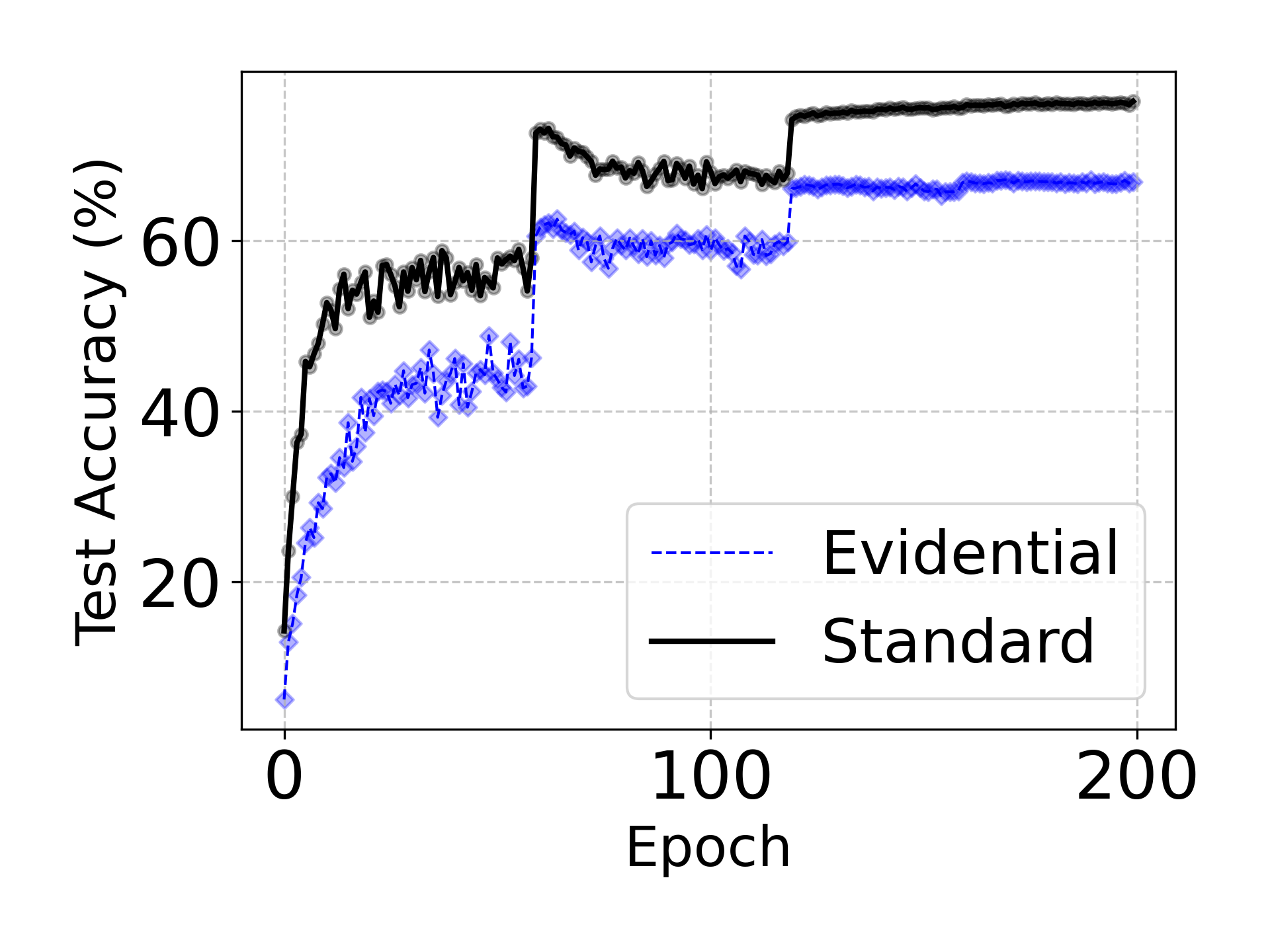}
      \caption{Evid. Log Loss in Eq.~\ref{eqn:evLogloss}}
    \end{subfigure}
    \begin{subfigure}[b]{0.46\linewidth}
    \centering
      \includegraphics[width=\linewidth]{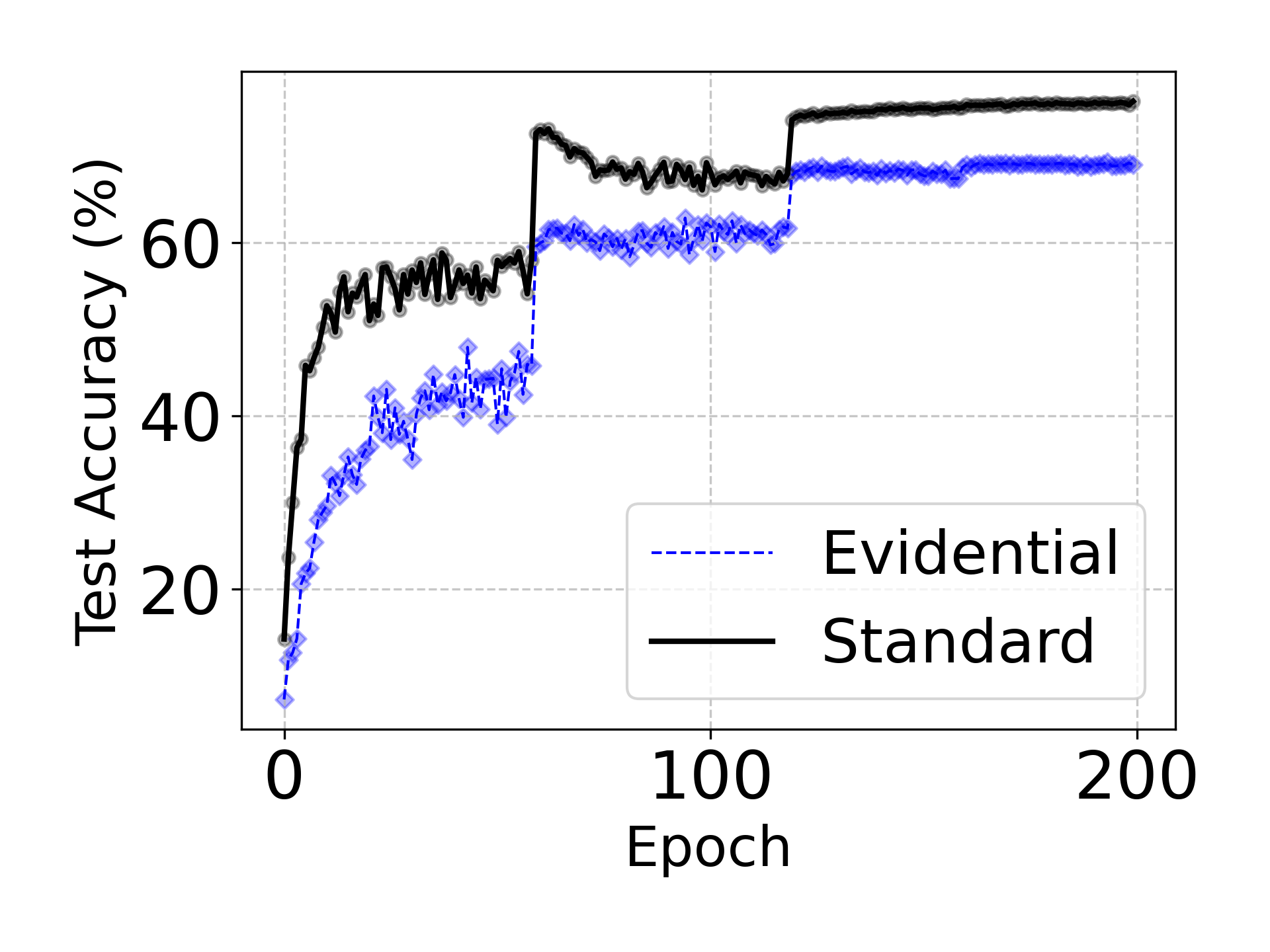}
      \caption{Evid. CE Loss in Eq.~\ref{eqn:evDigammaloss} }
      \label{fig:challengingDatasetsWeakness}
    \end{subfigure}
\caption{Learning trends in CIFAR-100 for standard and evidential models with different evidential losses}
\label{fig:ChallengingFailureEvidential}
\end{figure} 

\paragraph{ \bl{Sub-Optimal Learning} Caused by Incorrect Evidence Regularization}\label{sec:need_kl_edl}

\de{Existing evidential models are theoretically equipped to capture the fine-grained uncertainties through the higher-order conjugate prior distribution over the likelihood distribution. For classification, the evidential models introduce the Dirichlet prior over the multinomial likelihood distribution and train with evidential losses, such as Eq.~\ref{eqn:evLogloss}. Additionally, these evidential models leverage incorrect evidence regularization given in  Eq.~\ref{appeq:klsensoy} to ensure accurate uncertainty quantification, especially in the most challenging settings. 
To more clearly demonstrate the influence of the incorrect evidence regularization, we first consider the FGSM \cite{goodfellow2014explaining} adversarial attack applied to an evidential model with an Exponential activation function trained on CIFAR-100. We employ the evidential log loss, given in Eq.~\ref{eqn:evLogloss} and train the model for 200 epochs to study the accuracy-vacuity trends of the trained model for different strengths of the adversarial attack on the test set.} 

As shown in Fig \ref{fig:acc-vac-challenging-needkl-adv-attack}, as the attack strength increases, the overall accuracy of the model decreases. Since the evidential model is able to quantify the fine-grained uncertainty, we hope it can detect the attack through the predicted uncertainty. 
However, without the incorrect evidence regularization, the predicted vacuity remains low as the attack strength increases, as shown in Fig \ref{fig:acc-vac-challenging-needkl-adv-attack}(b).  
With a larger incorrect evidence regularization, the model becomes aware of its lack of knowledge for adversarial samples and outputs higher vacuity. However, when the incorrect evidence regularization is high, the model's learning capability becomes compromised: Fig \ref{fig:acc-vac-challenging-needkl-adv-attack}(a) shows that a larger $\lambda_1$ leads to a lower accuracy. Towards robust models, adversarial training methods have been developed and could be extended to evidential deep learning models \cite{kopetzki2021evaluating}. However, we observe that adversarial training of evidential models is sensitive to incorrect evidence regularization values (section \ref{subsed:gred_usefulness}, Figure \ref{fig:adv_train_results_evid}), and adversarial training becomes ineffective.
\begin{figure}[t!] 
    \centering
    \begin{subfigure}[b]{0.46\linewidth}
      \includegraphics[width=\linewidth]{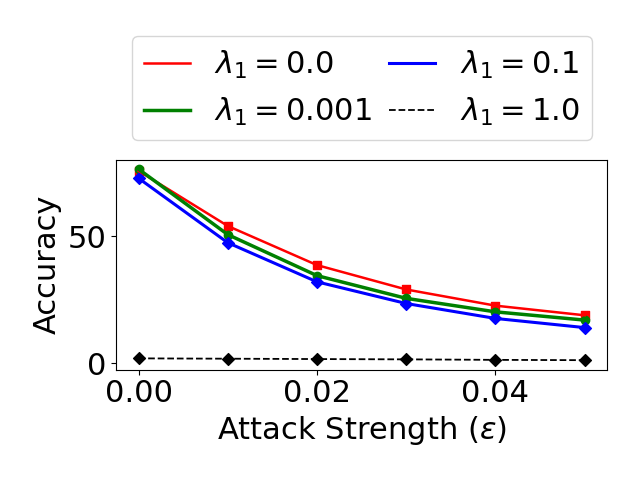}
      \caption{Accuracy Trend}
    \end{subfigure}
    \begin{subfigure}[b]{0.46\linewidth}
      \includegraphics[width=\linewidth]{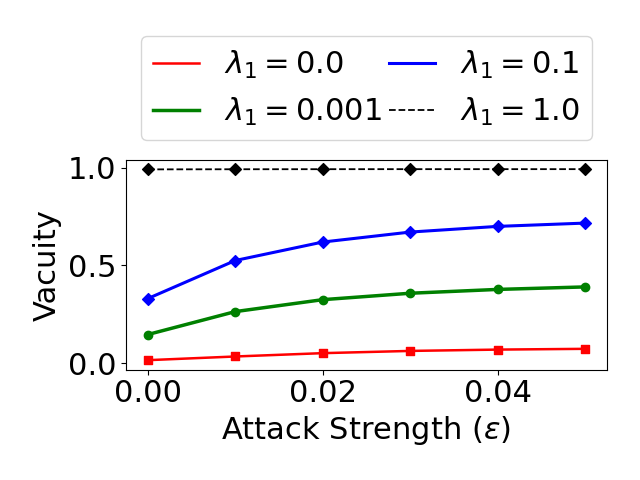}
    \caption{Vacuity Trend}
    \end{subfigure}
\caption{Adversarial attack trends for different incorrect evidence regularization strengths}
\label{fig:acc-vac-challenging-needkl-adv-attack}
\end{figure} 

To further illustrate the need for incorrect evidence regularization, we next present the accuracy-uncertainty results for the $1024-$class classification of the CodeFormer model for the FFHQ dataset. We consider the accuracy of the evidential transformer in the codebook prediction (details of the model are presented in the Appendix \ref{sec:facerestorationdetailsAppendix}) We present the accuracy-vacuity curves for the evidential models trained with and without incorrect evidence regularization term in Figure \ref{fig:acc-vac-challenging-needkl}. For a model with accurate uncertainty information, model's accuracy should be higher on low vacuity predictions. In other words, the model should be accurate on its confident predictions. However, when no incorrect evidence regularization is used, the model is wrongly confident on all code predictions i.e., the uncertainty is not reliable. Moreover, the vacuity is not expressive and is bound on a narrow range of $0.005$ to $0.0015$. With a reasonable incorrect evidence regularization value, \eg $\lambda_1 = 0.01$ for the model training, the accuracy-vacuity curves become more reasonable.  With a larger incorrect evidence regularization strength, model's accuracy in codebook prediction increases with lower vacuity threshold: model is accurate on the most confident predictions. 
However, the incorrect evidence regularization tends to push training samples towards the \textit{zero evidence region}, which hurts the model's training data efficiency, and the generalization capability. 

\begin{figure}[ht!] 
    \centering
    \begin{subfigure}[b]{0.46\linewidth}
      \includegraphics[width=\linewidth]{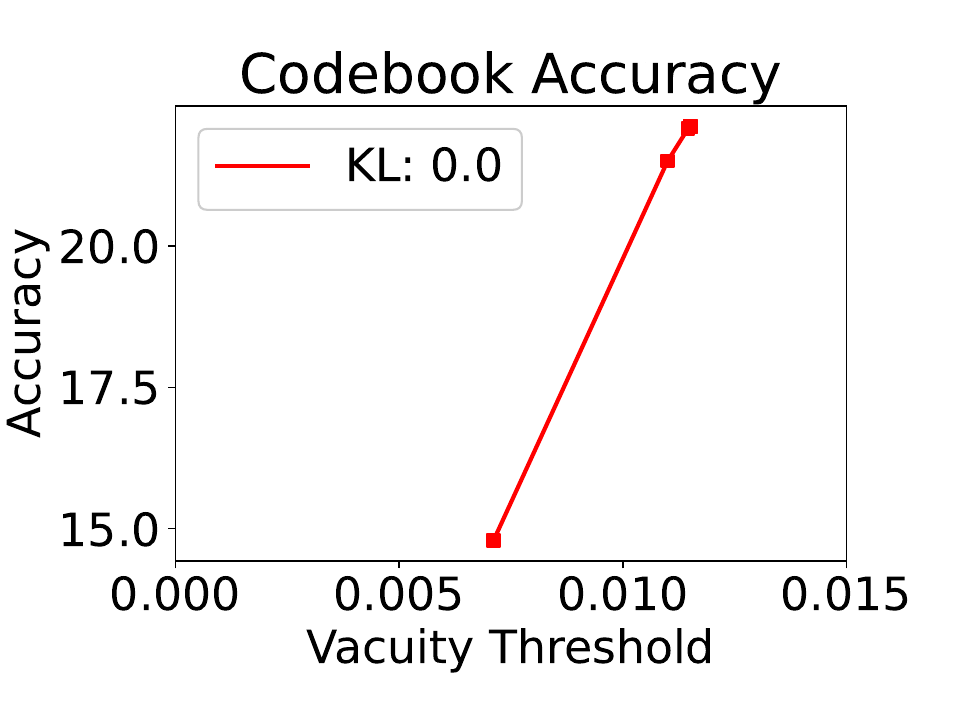}
      \caption{$\lambda_1 = 0$}
    \end{subfigure}
    \begin{subfigure}[b]{0.46\linewidth}
      \includegraphics[width=\linewidth]{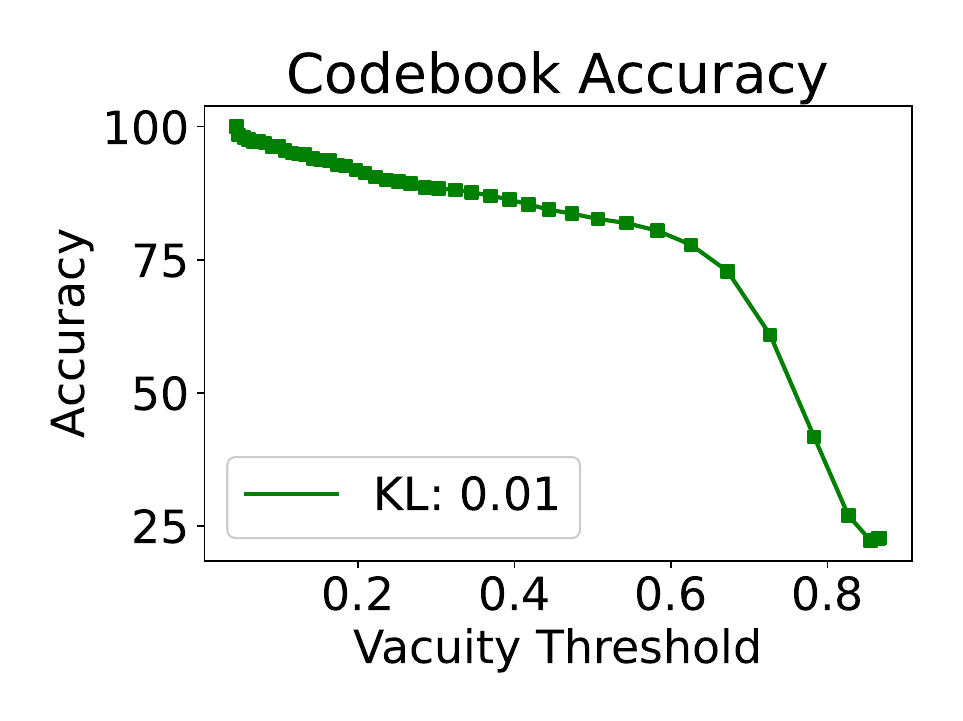}
    \caption{$\lambda_1 = 0.01$}
    \end{subfigure}
\caption{(a) Without Incorrect Evidence Regularization (b) With Incorrect Evidence Regularization of $\lambda_1$=0.01}
\label{fig:acc-vac-challenging-needkl}
\end{figure} 

\subsection{Generalized Regularized Evidential Models (GRED)}
\label{subsed:gred_usefulness}

\bl{We now evaluate the proposed generalized regularized evidential models, which enable evidential networks to learn from all samples, including those mapped to the \textit{zero evidence region}.} We experiment with multiple activation functions and their regularized variants (i.e., trained with the correct evidence regularizer) using the Type-II evidential loss in Eq.~\ref{eqn:evLogloss}. 

\bl{Across all datasets and architectures, introducing the correct evidence regularization consistently improves generalization (Table~\ref{tab:overallResultsStandard}), validating its effectiveness. Figure~\ref{fig:TrendCifar100IncRegMain} further shows that GRED remains stable even under strong incorrect evidence regularization, whereas baseline evidential models degrade because they cannot update on zero-evidence samples. Complete results and hyperparameter details are provided in the Appendix.}

\begin{figure}
\centering
  \includegraphics[width=0.9\linewidth]{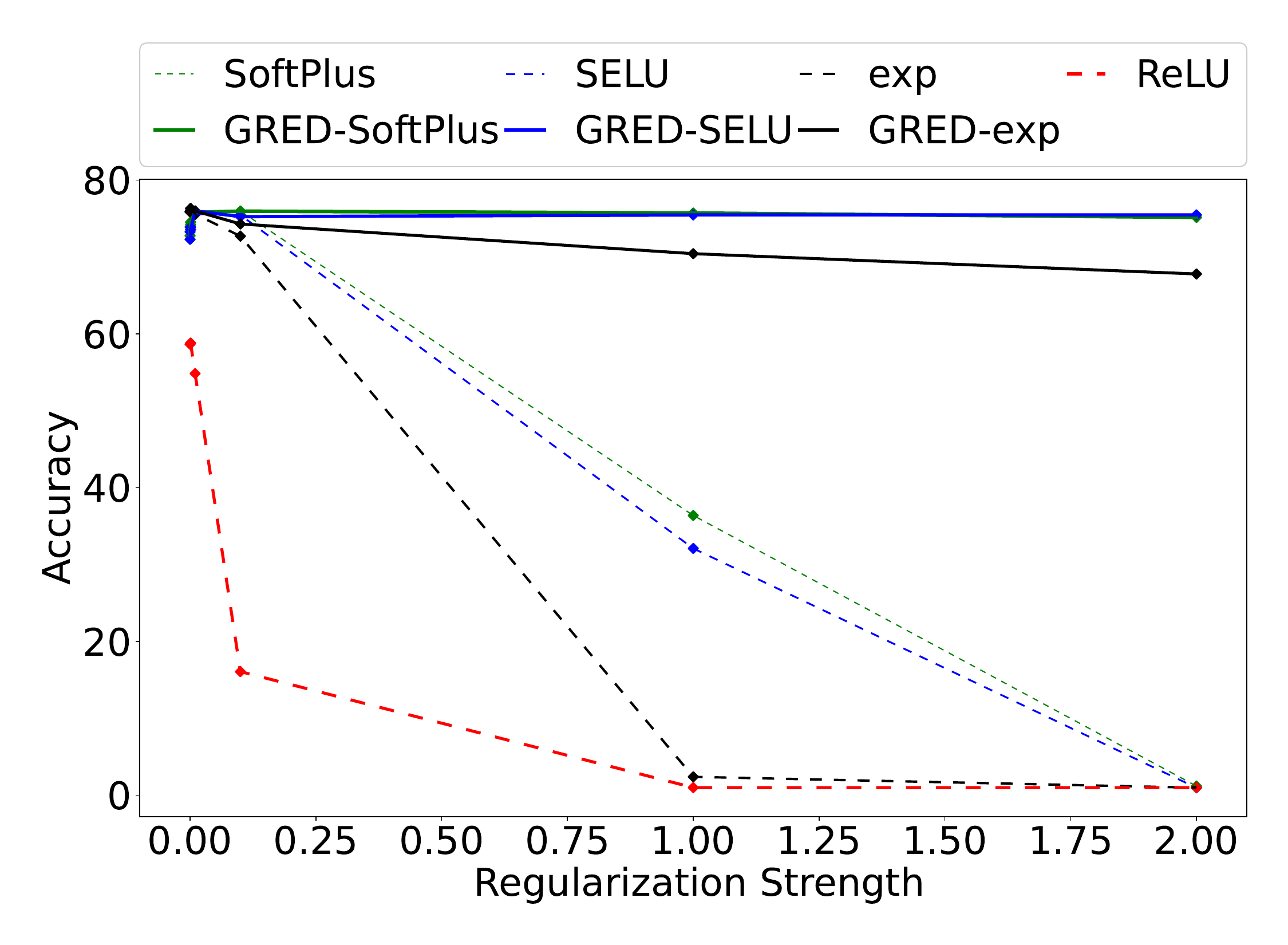}
\caption{Results for different regularization strengths for model trained with evidential log loss in Eq.~\ref{eqn:evLogloss} on CIFAR-100}
\label{fig:TrendCifar100IncRegMain}
\end{figure}

\begin{table*}[ht]
    \centering
    \caption{Evidential Classification Results. Mean and standard deviation are reported for each model after averaging across 3 runs. Bold values indicate the best performance for each dataset.}
    \label{tab:overallResultsStandard}
    \renewcommand{\arraystretch}{1.2}
    \setlength{\tabcolsep}{8pt}
    \begin{tabular}{lcc>{\columncolor{black!10}}cc>{\columncolor{black!10}}cc>{\columncolor{black!10}}c}
    \toprule
    $\bf{Dataset}$ & \texttt{ReLU} & \texttt{SoftPlus} & \bf{GRED}-\texttt{SoftPlus} & \texttt{SELU} & \bf{GRED}-\texttt{SELU} & $\exp$ & \bf{GRED}-$\exp$ \\
    \midrule
    \textbf{MNIST}          & $98.10_{\pm0.01}$  & $98.20_{\pm0.07}$  & $98.61_{\pm0.07}$  & $98.08_{\pm0.16}$  & $98.68_{\pm0.12}$  & $98.83_{\pm0.06}$  & \textbf{98.97}$_{\pm0.08}$ \\
    \textbf{CIFAR-10}       & $19.50_{\pm13.44}$ & $95.15_{\pm0.22}$  & $95.28_{\pm0.10}$  & $95.21_{\pm0.06}$  & $95.32_{\pm0.02}$  & $95.24_{\pm0.12}$  & \textbf{95.44}$_{\pm0.26}$ \\
    \textbf{CIFAR-100}      & $55.50_{\pm3.90}$  & $75.61_{\pm0.16}$  & \textbf{76.09}$_{\pm0.25}$  & $75.50_{\pm0.30}$  & $75.86_{\pm0.12}$  & $75.76_{\pm0.40}$  &  76.00$_{\pm0.09}$ \\
    \textbf{Tiny-ImageNet}  & $33.08_{\pm0.94}$  & $90.25_{\pm0.02}$  & \textbf{90.63}$_{\pm0.06}$  & $90.15_{\pm0.04}$  & $90.58_{\pm0.03}$  & $90.01_{\pm0.06}$  & \textbf{90.63}$_{\pm0.05}$ \\
    \bottomrule
    \end{tabular}
\end{table*}

\bl{We next examine training dynamics on MNIST with two evidential losses (Fig.~\ref{fig:new_Evid_actEvModel}). The $\exp$ activation performs strongest due to its minimal zero-evidence region, and adding correct evidence regularization further improves learning by ensuring that all samples contribute gradients.}

\begin{figure}[ht!] 
    \centering
    \begin{subfigure}[b]{0.46\linewidth}
      \includegraphics[width=\linewidth]{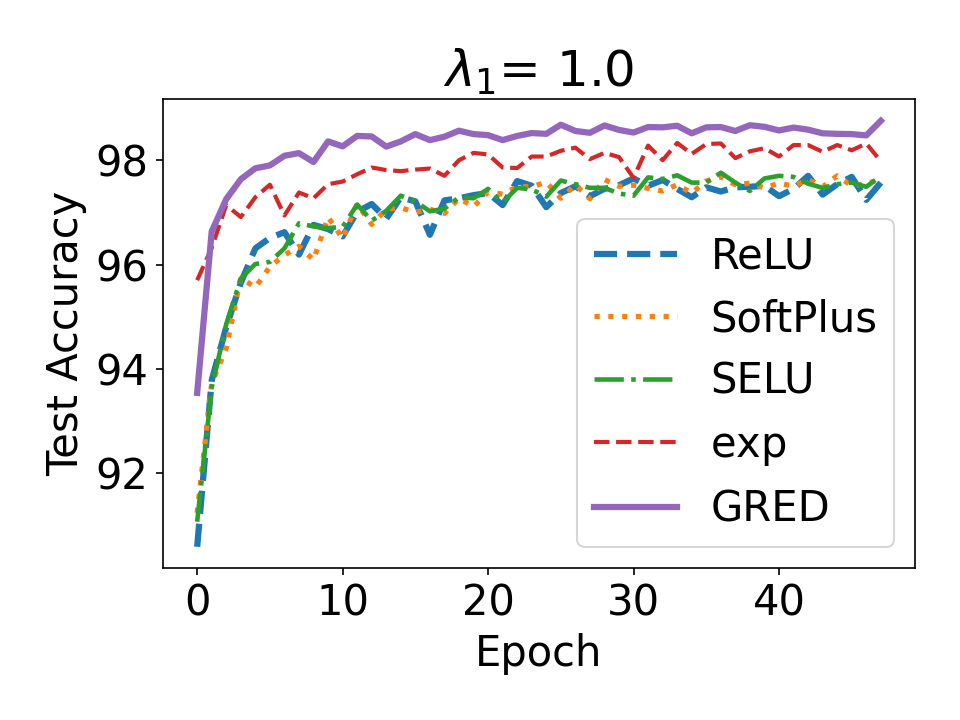}
      \caption{Evidential MSE loss}
    \end{subfigure}
    \begin{subfigure}[b]{0.46\linewidth}
      \includegraphics[width=\linewidth]{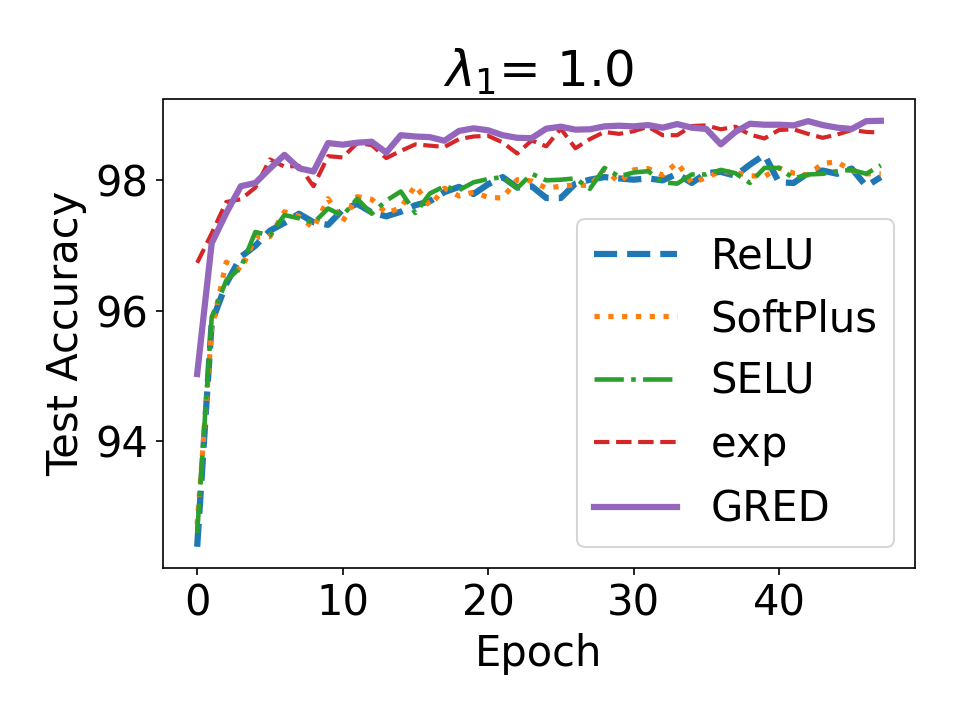}
    \caption{Evidential Log loss}
    \end{subfigure}
    
\caption{Test accuracy with correct evidence regularization}
\label{fig:new_Evid_actEvModel}
\end{figure} 

We further consider the evidential baseline model trained with Type-II Maximum Likelihood-based loss with incorrect evidence regularization strength of $\lambda_1 = 1.0$ and $10.0$. We introduce the proposed novel correct evidence regularization to the evidential model. As can be seen in Figure \ref{fig:CorEvRegImpact}, the model with correct-evidence regularization has superior generalization performance compared to the baseline evidential models in all the cases. This is mainly due to the fact that with proposed correct evidence regularization, the evidential model can also learn from the zero evidence training samples to acquire new knowledge instead of ignoring it. Our proposed model considers knowledge from all the training data and aims to acquire new knowledge to improve its generalization instead of ignoring the samples for which it has no knowledge. Finally, as seen in Figure \ref{fig:CorEvRegImpact} (d), even though strong incorrect evidence regularization hurts the model's generalization, the proposed model with correct evidence regularization is robust and generalizes better, empirically validating our Theorem \ref{th:sovlingzeroevidenceIssue}. Limited by space, we present additional results in Appendix \ref{app:secImpactOfCorrectEvReg} (additional results of CIFAR-100 are presented in the Appendix). 
\begin{figure}[ht!] 
\centering
    \begin{subfigure}[b]{0.46\linewidth}
      \centering
      \includegraphics[width=\linewidth]{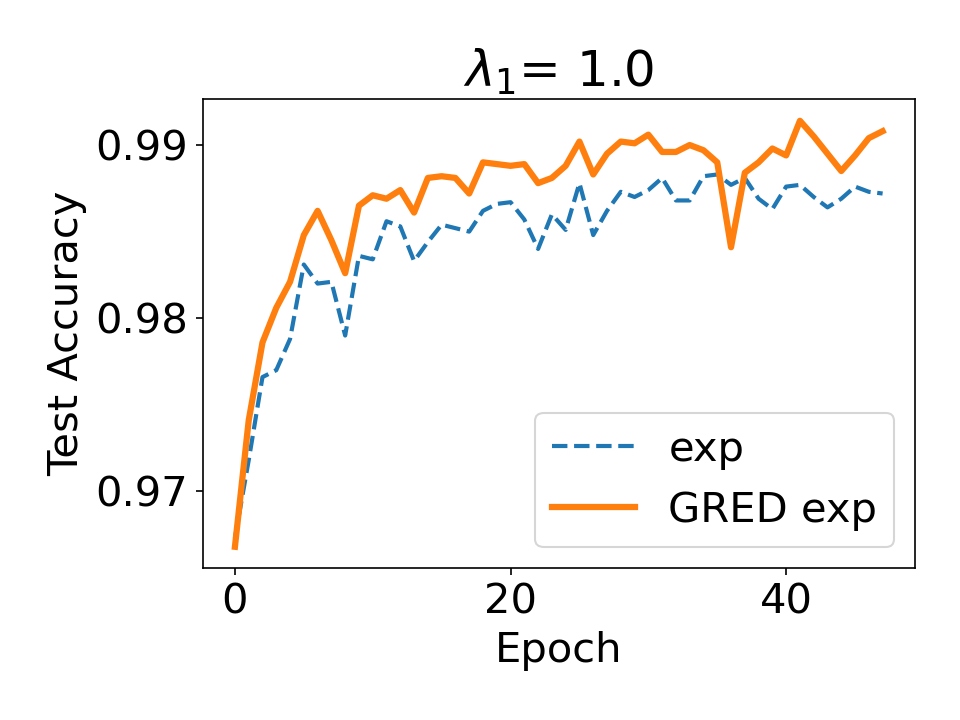}
      \caption{$\exp$-based Model}
    \end{subfigure}
    \begin{subfigure}[b]{0.46\linewidth}
      \centering
      \includegraphics[width=\linewidth]{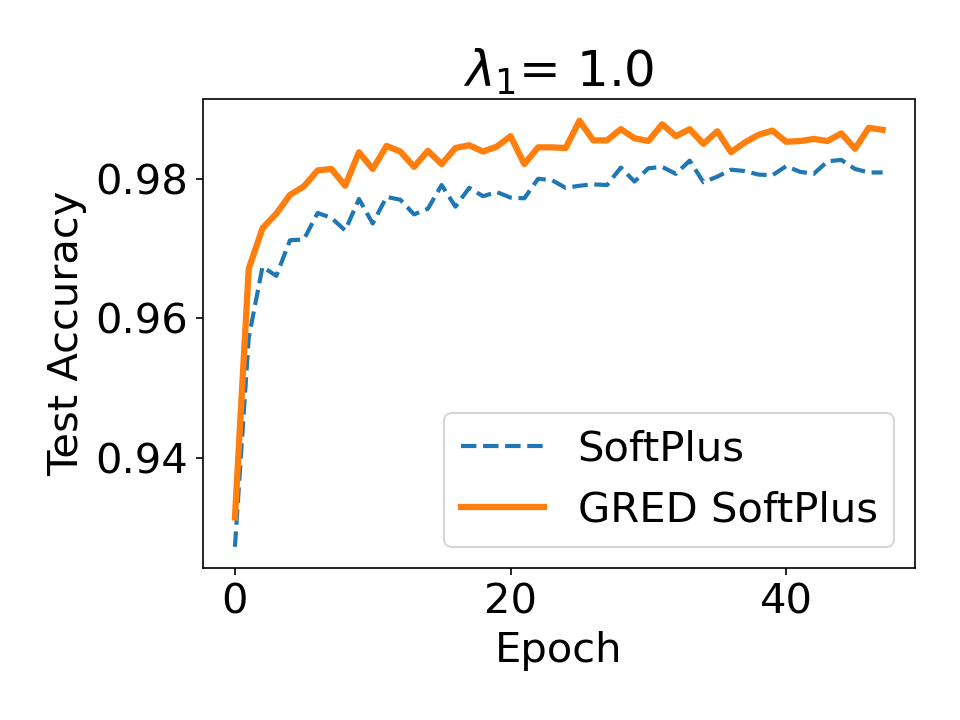}
      \caption{SoftPlus-based Model}
    \end{subfigure}
    \begin{subfigure}[b]{0.46\linewidth}
      \centering
      \includegraphics[width=\linewidth]{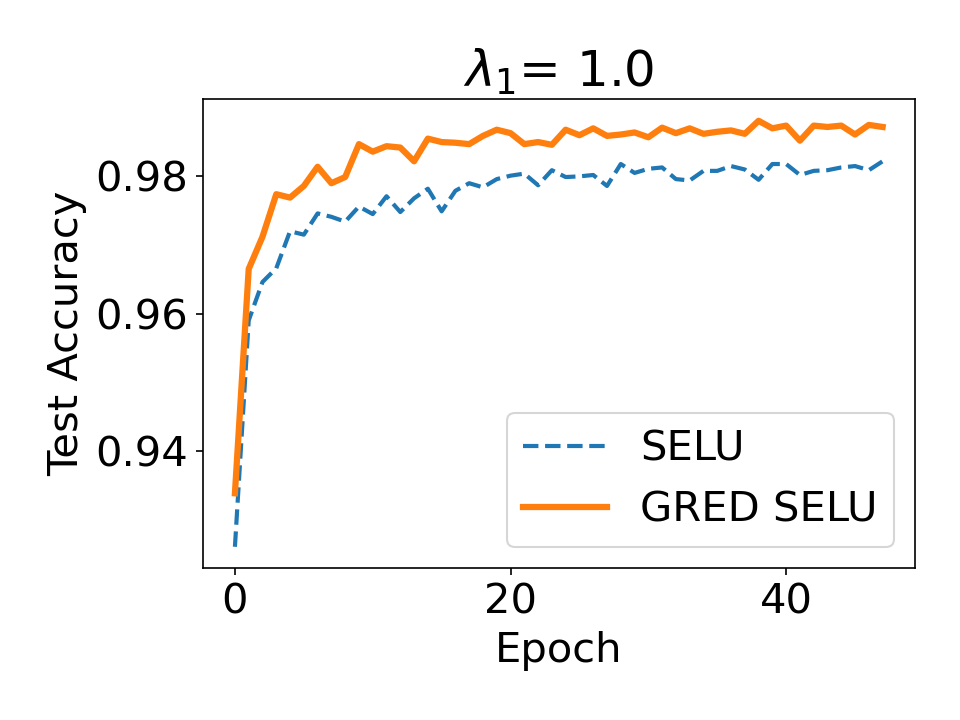}
      \caption{SELU-based Model}
    \end{subfigure}
    \begin{subfigure}[b]{0.23\textwidth}
      \centering
      \includegraphics[width=\linewidth]{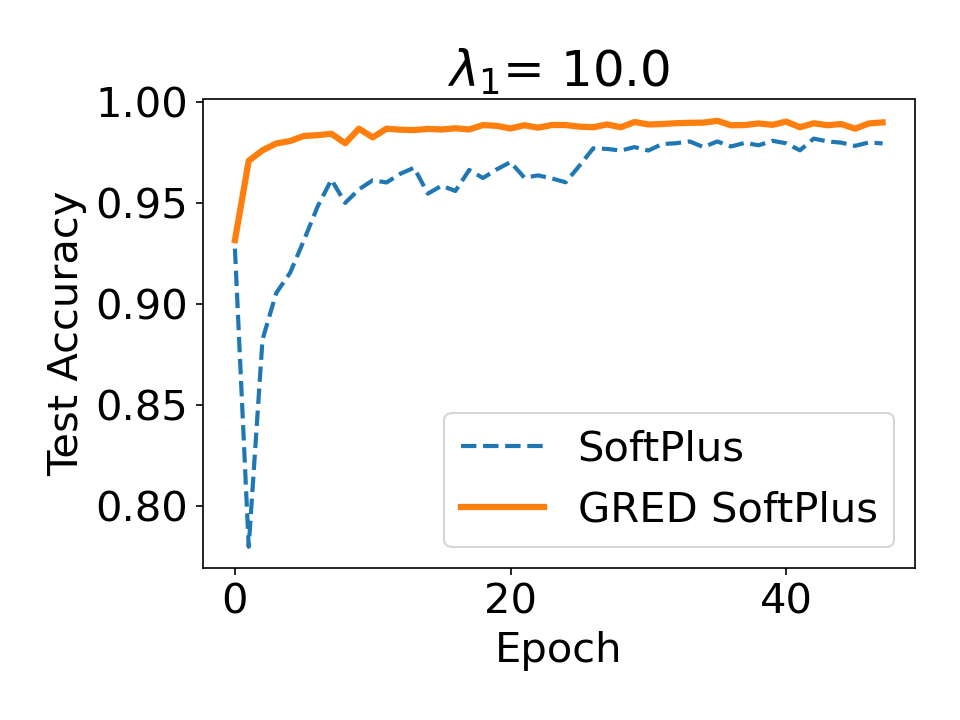}
      \caption{SoftPlus-based Model}
    \end{subfigure}
\caption{Impact of correct evidence regularization to test accuracy for different evidential models}
\label{fig:CorEvRegImpact}
\end{figure} 

\cyan{We evaluate the learning capabilities of evidential models under adversarial training. Specifically, we train an evidential model with an $\exp$ activation function using the evidential Type-II Maximum Likelihood-based loss, and incorrect evidence regularization strength of $\lambda_1 = 0.1$ and $1.0$ on the CIFAR-100 dataset. For generating adversarial samples, we apply the FGSM method with an attack strength of $\epsilon = 0.05$(additional details are provided in the Appendix \ref{apsubsec:hyperparameterdetails})}.\cyan{ The performance of the evidential models trained with incorrect evidence regularization values of $0.1$ and $1.0$ on the adversarial test set is presented in Fig. \ref{fig:adv_train_results_evid}. The evidential model struggles to learn from all training samples, 
resulting in poor performance on the adversarial test set. Meanwhile, using the proposed evidence regularizer enables the model to learn from all samples, leading to decent performance on the test dataset}
\begin{figure}[ht!] 
\centering
    \begin{subfigure}[b]{0.46\linewidth}
      \centering
      \includegraphics[width=\linewidth]{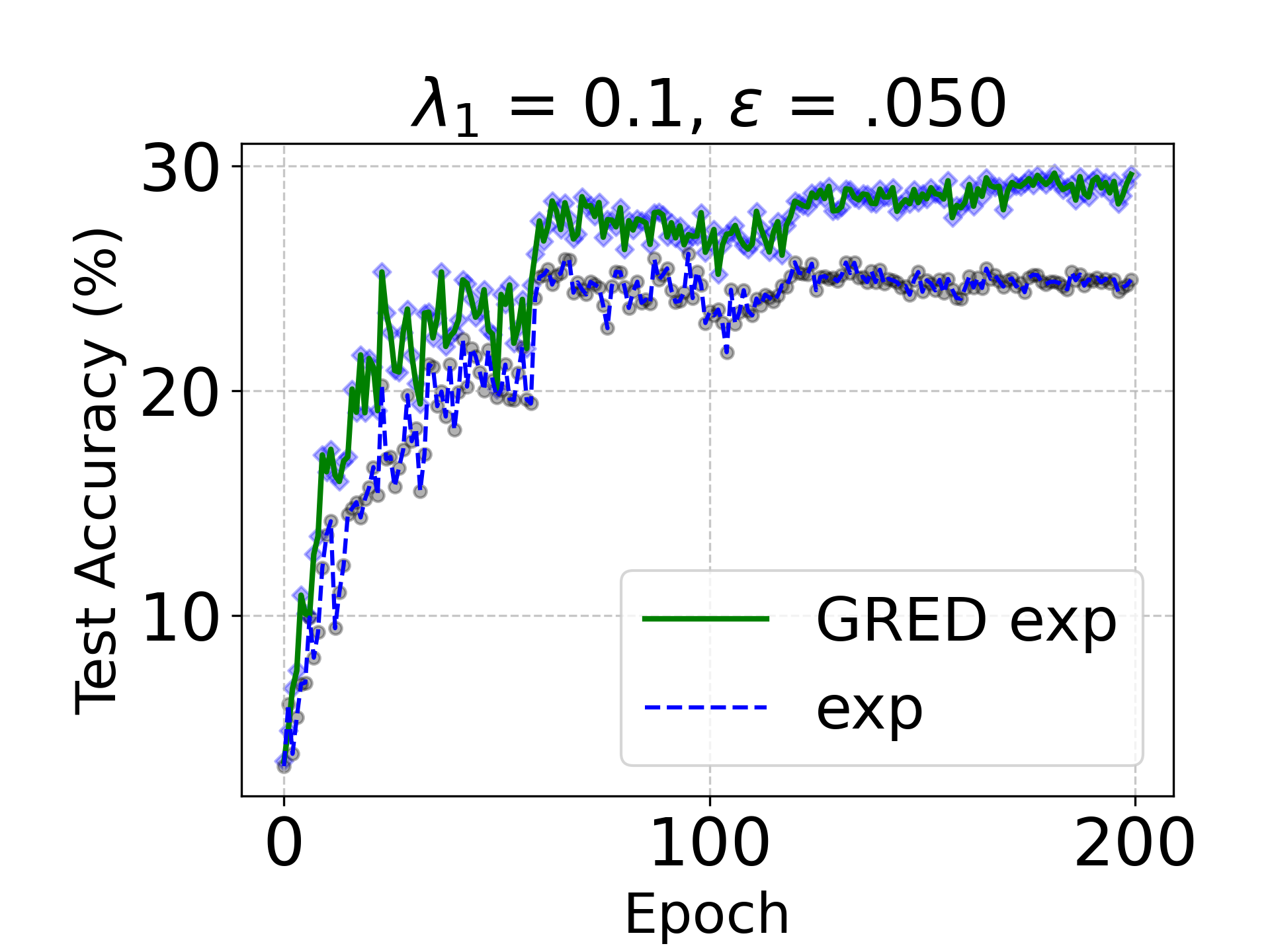}
      \caption{$\lambda = 0.1$}
    \end{subfigure}
    \begin{subfigure}[b]{0.23\textwidth}
      \centering
      \includegraphics[width=\linewidth]{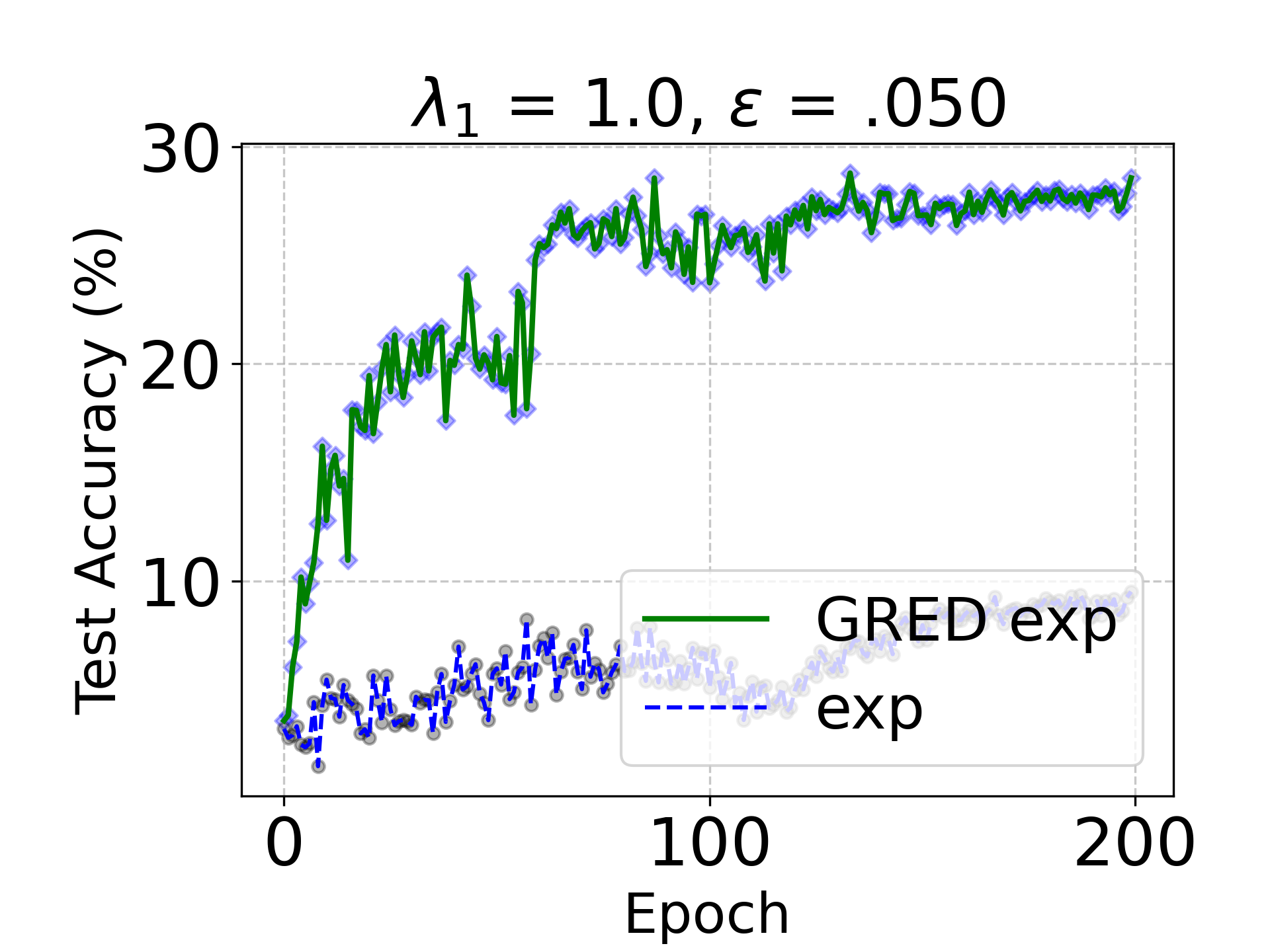}
      \caption{$\lambda = 1.0$}
    \end{subfigure}
\caption{Impact of proposed regularization $\mathcal{L_{\texttt{cor}}}$ to Adversarial Training of evidential models}
\label{fig:adv_train_results_evid}
\end{figure}

\subsection{Ablation Study}
\paragraph{Impact of loss function.}
We \bl{first} examine the effect of different evidential losses on MNIST using the $\exp$ activation and incorrect-evidence regularization strengths $\lambda_1 \in \{0,1\}$. \bl{Training with the evidential MSE loss (Eq.~\ref{eqn:evMSEloss}) consistently yields lower test performance than the other two losses (Eq.~\ref{eqn:evDigammaloss} and Eq.~\ref{eqn:evLogloss}). This behavior is expected because the evidential MSE loss is bounded in $[0,2]$, which restricts gradient magnitude and slows learning. Additional activation-wise comparisons and theoretical discussion are provided in the Appendix.}

\bl{Unless otherwise stated, subsequent experiments use the $\exp$ evidential activation with the Type-II Maximum Likelihood loss (Eq.~\ref{eqn:evLogloss}), which offers stable optimization and a clearer probabilistic interpretation. A deeper comparison between Eq.~\ref{eqn:evDigammaloss} and Eq.~\ref{eqn:evLogloss} is left for future work.}

\begin{figure}[ht!] 
\centering
\begin{subfigure}[b]{0.46\linewidth}
  \includegraphics[width=\linewidth]{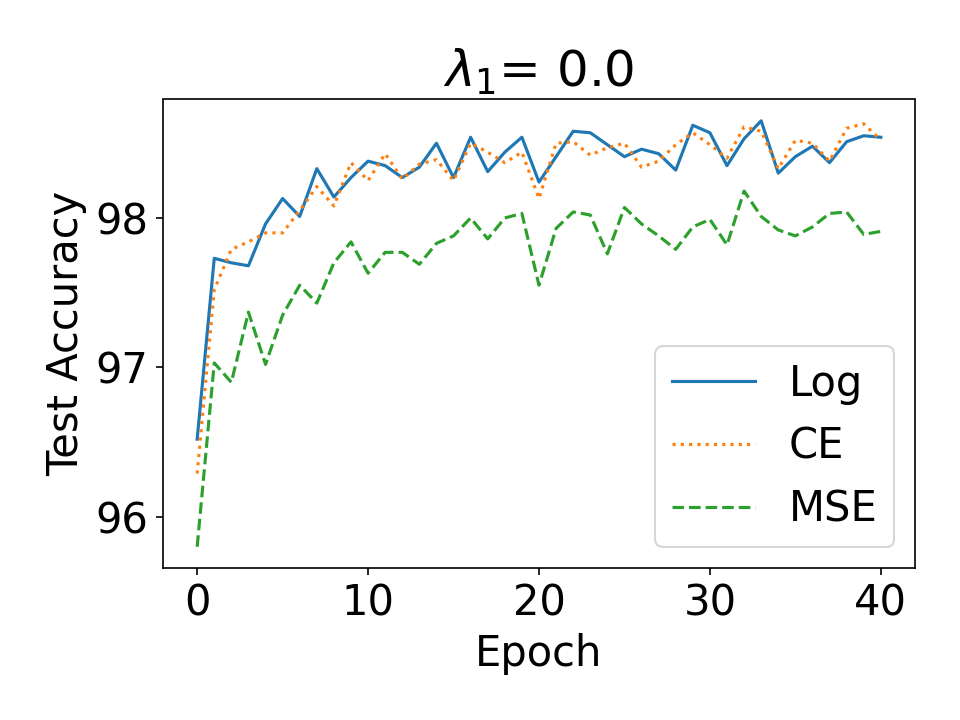}
  \caption{Trend for $\lambda_1 = 0.0$}
\end{subfigure}
\begin{subfigure}[b]{0.46\linewidth}
  \includegraphics[width=\linewidth]{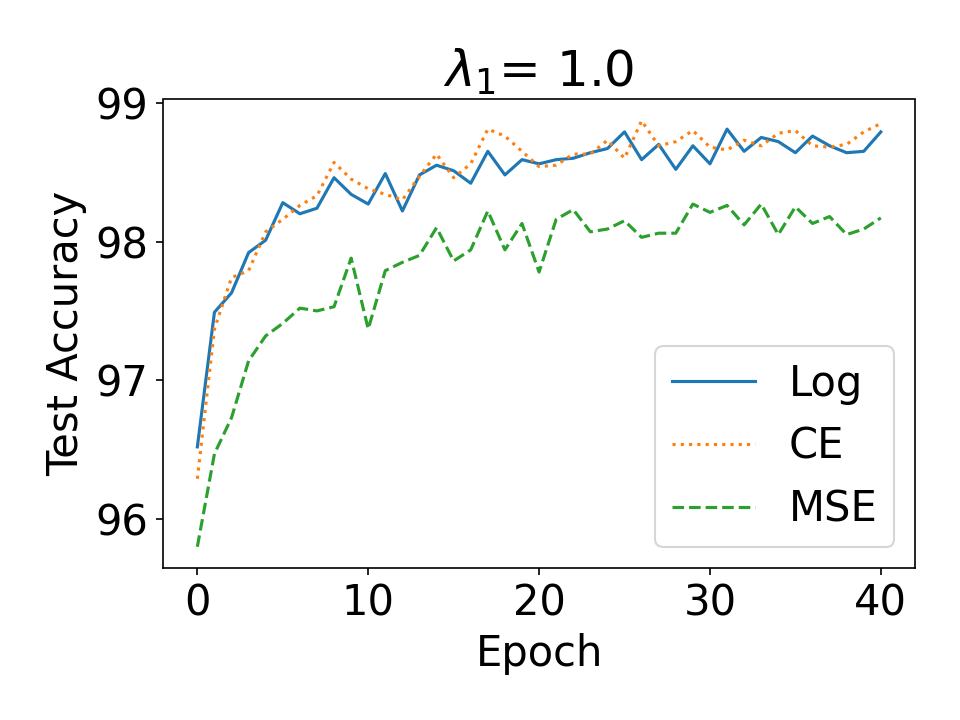}
\caption{Trend for $\lambda_1 = 1.0$}
\end{subfigure}
\caption{Impact of evidential losses on test set accuracy}
\label{fig:LossImpactEvidentialModel}
\end{figure}

\paragraph{\bl{Uncertainty vs Accuracy Trends}}

\bl{We next analyze the uncertainty behavior on CIFAR-100. Figure~\ref{fig:accVacuityCurve} shows accuracy–vacuity curves for different incorrect-evidence regularization strengths $\lambda_1$.} Vacuity reflects the lack of confidence in the predictions, and the accuracy of an effective evidential model should increase with a lower vacuity threshold. Without any incorrect evidence regularization (\ie $\lambda_1 = 0$), the evidential model is highly confident in its predictions and all test sample predictions are concentrated on the low vacuity region. As the incorrect evidence regularization strength is increased, the model outputs more accurate confidence in the predictions. Strong incorrect evidence regularization hurts the generalization over the test set as indicated by low accuracy when all test samples are considered (\ie vacuity threshold of 1.0). Our correct-evidence regularized evidential model shows reasonable uncertainty behavior: the model's test set accuracy increases as the vacuity threshold is decreased. 
 \begin{figure}[ht!] 
\centering
  \includegraphics[width=0.7\linewidth]{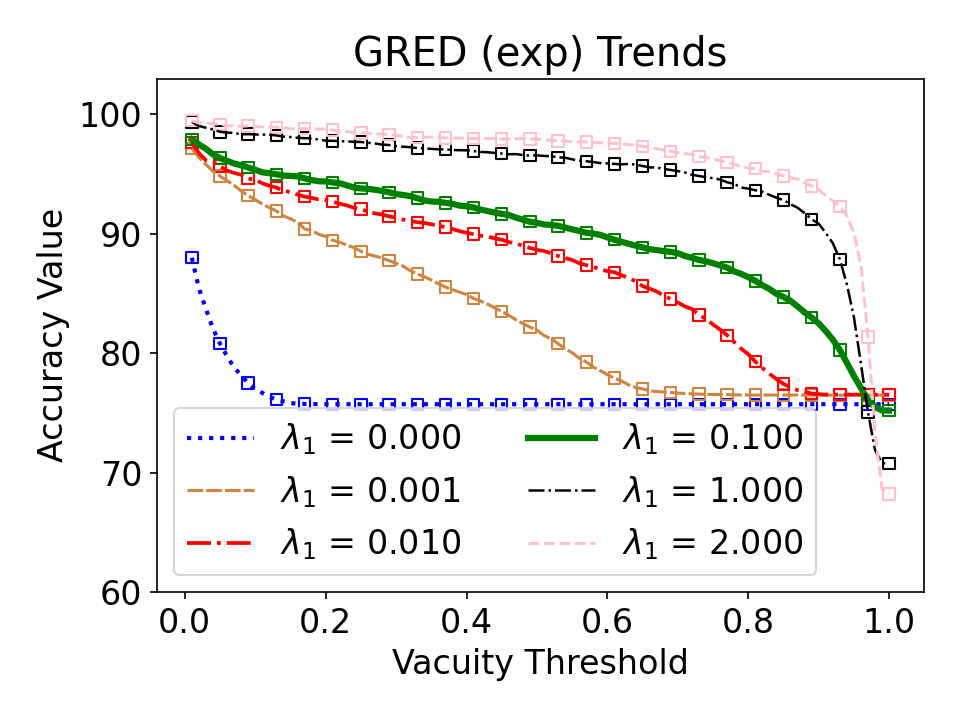}
\caption{Accuracy-Vacuity curve }
\label{fig:accVacuityCurve}
\vspace{-1mm}
\end{figure}

\bl{We further evaluate accuracy on the top-$K\%$ most confident test predictions (Table~\ref{tab:topKConfidentAccuracyTrend}). For example, among the top 20\% most confident samples, the proposed GRED model achieves 99.40\% accuracy, outperforming all baseline models. Across all values of $K$, GRED matches or exceeds the strongest baseline, demonstrating that the improved uncertainty estimates translate into better ranking of prediction confidence.}

\begin{table}[ht]
\centering
\small
\setlength{\tabcolsep}{5pt} 
\renewcommand{\arraystretch}{1.2} 
\caption{Accuracy on Top-K$\%$ Confident Samples (\%)}
\label{tab:topKConfidentAccuracyTrend}
\begin{tabular}{lcccccc}
\toprule
\bf{Model} & \bf 10\% & \bf 20\% & \bf 50\% & \bf 70\% & \bf 80\% & \bf 100\% \\
\midrule
\texttt{ReLU}         & 99.20  & 98.45  & 90.60  & 77.74  & 70.28  & 55.50  \\
\texttt{SELU}         & 99.30  & 99.00  & 96.02  & 89.87  & 85.84  & 75.50  \\
\texttt{SoftPlus}     & 98.70  & 98.65  & 95.94  & 89.91  & 85.80  & 75.61  \\
\texttt{Exp}          & 99.40  & 99.20  & \bf 96.54  & \bf 90.46  & 86.19  & 75.76  \\
\midrule
\rowcolor{black!10} \bf{GRED}-Exp  & \bf 99.60  & \bf 99.40  & 96.40  & \bf 90.46  & \bf 86.26  & \bf 76.00  \\
\bottomrule
\end{tabular}
\vspace{-3mm} 
\end{table}

\bl{\paragraph{Calibration Analysis.}
We evaluate calibration using Expected Calibration Error (ECE) on the CIFAR-100 test set. Figure~\ref{fig:calibration_analysis} reports ECE trends across KL regularization strengths and the reliability diagrams. Across most settings, GRED achieves calibration on par with vanilla EDL, with marginal improvements in some hyperparameter values. As KL regularization increases, both models become increasingly underconfident, reflected by rising ECE values as seen in Figure~\ref{fig:calibration_analysis}(b)-(d) reliability diagrams.}

\bl{At large KL values (e.g., $\lambda_1 = 1.0$), EDL \cite{sensoy2018evidential} exhibits a deceptively low ECE (1.6\%) due to model collapse: its accuracy drops to 34.3\%, and the model outputs near-uniform predictions (“I don't know”) for many test samples. In contrast, GRED maintains functional predictive performance (70.`\% accuracy, 26.4\% ECE). This highlights how ECE alone can be misleading when a model collapses to high-vacuity predictions. Addressing the broader underconfidence trend observed in evidential models is an important direction for future work.}

\bl{
\begin{figure}[ht!]
\centering
\begin{subfigure}[b]{0.46\linewidth}
  \includegraphics[width=\linewidth]{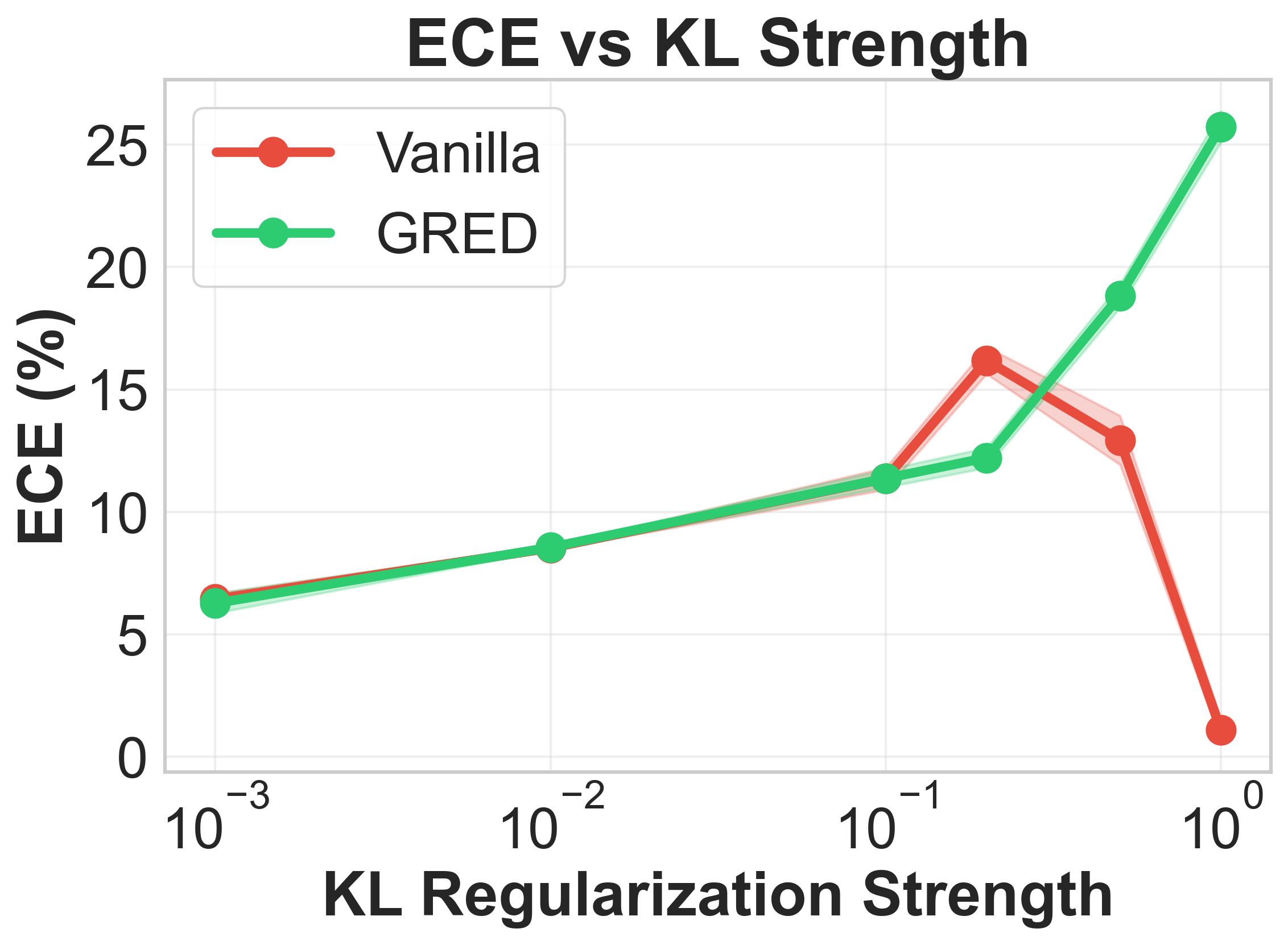}
  \caption{ECE vs KL strength}
\end{subfigure}
\begin{subfigure}[b]{0.46\linewidth}
  \includegraphics[width=\linewidth]{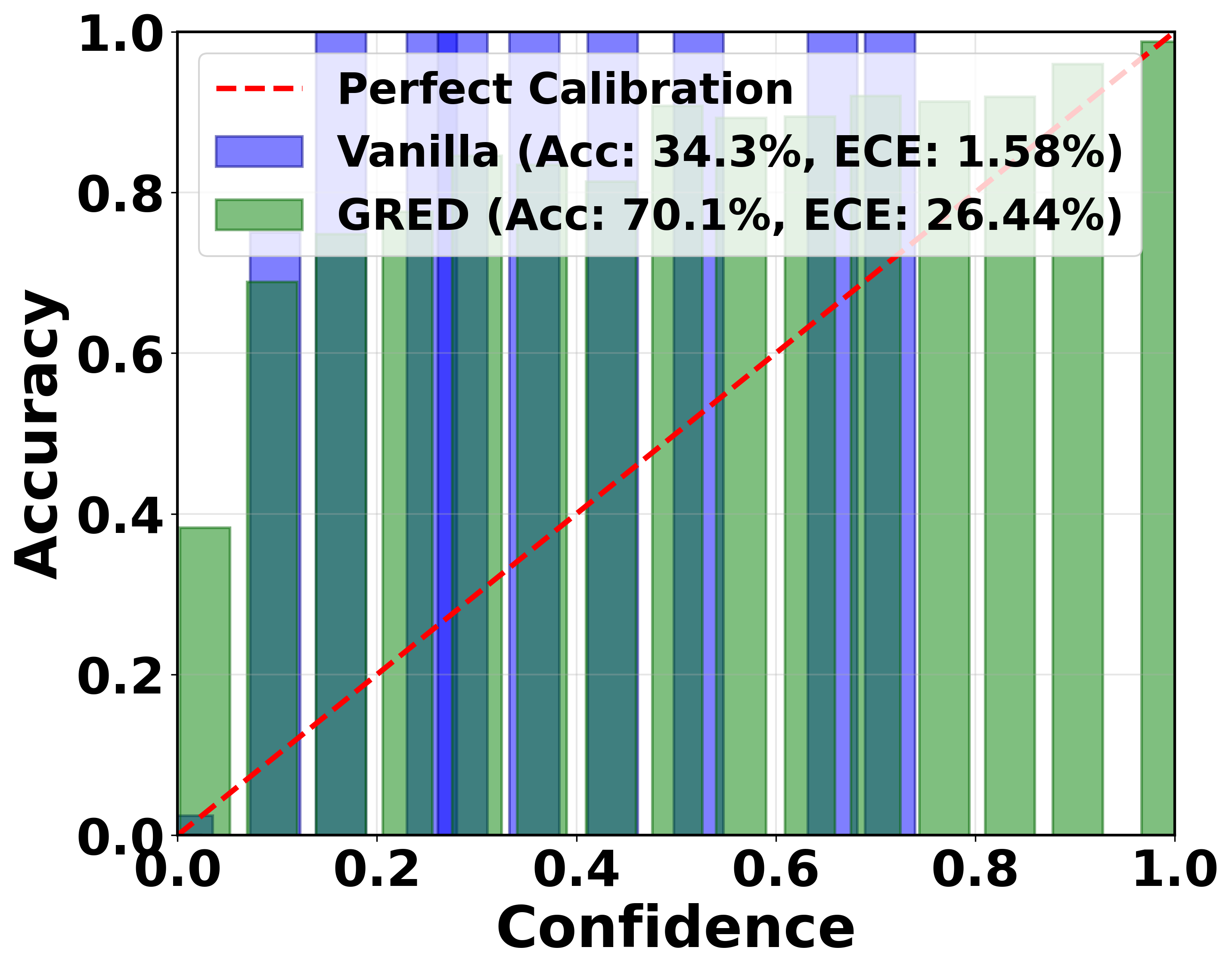}
  \caption{KL = 1.0 }
\end{subfigure}
\begin{subfigure}[b]{0.46\linewidth}
  \includegraphics[width=\linewidth]{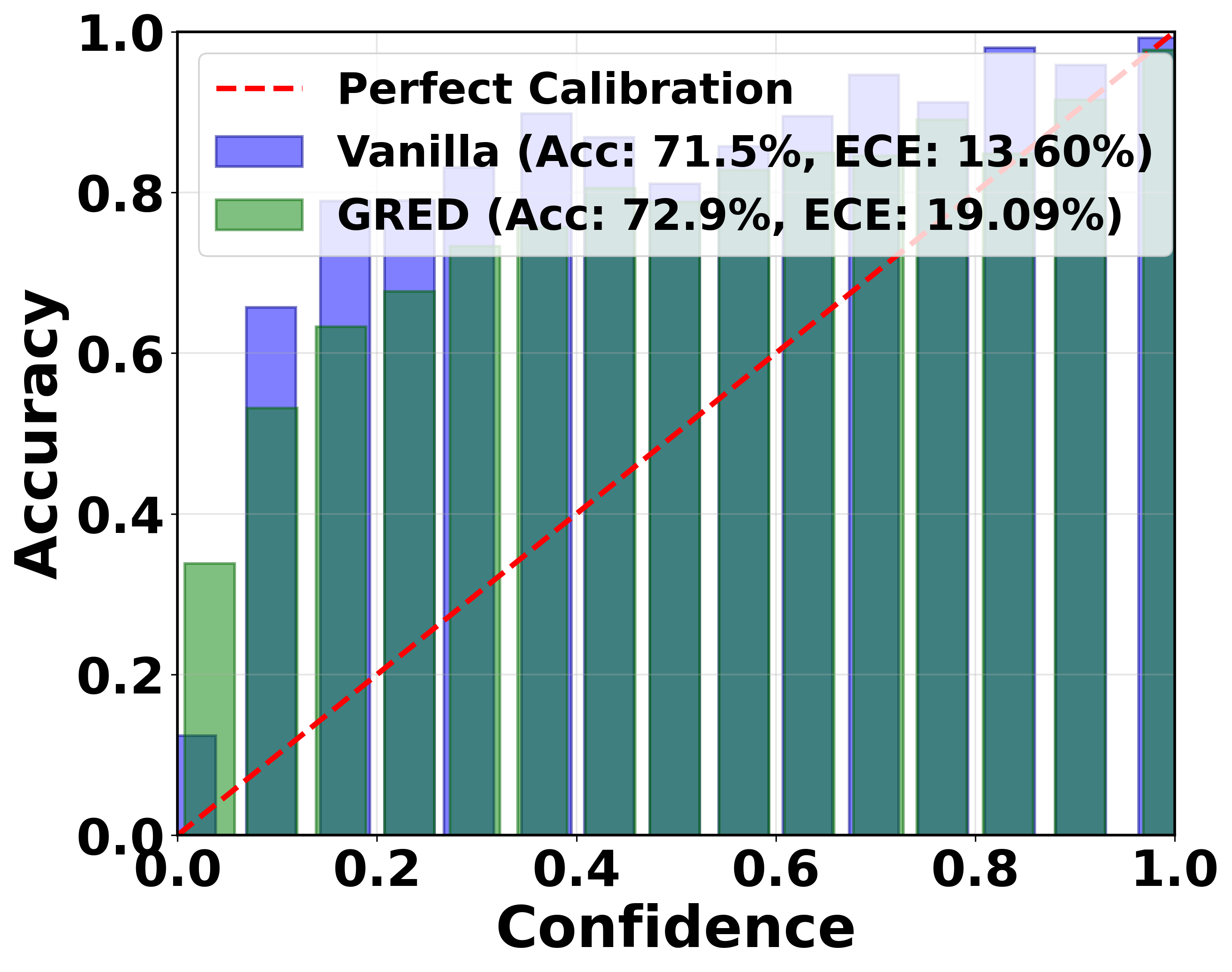}
  \caption{KL = 0.5 }
\end{subfigure}
\begin{subfigure}[b]{0.46\linewidth}
  \includegraphics[width=\linewidth]{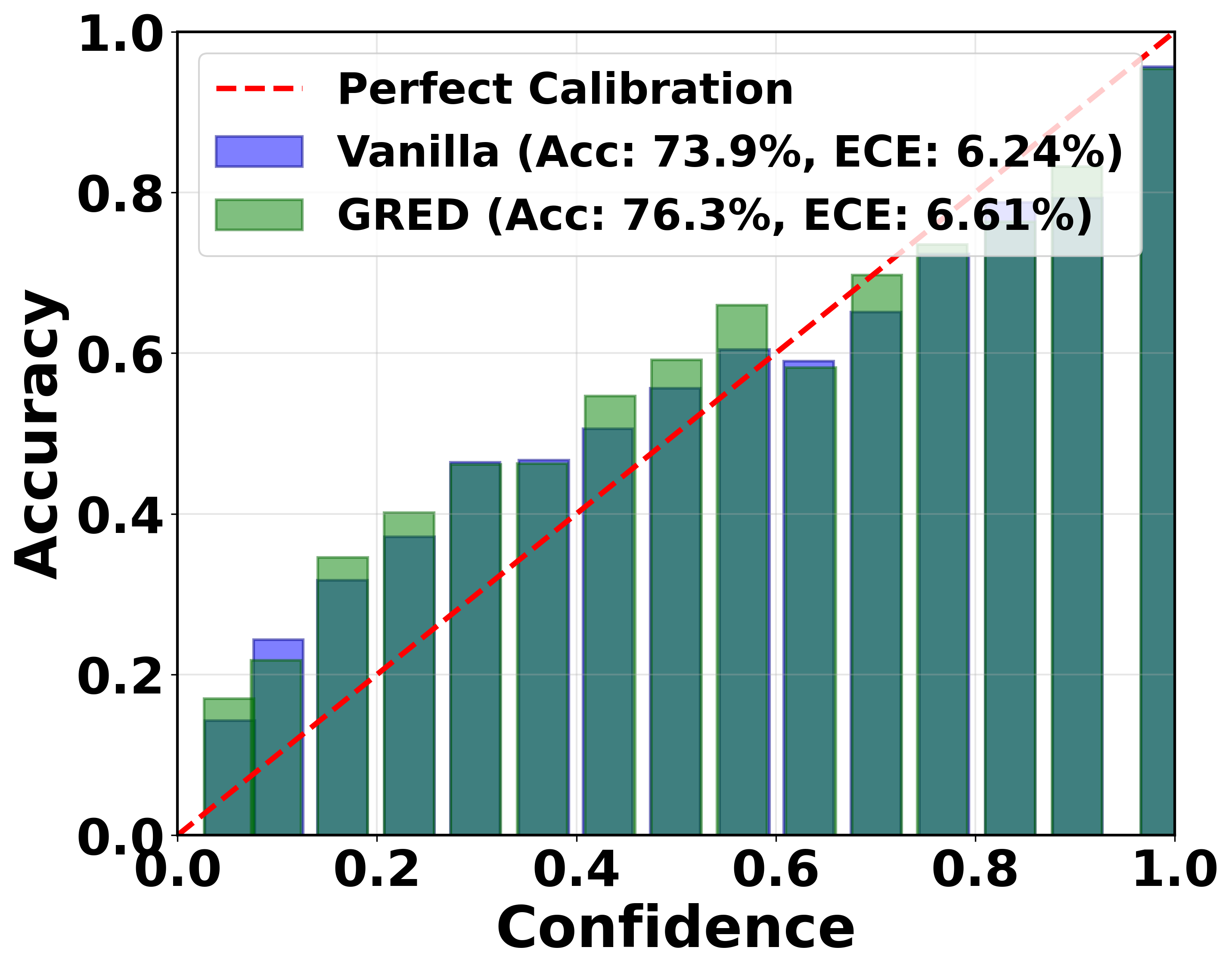}
  \caption{KL = 0.001 }
\end{subfigure}
\caption{Calibration analysis with log loss. (a) ECE trends (b)-(d) Reliability diagrams comparing GRED with EDL for different KL regularization values}
\label{fig:calibration_analysis}
\end{figure}
}

\subsection{Extension to Blind Face Restoration}
\de{Codeformer models \cite{zhou2022towards} have been developed that introduce VQ-GAN-based networks with a transformer architecture leading to state-of-the-art blind face restoration performance. However, the blind face restoration problem is ill-posed by
nature, and many of the restored faces are unlikely to be faithful to the true face
images. Moreover, the blind face restoration problem, by design, introduces uncertainty in the downstream restoration task. In this section, we extend the codeformer model using the ideas from our generalized regularized evidential deep learning model to develop evidential codeformers with fine-grained uncertainty-quantification capabilities. We use the fine-grained uncertainty information to improve the codebook lookup of the codeformer, which leads to significant improvement on the blind face restoration benchmark, demonstrating the potential of the uncertainty-aware model. }

\de{
The encoder-decoder based codeformer models \cite{zhou2022towards} are trained in 3 stages: 1) Stage I with high-quality images to learn the codebook and train the decoder parameters, 2) Stage II with low-quality and high-quality image pairs to train the transformer classifier along with the encoder structure, and 3) an optional Stage III training of controllable feature transformation module to find good balance between the quality and fidelity of face restoration. We modify the transformer introduced in Stage II to an evidential transformer structure by replacing the softmax layer in the transformer with \cyan{the evidential activation functions}. We train the evidential transformer on the FFHQ dataset based on Type II likelihood-based evidential loss in Eq.~\ref{eqn:evLogloss} with incorrect evidence regularization strength of $\lambda_1 = 1.0$. The evidential transformer outputs the $K$-dimensional evidence vector to identify $D$-dimensional code item from the $K \times D$ shaped codebook for each of the $M$ positions of the decoder input. For each position of the decoder input, based on the $K$-dimensional evidence vector $\mathbf{e} = (e_1, e_2, ..., e_{K})^\top$, the vacuity $\nu$, and $K$-dimensional belief vector $\mathbf{b} = (b_1, b_2, ..., b_{K})^\top$ can be computed. In all codeformer-based experiments, $K = 1024, D = 256,$ and the decoder input is a $16\times16\times256$ shaped tensor with $M = 16 \times 16 = 256$. Additional model details are presented in the Appendix \ref{sec:facerestorationdetailsAppendix}. The belief vector can be used to measure the model's confusion (via dissonance \cite{pandey2022evidential}), and belief for each class can be used to make the code item prediction (the class prediction being the class for which the model outputs maximum belief). The vacuity represents the model's lack of confidence in the prediction and can be used to identify the model's confident predictions.}

\de{If the evidential transformer outputs a highly confident prediction (indicated by low vacuity), the model is expected to be accurate, and such predictions can be trusted. In contrast, if the evidential classifier is not confident in the codebook prediction (\eg due to the low quality of the input or insufficient knowledge of the model, then the code item selected at the decoder input is expected to be incorrect. In this case, instead of relying on the transformer's top predicted code item, we could consider the evidential model's beliefs to obtain more accurate code for the decoder. Based on this insight, we introduce a novel uncertainty-guided Top$-t$ belief-based codebook selection scheme for inference. For the decoder input positions that the evidential transformer is confident (indicated by a low vacuity $\nu \leq \nu_{\text{thr}})$, we trust the evidential model and select the code item from the codebook for which the model outputs the maximum belief. In contrast, for the decoder input positions that the evidential model is not confident (indicated by a high vacuity $\nu > \nu_{\text{thr}})$, we consider the top $t$ code items of the codebook for which the model's belief is the largest. We then consider a belief-weighted combination of the predicted codes to obtain the final code item for the decoder input.}

\de{Mathematically, given the codebook $\mathcal{C} = \{\mathbf{c}_1, \mathbf{c}_2, ... ,  \mathbf{c}_K \}$, the decoder input $\mathbf{d}_{m}$ for each position in the $M-$dimensional decoder input is obtained as:}
\begin{align} \label{eqn:topkcodeformerIdea}
    \mathbf{d}_{m} = \begin{cases}
        &\mathbf{c}_i^{\text{max}}  \quad \quad\quad
        \quad\quad\quad \text{if} \quad \nu \leq \nu_{\text{thr}} \\
        &\sum_{j = 1}^t \frac{{b_j^{\text{max}} \mathbf{c}_j^{}}^{\text{max}}}{\sum_{l = 1}^t {b_l^{\text{max}}}} \quad \quad \text{Otherwise}
    \end{cases}
\end{align}
\de{
where $\nu$ represents the vacuity predicted at $m^{\text{th}}$ decoder input position, $\mathbf{c}_i^{\text{max}}$ represents the $i^{\text{th}}$ code item in the codebook $\mathcal{C}$ such that the evidential model's belief is maximum for class $i$, $b_{1}^{\text{max}},..., b_{t}^{\text{max}}$ represent the $t$ greatest belief values among the $K$ beliefs, and $\mathbf{c}_1^{\text{max}},...\mathbf{c}_t^{\text{max}}$ are the corresponding code items of codebook with the $t$ greatest belief values. When $t =1$ or the vacuity threshold $v_{\text{thr}} = 1$, the above inference scheme simplifies to the standard evidential model. When $t = K$, the model considers all the codebook items and weights them by the predicted belief to obtain the decoder input. With $t>1$, and reasonable vacuity threshold values, the inference scheme considers multiple code items for blind face restoration and uses its belief to weight the code items. 
}

\begin{table}[h!]
\centering
\begin{tabular}{lccc}
\toprule
\textbf{Evidential Model}        & \textbf{PSNR}{$_\uparrow$} & \textbf{MSE}{$_\downarrow$} & \textbf{L1 Loss}{$_\downarrow$}\\
\midrule
CodeFormer \cite{zhou2022towards}& 21.90 &  446.82 & 13.72 \\
\midrule
Evid. CodeFormer-\texttt{ReLU} & 6.62           & 15436.89          &   102.38 \\
Evid. CodeFormer-\texttt{SELU} & 21.31          &514.82 & 14.97   \\
\rowcolor{black!10} \textbf{GRED}-\texttt{SELU}  & 21.84          &  451.93      &  13.69 \\
Evid. CodeFormer-\texttt{Softplus}     &21.17        & 528.47 & 15.41  \\
\rowcolor{black!10} \textbf{GRED}-\texttt{Softplus}&21.81  &454.03      & 13.86  \\
Evid. CodeFormer-Exp          &  21.46          & 491.01       & 14.63\\
\rowcolor{black!10} \textbf{GRED}-Exp     & 21.79          & 456.25 & 13.76\\
\midrule
\rowcolor{black!20} \textbf{GRED}-Exp$ ({t = 5}) $& $\bf{22.27}$ & $\bf{409.64}$  &$\bf{12.93}$ \\  
\rowcolor{black!20} \textbf{GRED}-Exp$ ({t = 10})$& $\bf{22.33}$ & $\bf{403.69}$ &$\bf{12.86}$ \\ 
\bottomrule
\end{tabular}
\caption{CelebA Blind Face Restoration Results}
    \label{tab:CodeFormerexperiments}
\end{table}

\de{
We now carry out blind face restoration experiments using the CelebA dataset and present the overall results in Table \ref{tab:CodeFormerexperiments}, where we consider $t$ values of  $5$ and $10$. We consider a set of metrics, including Peak Signal to Noise Ratio (PSNR), 
Mean Squared Error (MSE), and the $L_1$ loss between the generated image and the ground truth image for evaluation. \cyan{We observe that the ReLU based evidential model fails to achieve reasonable performance due to its sub-optimal learning (as indicated by low PSNR, and high MSE and L1 loss values in Table \ref{tab:CodeFormerexperiments}). The evidential codeformer models with proposed correct evidence regularization (\ie the GRED variants for SELU-based , softplus-based, and exponential-based evidential models) consistently improve compared to the corresponding evidential codeformer models as the GRED regularization enables the evidential models to learn from all the training samples}. \cyan{Moreover, using the novel uncertainty-guided Top-$k$ strategy leads to significant improvements in terms of PSNR (with a boost of around 0.43 db), $L_1$ loss (decrease of around 0.42 units), and  MSE (decrease by around 43 units)}. We carry out additional ablations to show the superiority of using belief weighting compared to uniform weighting, the impact of the $t$ value, and the vacuity threshold in the Appendix. 
}

\subsection{Extension to Few-Shot Classification}

Few-shot learning operates in a regime of extremely limited supervision, making uncertainty awareness particularly important for trustworthiness and robustness \cite{vinyals2016matching,finn2017model}. Prior works have explored evidential methods for uncertainty-aware few-shot learning \cite{pandey2022evidential,pandey2022multidimensional}, but they suffer from the same zero-evidence learning limitations identified in Section~III. We improve upon these approaches by incorporating our proposed correct-evidence regularization into a modern few-shot framework. We build an evidential Visual Prompt Tuning (VPT) model by modifying the transformer classifier in VPT \cite{jia2022visual} to output evidential activations instead of softmax probabilities. The model is trained using the full evidential objective in Eq.~\ref{eqn:proposedEvidentialModelOverallLoss} with incorrect evidence regularization strength $\lambda_1 = 1.0$, and we evaluate both with and without the proposed correct-evidence regularization across multiple activation functions. Additional implementation details, along with results for other $\lambda_1$ values, are provided in Appendix~\ref{sec:FewShotLearningSettingDetails}.

Table~\ref{tab:evidentialMetalearningOverall} summarizes performance on the challenging 100-way 1-shot and 100-way 5-shot CIFAR-100 benchmarks. Standard evidential models perform poorly in these extreme low-data settings. In contrast, the proposed GRED variants consistently outperform their non-regularized counterparts across all activation functions, demonstrating improved generalization from limited supervision. Importantly, this improvement comes with \emph{no additional computational overhead} and naturally yields high-quality uncertainty estimates.

The benefit of uncertainty is further illustrated in the accuracy–vacuity curves in Fig.~\ref{fig:acc-vac-fewshot}. As the vacuity threshold increases, we retain only the model’s most confident predictions, and accuracy rises sharply. For example, at a vacuity threshold of $0.6$, the accuracy in the 100-way 1-shot setting improves from roughly $50\%$ to above $90\%$. This highlights the practical advantage of evidential uncertainty for filtering reliable predictions in few-shot scenarios.

\begin{table}[h!]
\centering
\begin{tabular}{lcc}
\toprule
\textbf{Evidential Model}        & \textbf{100-Way 1-Shot} & \textbf{100-way 5-Shot} \\
\midrule
\texttt{ReLU}         & $1.00 _{\pm 0.00} $           & $1.00 _{\pm 0.00} $              \\
\texttt{SELU}         & $36.81 _{\pm 5.69} $          & $49.39 _{\pm 3.51 }$          \\
\rowcolor{black!10} \textbf{GRED}-\texttt{SELU}    & $45.88 _{\pm 1.31} $          &  $76.09 _{\pm 0.78} $          \\

\texttt{Softplus}     &$38.85 _{\pm 3.15} $          & $47.87 _{\pm 4.78} $           \\
\rowcolor{black!10} \textbf{GRED}-\texttt{Softplus}& $\bf46.63 _{\pm 1.59} $  & $74.08 _{\pm 1.61} $         \\

Exp          &  $37.49 _{\pm 3.18} $           & $49.05 _{\pm 5.51} $         \\
\rowcolor{black!10} \textbf{GRED}-Exp     & $\bf46.54_{\pm 1.45} $          & \textbf{$\bf77.00_{\pm 1.01 }$}  \\
\bottomrule
\end{tabular}
\caption{Few-Shot CIFAR-100 Classification Results} 
\label{tab:evidentialMetalearningOverall}
\end{table}

\begin{figure}[ht!] 
    \centering
    \begin{subfigure}[b]{0.46\linewidth}
      \includegraphics[width=\linewidth]{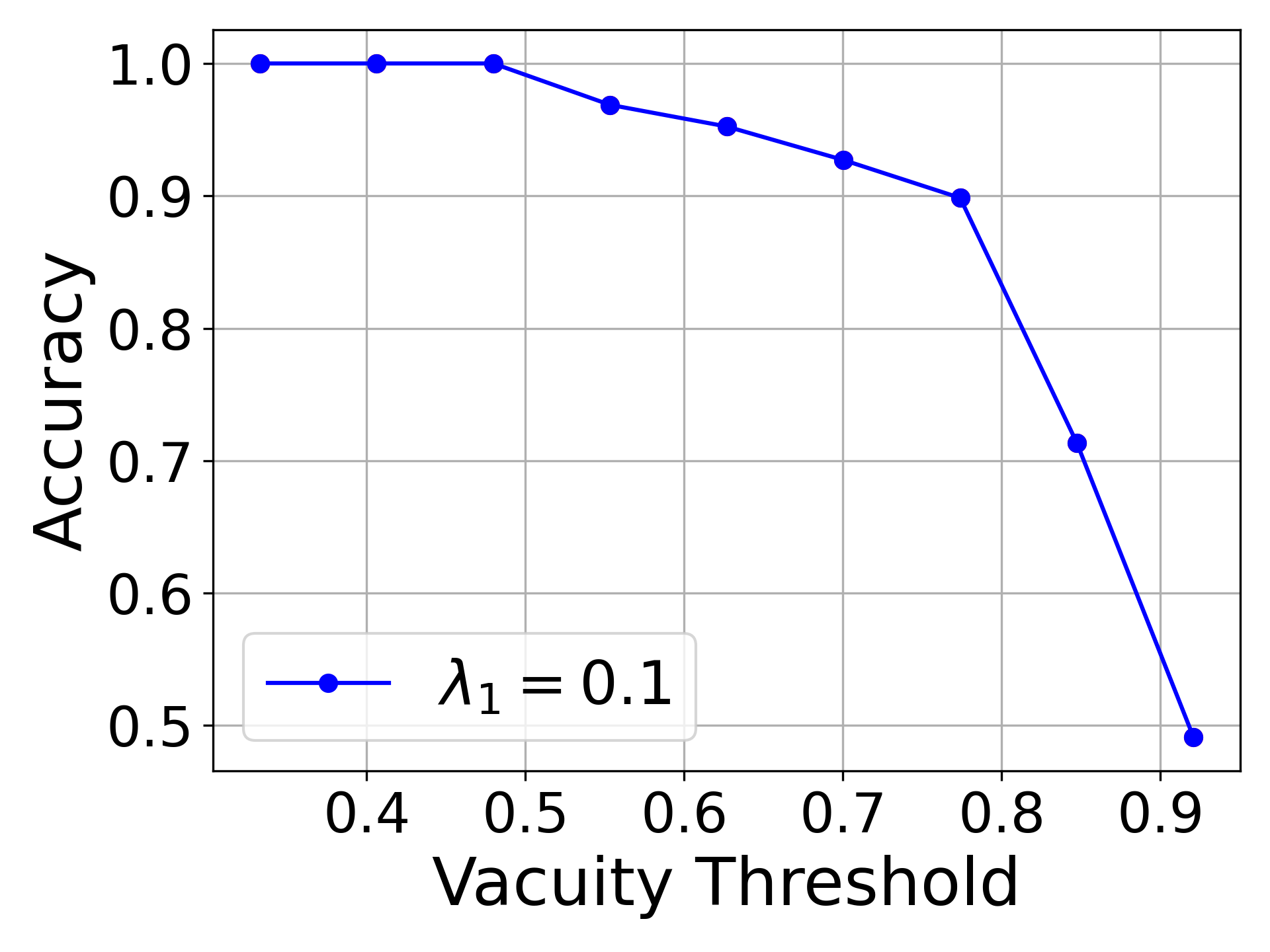}
      \caption{$100$-way $1$-shot trend}
    \end{subfigure}
    \begin{subfigure}[b]{0.46\linewidth}
      \includegraphics[width=\linewidth]{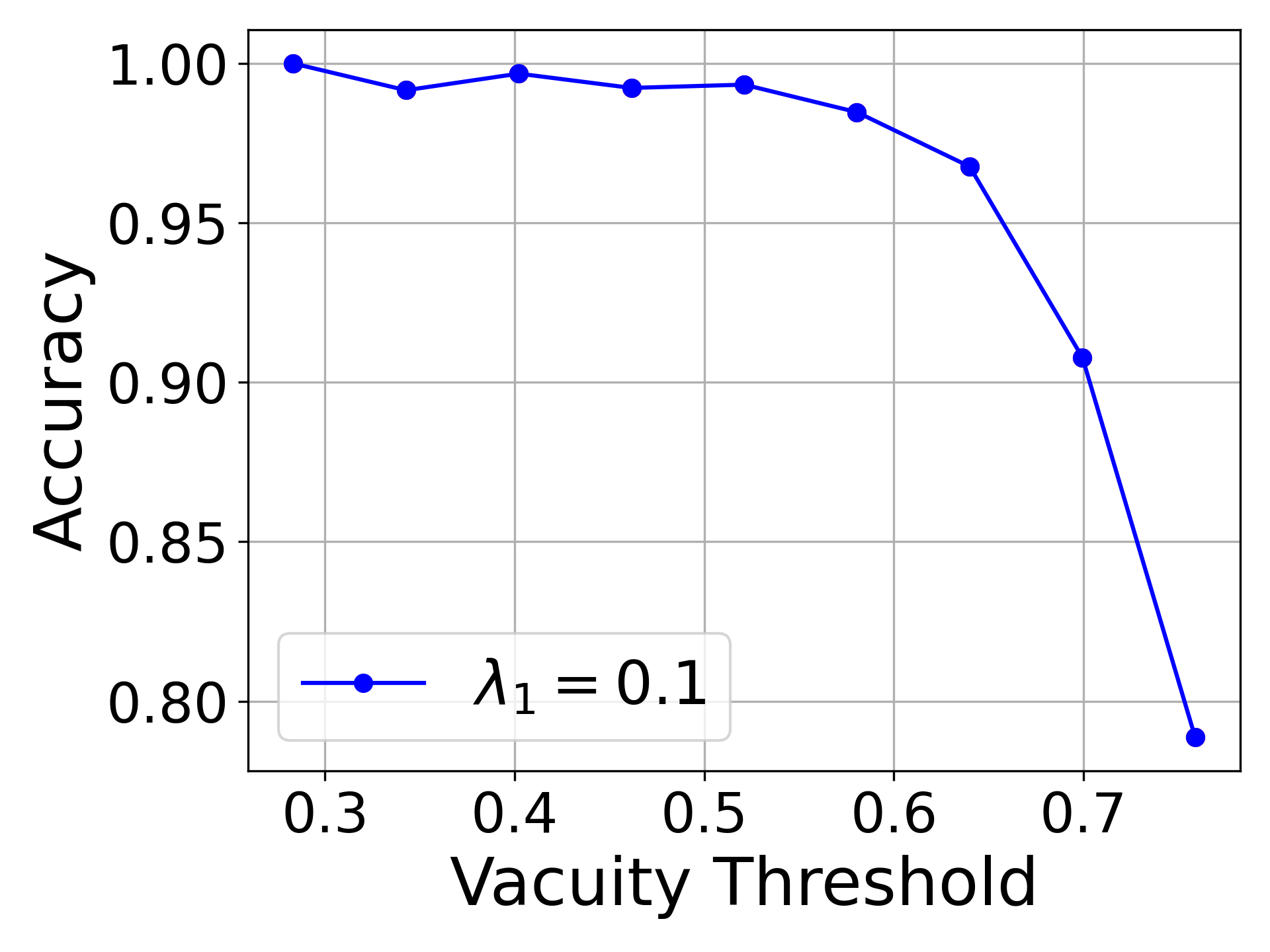}
    \caption{$100$-way $5$-shot trend}
    \end{subfigure}
\caption{Few-Shot CIFAR-100 Accuracy-Vacuity curves}
\label{fig:acc-vac-fewshot}
\vspace{-5mm}
\end{figure} 
\bl{\section{Out-of-Distribution Detection}}
\label{subsec:ood_detection}

\bl{We evaluate the ability of evidential models to assign high epistemic uncertainty to out-of-distribution (OOD) inputs. Following standard practice, CIFAR-100 test samples serve as in-distribution (ID) and SVHN as OOD. We fine-tune the few-shot models on 1-shot and 5-shot CIFAR-100 with three KL regularization strengths (0.0, 1.0, 100.0), using the uncertainty score $1 - \max p(y)$ for ID–OOD discrimination. AUROC is used as the evaluation metric.}

\bl{Figure~\ref{fig:ood_auroc} shows AUROC results across all settings. GRED consistently improves OOD separability, with the largest gains under moderate KL regularization. In the 5-shot scenario with KL$=1.0$, GRED boosts AUROC from $0.633$ to $0.882$, demonstrating substantially improved epistemic uncertainty modeling. Large KL values (e.g., 100.0) yield diminishing returns, but GRED remains superior to the baseline. Overall, correct evidence regularization is crucial for robust OOD behavior in few-shot regimes.}

\bl{
\begin{figure}[ht!]
\centering
\begin{subfigure}[b]{0.46\linewidth}
  \includegraphics[width=\linewidth]{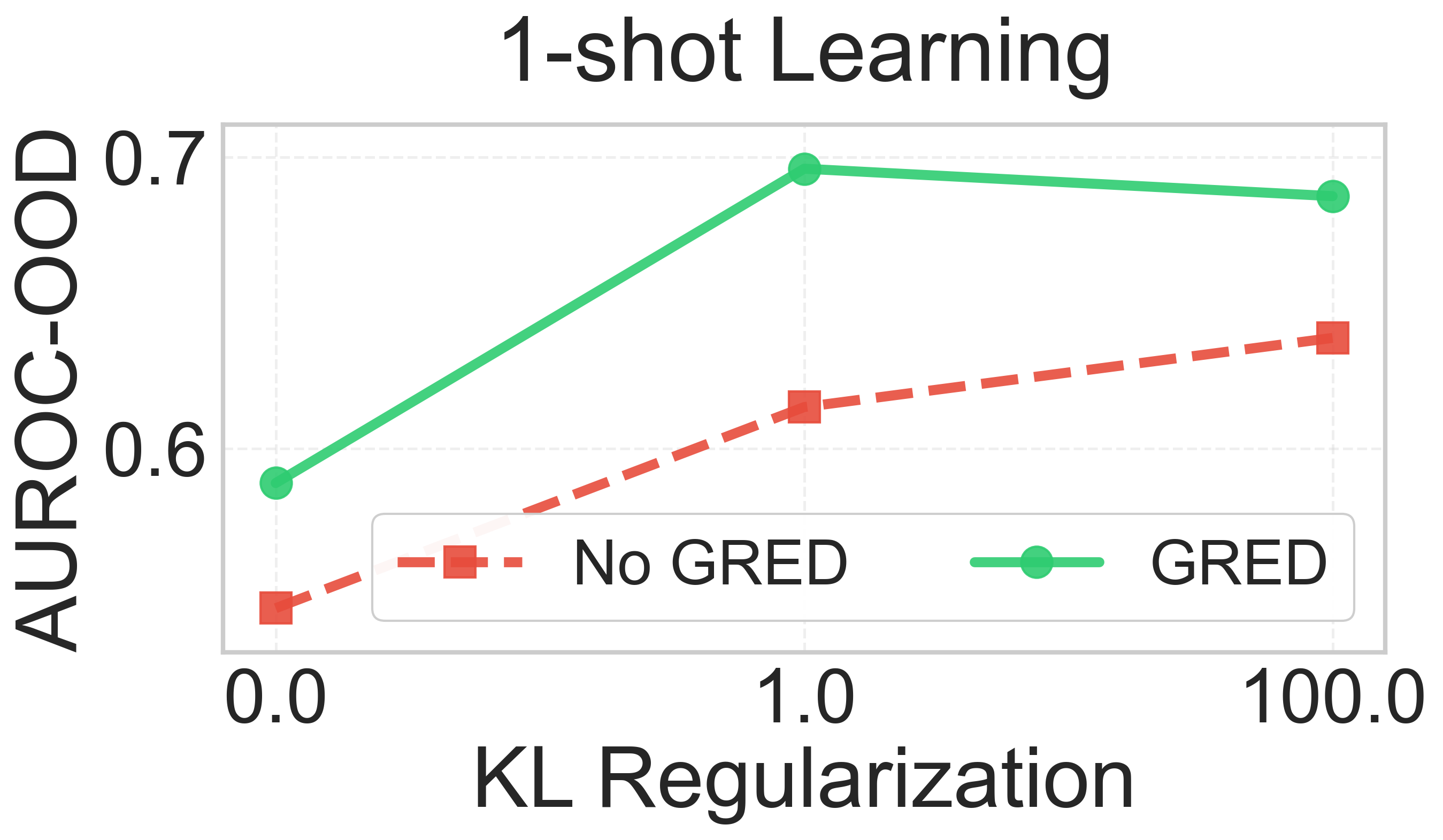}
  \caption{1-shot learning}
\end{subfigure}
\begin{subfigure}[b]{0.46\linewidth}
  \includegraphics[width=\linewidth]{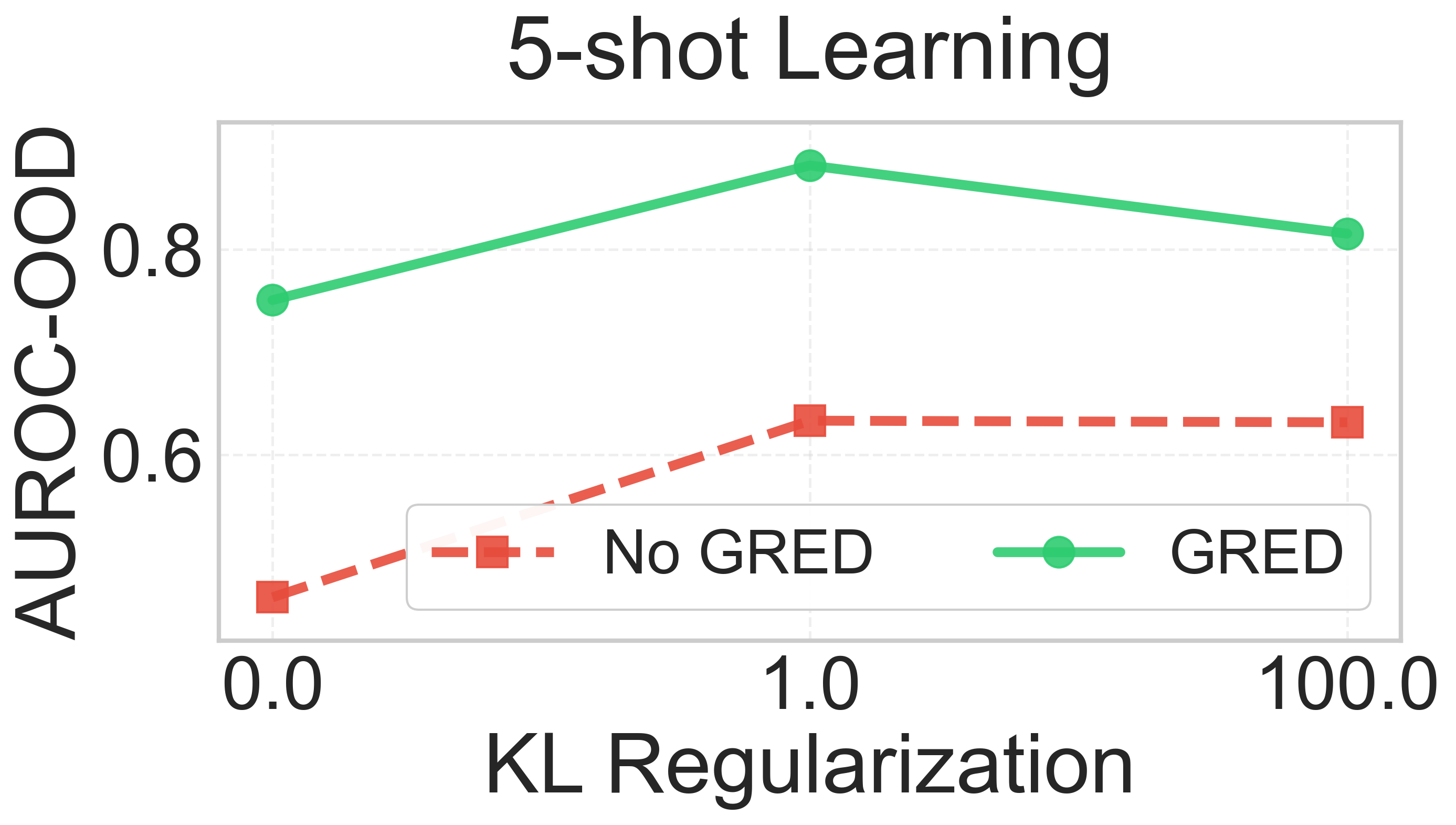}
  \caption{5-shot learning}
\end{subfigure}
\caption{AUROC-OOD detection trends. GRED consistently improves OOD separability across KL strengths.}
\label{fig:ood_auroc}
\end{figure}
}

\nocite{knopp1996weierstrass} \nocite{kingma2014adam}

\section{Conclusion}

\bl{In this work, we presented a theoretical analysis revealing a fundamental limitation of evidential deep learning models: their gradients vanish in the zero-evidence region, preventing the model from learning from precisely the samples where supervision is most needed. We showed that exponential activations provide stronger learning signals in this regime and introduced a correct-evidence regularization term that restores meaningful gradients for low- and zero-evidence samples. This yields GRED, a generalized regularized evidential model that learns from all training examples. Extensive experiments across classification, few-shot learning, adversarial evaluation, OOD detection, and blind face restoration demonstrate consistent gains in generalization and uncertainty reliability. GRED mitigates activation-induced learning freeze and advances the development of trustworthy, uncertainty-aware neural networks.}

\bibliography{ref}
\bibliographystyle{ieeetr}
\begin{IEEEbiography}
[{\includegraphics[width=1in,height=1.25in, clip,keepaspectratio]{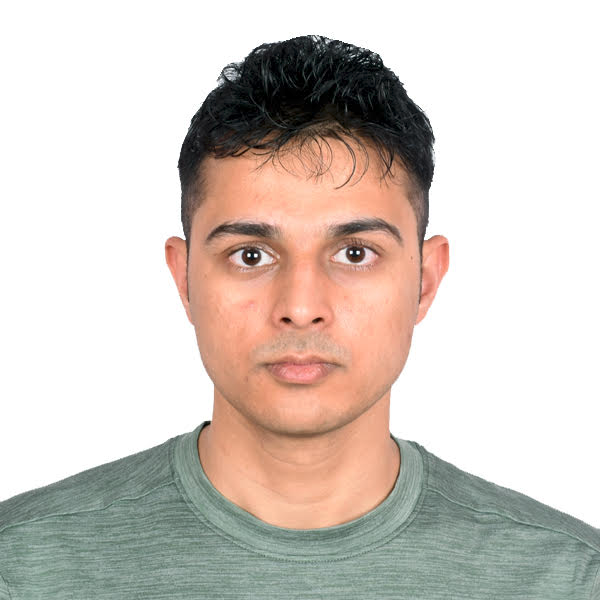}}]{Deep Shankar Pandey} received his Bachelor's degree in Electronics and Communication Engineering from Institute of Engineering, Pulchowk Campus, Tribhuvan University, Nepal in 2017. He completed his PhD degree in Computing and Information Sciences at Rochester Institute of Technology with  Dr. Qi Yu as his PhD advisor in 2025 where he worked on Uncertainty Aware Meta Learning for Learning from Limited Data. He is currently working as Applied Scientist at Amazon, and this work was done prior to joining Amazon.  He has authored several publications in top-tier venues, including NeurIPS, CVPR, ICML, and AAAI. His research focuses on developing trustworthy uncertainty-aware machine learning models with an emphasis on real-world applications.
\end{IEEEbiography}

\begin{IEEEbiography}
[{\includegraphics[width=1in,height=1.25in, clip,keepaspectratio]{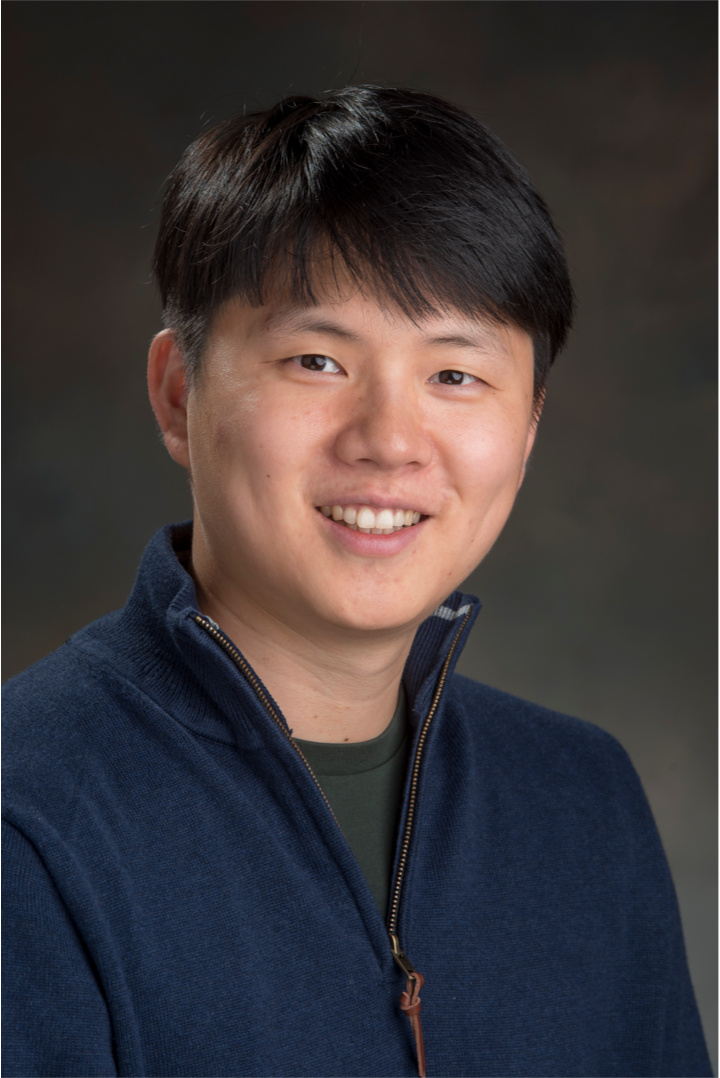}}]{Hyomin Choi} received the Ph.D. degree in engineering science from Simon Fraser University, Burnaby, BC, Canada, in 2022. He is a senior staff engineer at the AI Lab, InterDigital, in Los Altos, CA, USA. He was a research engineer at the System IC Research Center, LG Electronics, Seoul, Korea between 2012 and 2016. He received the 2017 Vanier Canada Graduate Scholarship, the 2023 Governor General's Gold Medal from Simon Fraser University, and the 2023 IEEE Transactions on Circuits and Systems for Video Technology Best Paper Award. He is currently a Member of the IEEE-CAS Multimedia Systems and Applications Technical Committee. His research interests encompass end-to-end learning-based image/video coding, video coding for machines, and machine learning with applications in multimedia processing.
\end{IEEEbiography}

\begin{IEEEbiography}
[{\includegraphics[width=1in,height=1.25in, clip,keepaspectratio]{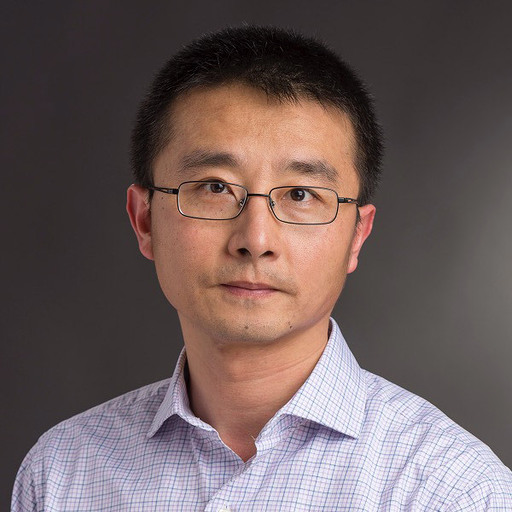}}]{Qi Yu} is a Professor in the School of Information (iSchool) at Rochester Institute of Technology (RIT). He earned his PhD in Computer Science from Virginia Polytechnic Institute and State University (Virginia Tech). Dr. Yu’s primary research interests are in artificial intelligence (AI) and machine learning (ML). He has authored over 150 publications, with many appearing in top-tier venues, including NeurIPS, ICML, ICLR, AAAI, IJCAI, AISTATS, CVPR, ICCV, and ECCV. Dr. Yu actively contributes to the research community as an area chair or senior program committee member for major conferences in AI, ML, and computer vision. Additionally, he serves as an Associate Editor for the IEEE Transactions on Services Computing and the IEEE Transactions on Cognitive and Developmental Systems.
\end{IEEEbiography}

\end{document}